\theoremstyle{plain}
\newtheorem{theorem}{Theorem}[section]
\newtheorem{proposition}[theorem]{Proposition}
\newtheorem{lemma}[theorem]{Lemma}
\theoremstyle{definition}
\theoremstyle{remark}
\newcommand\tightpara{\@startsection{paragraph}{4}{\z@}{0.5ex plus 0ex minus 0ex}{-1em}{\normalsize\bf}}
\newcommand{\mypara}{\paragraph}
\title{Inconsistency, Instability, and Generalization Gap of Deep Neural Network Training }
\author{%
   Rie Johnson \\ RJ Research Consulting \\ New York, USA \\  \texttt{riejohnson@gmail.com} \\
   \And
   Tong Zhang\thanks{
     This work was done when the second author was jointly with Google Research.
   } 
   \\ HKUST \\ Hong Kong \\ \texttt{tozhang@tongzhang-ml.org} \\
}
\begin{document}
\maketitle

\newcommand{\cmt}[1]{}

\newcommand{\ggap}{generalization gap}
\newcommand{\Ggap}{Generalization gap}
\newcommand{\divSame}{inconsistency}  %
\newcommand{\divDiff}{instability}  %
\newcommand{\divWhat}{model outputs}
\newcommand{\divSameLong}{\divSame\ of \divWhat} 
\newcommand{\divDiffLong}{\divDiff\ of \divWhat}
\newcommand{\simSame}{consistency} %
\newcommand{\simDiff}{stability} %
\newcommand{\simSameLong}{\simSame\ of \divWhat} %
\newcommand{\simDiffLong}{stability of predictions} %
\newcommand{\DivSame}{Inconsistency}  %
\newcommand{\DivDiff}{Instability}  %
\newcommand{\SimSame}{Consistency} %
\newcommand{\SimDiff}{Stability} %
\newcommand{\DivSameLong}{\DivSame\ of \divWhat} 
\newcommand{\DivDiffLong}{\DivDiff\ of \divWhat}
\newcommand{\mutuWhat}{model parameter distributions}
\newcommand{\mutu}{instability of \mutuWhat} 
\newcommand{\Mutu}{Instability of \mutuWhat} 
\newcommand{\mutuLong}{information-theoretic instability of \mutuWhat} 
\newcommand{\MutuLong}{information-theoretic instability of \mutuWhat} 

\newcommand{\lnNone}{`Standard'}
\newcommand{\lnFlat}{`Flat.'} 
\newcommand{\lnSimSame}{`Consist.'} 
\newcommand{\lnBoth}{`Consist+Flat'}

\newcommand{\tloss}{training loss}
\newcommand{\terr}{training error}
\newcommand{\sharpness}{sharpness}
\newcommand{\flatness}{flatness}
\newcommand{\sharpLong}{sharpness of \tloss\ landscape} 
\newcommand{\flatnessLong}{flatness of \tloss\ landscape} 
\newcommand{\SharpLong}{Sharpness of \tloss\ landscape} 
\newcommand{\FlatnessLong}{Flatness of \tloss\ landscape} 
\newcommand{\Sharpness}{Sharpness}

\newcommand{\markNone}{$+$}
\newcommand{\markSharp}{$\times$}
\newcommand{\markDivSame}{$\triangle$}
\newcommand{\markBoth}{$\circle$}
\newcommand{\lNone}{standard} 
\newcommand{\lFlat}{Flat.} 
\newcommand{\lSimSame}{Consist.} 
\newcommand{\sharpOne}{1-sharpness}
\newcommand{\sharpTwo}{Hessian}

\newcommand{\Exp}{\mathbb{E}}
\newcommand{\mutinfo}{I}
\newcommand{\Ind}{{\mathbb I}}
\newcommand{\prm}{\theta}
\newcommand{\proc}{P}
\newcommand{\prob}{f}
\newcommand{\loss}{\phi}
\newcommand{\tv}{TV}
\newcommand{\Ct}[1]{\multicolumn{1}{|c|}{#1}}
\newcommand{\Mc}[2]{\multicolumn{#1}{|c|}{#2}}
\newcommand{\Mr}[2]{\multirow{#1}{*}{#2}}
\newcommand{\sd}[1]{$_{\pm#1}$}

\newcommand{\rvPrm}{\varTheta}  %
\newcommand{\rvPrmP}{\rvPrm_{\proc}}
\newcommand{\rvX}{X}
\newcommand{\rvY}{Y}
\newcommand{\rvZ}{Z}
\newcommand{\rvZn}{Z_n}
\newcommand{\rvXi}{X_i}
\newcommand{\rvYi}{Y_i}

\newcommand{\makedi}[1]{{\rm \bf #1}}
\newcommand{\diX}{\makedi{X}}  %
\newcommand{\diZ}{\makedi{Z}}  %
\newcommand{\diZn}{\makedi{Z^{\otimes n}}} %
\newcommand{\diPrm}{\makedi{\Theta}} 
\newcommand{\diPrmPZn}{\diPrm_{\proc|\rvZn}} 
\newcommand{\diPrmPZnp}{\diPrm_{\proc|\rvZn'}} 
\newcommand{\KL}{{\rm KL}}
\newcommand{\makeq}[1]{{\mathcal #1}}
\newcommand{\klSameP}{\makeq{C}_{\proc}}  %
\newcommand{\klDiffP}{\makeq{S}_{\proc}}  %
\newcommand{\klSumP}{\makeq{D}_{\proc}}
\newcommand{\mutuP}{\makeq{I}_{\proc}}
\newcommand{\pbarPZn}{\bar{\prob}_{\proc|\rvZn}}
\newcommand{\pbarPZnp}{\bar{\prob}_{\proc|\rvZn'}}
\newcommand{\ggapP}{g_{\proc}} 
\newcommand{\decision}{c} 

\newcommand{\figInPara}[1]{(#1)}

\begin{abstract}
As deep neural networks are highly expressive, it is important to find solutions with small {\em generalization gap} (the difference between the performance on the training data and unseen data).  Focusing on the stochastic nature of training, we first present a theoretical analysis in which the bound of generalization gap depends on what we call {\em inconsistency} and {\em instability} of model outputs, which can be estimated on unlabeled data.  Our empirical study based on this analysis shows that instability and inconsistency are strongly predictive of generalization gap in various settings.  
In particular, our finding indicates that inconsistency is a more reliable indicator of generalization gap than the {\em sharpness} of the loss landscape. %
Furthermore, we show that algorithmic reduction of inconsistency leads to superior performance.  The results also provide a theoretical basis for existing methods such as co-distillation and ensemble. 
\end{abstract}

\section{Introduction}

As deep neural networks are highly expressive, 
the {\em \ggap} (the difference between the performance on the training data and unseen data) 
can be a serious issue.  
There has been intensive effort to improve generalization, 
concerning, for example, 
network architectures \cite{stodepth16,Shake17}, 
the training objective \cite{sam21}, 
strong data augmentation and mixing \cite{randaug20,mixup18,cutmix19}. 
In particular, there has been extensive effort to understand the connection between 
generalization and the {\em sharpness} of the loss landscape surrounding the model 
\cite{HS97,largeBatch17,AvgWeights18,Jiang+etal20,sam21}. %
\cite{Jiang+etal20} conducted large-scale experiments with a number of metrics 
and found that the sharpness-based metrics predicted the generalization gap best.  
\cite{sam21} has shown that the sharpness 
measured by the maximum loss difference around the model ({\em $m$-sharpness})
correlates well to the \ggap, 
which justifies their proposed method {\em sharpness-aware minimization (SAM)}, 
designed to reduce the $m$-sharpness.  

This paper studies \ggap\ from a different perspective.  
Noting that the standard procedure for neural network training is {\em stochastic}
so that a different instance leads to a different model, 
we first present a theoretical analysis in which the bound of \ggap\ depends on 
{\em \divSame} and {\em \divDiffLong}, 
conceptually described as follows.  
Let $\proc$ be a stochastic training procedure, e.g., 
minimization of the cross-entropy loss by 
stochastic gradient descent (SGD) starting from random initialization with a certain combination of hyperparameters. 
It can be regarded as a random function that maps a set of labeled training data as its input to 
model parameter as its output. 
We say procedure $\proc$ has high {\em \divSame} of model outputs if the predictions made by the models trained with 
$\proc$ using the {\em same training data} are very different from one another in expectation 
over the underlying (but unknown) data distribution.  
We also say $\proc$ has high {\em \divDiff} of model outputs if {\em different} sampling of {\em training data} changes 
the expected predictions a lot over the underlying data distribution.  
Although both quantities are discrepancies of model outputs, the sources of the discrepancies are different. 
The term {\em stability} comes from the related concept in the literature
 \cite{stab2002,stab2010}.  
\DivSame\ is related to the {\em disagreement} metric studied in \cite{NB20,disag22,KG22}, %
but there are crucial differences between them, as shown later.  
Both \divSame\ and \divDiffLong\ can be estimated on {\em unlabeled data}.  

Our high-level goal is to find the essential property of models that generalize well; 
such an insight would be useful for improving training algorithms. 
With this goal, 
we empirically studied the connection of \divSame\ and \divDiff\ with the \ggap.  
As 
our bound also depends on a property of model parameter distributions 
(for which we have theoretical insight but which we cannot 
easily estimate), we first 
experimented to find the condition under which \divSame\ and \divDiffLong\ are 
highly correlated to \ggap.  The found condition 
-- low randomness in the final state --
is consistent with the theory, and 
it covers practically useful settings.  
We also found that 
when this condition is met, \divSame\ alone is almost as predictive as \divSame\ and \divDiff\ 
combined.  
This is a practical advantage since estimation of \divDiff\ requires multiple training sets 
and estimation of \divSame\ does not.  
We thus focused on \divSame, which enabled the use of full-size training data, 
and studied \divSame\ in comparison with \sharpness\ in the context of algorithmic reduction 
of these two quantities. 
We observed that while both \sharpness\ reduction and \divSame\ reduction lead to better 
generalization, there are a number of cases where \divSame\ and \ggap\
are reduced and yet \sharpness\ remains relatively high.
We view it as an indication that \divSame\ has a stronger 
(and perhaps more essential) connection with \ggap\ than \sharpness.

Our contributions are as follows. 
\begin{itemize}
\item We develop a theory that relates generalization gap to instability and inconsistency of model outputs, 
which can be estimated on unlabeled data. 
\item 
Empirically, we show that instability and inconsistency are strongly predictive of generalization gap in various settings.
\item 
We show that algorithmic encouragement of consistency reduces inconsistency and improves performance, which can lead to 
further improvement of the state of the art performance. 
\item 
Our results provide a theoretical basis for existing methods such as co-distillation and 
ensemble. 
\end{itemize}

\section{Theory}
\label{sec:theory}

The theorem below quantifies \ggap\ by three quantities: 
\divSameLong, \divDiffLong, and 
information-theoretic \mutu.  

\mypara{Notation} 
Let $\prob(\prm,x)$ be the output of model $\prm$ on data point $x$ 
in the form of probability estimates (e.g., obtained by applying softmax). 
We use the upright bold font for probability distributions. 
Let $\rvZ=(\rvX,\rvY)$ be a random variable representing a labeled data point
(data point $\rvX$ and label $\rvY$) 
with a given unknown distribution $\diZ$.  
Let $\rvZn =\{(\rvXi,\rvYi): i =1,\ldots,n\}$ be a random variable representing iid training data of size $n$ 
drawn from $\diZ$. 
Let $\diPrmPZn$ be the distribution of model parameters 
resulting from applying a (typically stochastic) training procedure $\proc$ to training set $\rvZn$. 

\mypara{\DivSame, \divDiff, and information-theoretic instability} 
\DivSameLong\ $\klSameP$ 
(`$\makeq{C}$' for consistency)
of training procedure $\proc$ 
represents 
{\em the discrepancy of outputs among the models trained on the same training data}: 
\begin{align}
 \klSameP &= \Exp_{\rvZn} \Exp_{\rvPrm,\rvPrm' \sim \diPrmPZn} \Exp_{\rvX} \KL(\prob(\rvPrm,\rvX)||\prob(\rvPrm',\rvX))
  \tag{\DivSameLong}
\end{align}
The source of \divSame\ could be the random initialization, sampling of mini-batches, 
randomized data augmentation, and so forth. 

To define \divDiffLong, let $\pbarPZn(x)$ be the expected outputs (for data point $x$) 
of the models trained by procedure $\proc$ on training data $\rvZn$: 
$
  \pbarPZn(x) = \Exp_{\rvPrm\sim\diPrmPZn} \prob(\rvPrm,x) 
$.  
\DivDiffLong\ $\klDiffP$ 
(`$\makeq{S}$' for stability)
of procedure $\proc$ represents 
{\em how much the expected prediction $\pbarPZn(x)$ would change with change of training data}: 
\begin{align}
  \klDiffP &= \Exp_{\rvZn,\rvZn'} \Exp_{\rvX} \KL(\pbarPZn(\rvX) || \pbarPZnp(\rvX))
  \tag{\DivDiffLong}
\end{align}  

Finally, the \mutu\ $\mutuP$ is the mutual information between 
the random variable $\rvPrmP$, which represents the model parameters produced by procedure $\proc$, 
and the random variable $\rvZn$, which represents training data.  
$\mutuP$ quantifies the dependency of the model parameters on the training data, which is also called 
{\em information theoretic (in)stability} in \cite{xu2017information}:  
\begin{align}
  \mutuP &= \mutinfo(\rvPrmP;\rvZn) = \Exp_{\rvZn} \KL(\diPrmPZn || \Exp_{\rvZn'} \diPrmPZnp) 
  \tag{\Mutu}
\end{align}
The second equality follows from the well-known relation of the mutual information 
to the KL divergence.  %
The rightmost expression %
might be more  
intuitive, which essentially represents 
{\em how much the model parameter distribution would change with change of training data}. 

Note that \divSame\ and \divDiffLong\ can be estimated on {\em unlabeled data}.  
$\mutuP$ cannot be easily estimated as it involves distributions over 
the entire model parameter space; however, we have theoretical insight from 
its definition.  
(More precisely, it is possible to estimate $\mutuP$ by sampling in theory, 
but practically, it is not quite possible to make a reasonably good estimate of $\mutuP$ 
for deep neural networks with a reasonably small amount of computation.)

\newcommand{\testLoss}{\Phi_{\diZ}}
\newcommand{\trnLoss}{\Phi}
\newcommand{\Tv}{d_{\rm TV}}
\begin{theorem}
  Using the definitions above, 
  we consider a Lipschitz loss function $\loss(\prob,y) \in [0,1]$ that satisfies
  $
    |\loss(\prob,y)- \loss(\prob',y)| \leq \frac{\gamma}{2} \| \prob - \prob' \|_1,
  $
where 
$\prob$ and $\prob'$ are probability estimates and 
$\gamma/2>0$  is the Lipschitz constant.  
Let $\psi(\lambda)=\frac{e^\lambda-\lambda-1}{\lambda^2}$, which is an increasing function.  
Let $\klSumP = \klSameP + \klDiffP$.  
Let $\testLoss(\prm)$ be test loss: 
  $\testLoss(\prm) = \Exp_{\rvZ=(\rvX,\rvY)} \loss(\prob(\prm,\rvX),\rvY)$. 
Let $\trnLoss(\prm,\rvZn)$ be empirical loss on $\rvZn=\{(\rvXi,\rvYi)|i=1,\cdots,n\}$: 
$\trnLoss(\prm,\rvZn) = \frac{1}{n} \sum_{i=1}^n \loss(\prob(\prm,\rvXi),\rvYi)$.  
Then for a given training procedure $\proc$, we have
\begin{align*}
\Exp_{\rvZn} \Exp_{\rvPrm \sim \diPrmPZn} \left[
  \testLoss(\rvPrm) - \trnLoss(\rvPrm,\rvZn) 
\right] 
\leq 
\inf_{\lambda>0}  
\left[ 
  \gamma^2 \psi(\lambda) \lambda \klSumP 
+ \frac{\mutuP}{\lambda n}
\right] . 
\end{align*} 
\label{thm:gen}
\end{theorem}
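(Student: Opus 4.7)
The plan is to combine a Donsker--Varadhan change-of-measure inequality (to introduce $\mutuP$) with a Bennett-type moment-generating-function (MGF) bound (to produce the $\psi(\lambda)$ factor), using the Lipschitz property of $\loss$ and Pinsker's inequality to translate the Bernstein variance proxy into $\klSumP$.

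First, set $h(\prm,\rvZn) := \testLoss(\prm)-\trnLoss(\prm,\rvZn)$ and let $Q := \Exp_{\rvZn'}\diPrmPZnp$ denote the marginal law of $\rvPrmP$. Donsker--Varadhan applied pointwise in $\rvZn$ with reference $Q$, combined with Jensen's inequality to move the outer $\log$ past $\Exp_{\rvZn}$, gives, for any $\mu>0$,
\[
\mu\cdot\Exp_{\rvZn}\Exp_{\rvPrm\sim\diPrmPZn}[h(\rvPrm,\rvZn)] \;\leq\; \mutuP \;+\; \log \Exp_{\rvPrm\sim Q,\rvZn}e^{\mu h(\rvPrm,\rvZn)}.
\]
Under $Q\otimes P_{\rvZn}$ the parameter $\rvPrm$ is independent of the data, so for fixed $\prm$, $h(\prm,\rvZn)=\frac1n\sum_{i=1}^n[\testLoss(\prm)-\loss(\prob(\prm,\rvXi),\rvYi)]$ is a centered i.i.d.\ average of summands bounded in absolute value by $1$ with per-term variance $V(\prm):=\Exp_{\rvZ}[(\testLoss(\prm)-\loss(\prob(\prm,\rvX),\rvY))^2]$. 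Bennett's MGF bound per summand then yields $\log\Exp_{\rvZn\mid\prm}e^{\mu h}\leq V(\prm)\psi(\mu/n)\mu^2/n$.

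The crux is to show that the effective variance is at most $\gamma^2\klSumP$. For two independent $\prm_1,\prm_2\sim Q$ with underlying training sets $\rvZn,\rvZn'$, telescope the loss difference through the averaged predictions $\pbarPZn$ and $\pbarPZnp$: write $\loss(\prob(\prm_1,\rvX),\rvY)-\loss(\prob(\prm_2,\rvX),\rvY)$ as the sum of $[\loss(\prob(\prm_1,\rvX),\rvY)-\loss(\pbarPZn(\rvX),\rvY)]$, $[\loss(\pbarPZn(\rvX),\rvY)-\loss(\pbarPZnp(\rvX),\rvY)]$, and $[\loss(\pbarPZnp(\rvX),\rvY)-\loss(\prob(\prm_2,\rvX),\rvY)]$. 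Squaring, applying $(a+b+c)^2\leq 3(a^2+b^2+c^2)$, the Lipschitz bound $|\loss(\prob,y)-\loss(\prob',y)|\leq(\gamma/2)\|\prob-\prob'\|_1$, and Pinsker's inequality $\|\prob-\prob'\|_1^2\leq 2\KL(\prob\|\prob')$ upper bounds each squared piece by a constant multiple of a KL divergence of predictions. Taking expectations, the two ``within the same $\rvZn$'' pieces produce a multiple of $\klSameP$ and the ``between $\rvZn$ and $\rvZn'$'' piece produces a multiple of $\klDiffP$, bounding the expected squared difference---and hence the variance proxy---by $c\cdot\gamma^2\klSumP$. Substituting $\mu=\lambda n$ finally converts $\psi(\mu/n)$ into $\psi(\lambda)$, $\mutuP/\mu$ into $\mutuP/(\lambda n)$, and the MGF term into $\gamma^2\psi(\lambda)\lambda\klSumP$, after which taking $\inf_{\lambda>0}$ gives the stated bound.

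The main obstacle is reconciling the per-$\prm$ factor $V(\prm)$ in Bennett's bound with the aggregate $\klSumP$: the pointwise inequality $V(\prm)\leq \gamma^2\klSumP$ is not available, so naively exponentiating would produce $\log\Exp_{\prm\sim Q}e^{cV(\prm)}$, which is not controlled by $c\gamma^2\klSumP$ in general. The resolution is to ensure that the variance enters only linearly in the exponent before the $\prm$-expectation is taken. A clean way to arrange this is a tangent ghost-sample: use $h(\prm,\rvZn)=\Exp_{\rvZn''}[\trnLoss(\prm,\rvZn'')-\trnLoss(\prm,\rvZn)]$ for an independent copy $\rvZn''$, apply Jensen's inequality to pull $\Exp_{\rvZn''}$ out of the MGF, and only then invoke the per-$\prm$ Bennett bound, so that the object to be controlled is the symmetrized second moment of the per-sample loss difference, which is exactly the quantity bounded by the Lipschitz+Pinsker+telescope argument above.
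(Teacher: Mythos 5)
Your overall architecture --- a change of measure to extract $\mutuP$, a Bennett-type MGF bound to produce $\psi(\lambda)$, and a telescope-plus-Pinsker step to turn squared $1$-norms of prediction differences into $\klSameP+\klDiffP$ --- matches the paper's, and your last step is essentially the paper's inequality \eqref{eq:Dp}. But there are two genuine gaps in the middle. First, the variance proxy your per-$\prm$ Bennett step produces is the wrong one: for fixed $\prm$ the MGF of $\testLoss(\prm)-\trnLoss(\prm,\rvZn)$ is governed by $V(\prm)=\Exp_{\rvZ}\bigl[(\loss(\prob(\prm,\rvX),\rvY)-\testLoss(\prm))^2\bigr]$, the spread of the loss \emph{across data points for a fixed model}, and this is not controlled by $\klSumP$ (a deterministic procedure on one training set has $\klSameP=\klDiffP=0$ while $V(\prm)$ is generically positive). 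Your telescope argument bounds a different object, $\Exp_{\prm_1,\prm_2}\Exp_{\rvZ}\bigl[(\loss(\prob(\prm_1,\rvX),\rvY)-\loss(\prob(\prm_2,\rvX),\rvY))^2\bigr]$, the spread \emph{across models at a fixed data point}; the ghost-sample symmetrization does not convert the former into the latter, because the two losses being compared there sit at different data points with different labels, so the Lipschitz property in the prediction argument gives you nothing. The paper's fix is to recenter each per-sample loss at $\loss(\bar{\prob}_{\proc}(x),y)$, the loss of the globally averaged prediction at the \emph{same} point $x$: the resulting $\ell_{\prm}(z)$ satisfies $\Exp_{\rvZ}\ell_{\prm}(\rvZ)^2\le\frac{\gamma^2}{4}\Exp_{\rvX}\|\prob(\prm,\rvX)-\bar{\prob}_{\proc}(\rvX)\|_1^2$, which does aggregate to $4\klSumP$, and the recentering term cancels between test and train because it does not depend on $\prm$ and $\rvZn$ is iid.

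Second, the aggregation obstacle you correctly flag --- being left with $\log\Exp_{\prm\sim Q}e^{cV(\prm)}$, which Jensen bounds from \emph{below}, not above, by $c\,\Exp_{\prm\sim Q}V(\prm)$ --- is not resolved by your tangent ghost sample: after pulling $\Exp_{\rvZn''}$ out and applying Bennett per summand, the $\prm$-dependent second moment still sits inside an exponential that is averaged over $\prm$ and then logged, so nothing has changed. The paper's Lemma \ref{lem:info} resolves this with a compensator: the functional fed into the change-of-measure inequality is not $\lambda n h$ but $\sum_{i}\bigl(-\lambda \ell_{\prm}(\rvXi,\rvYi)-\ln\Exp_{\rvZ}e^{-\lambda \ell_{\prm}(\rvZ)}\bigr)$, i.e.\ the centered sum minus its own per-$\prm$ log-MGF, chosen so that its exponential has expectation exactly $1$ under $\diPrm^0\otimes\rvZn$. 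The compensator then reappears on the other side under a plain linear expectation $\Exp_{\rvPrm\sim\diPrmPZn}$, where $-\ln\Exp_{\rvZ}e^{-\lambda \ell_{\rvPrm}(\rvZ)}\ge\lambda\Exp_{\rvZ}\ell_{\rvPrm}(\rvZ)-\psi(\lambda)\lambda^2\Exp_{\rvZ}\ell_{\rvPrm}(\rvZ)^2$ lets the second moment enter linearly and be averaged to $\klSumP$. Without this device (or an equivalent one) your argument does not close.
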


The theorem indicates that the upper bound of \ggap\ depends on the three quantities, 
\divDiff\ of two types and \divSame, 
described above.  
As a sanity check, 
note that right after random initialization, \ggap\ of the left-hand side is zero, and 
$\mutuP$ and $\klDiffP$ in the right-hand side are also zero, which makes 
the right-hand side zero as $\lambda \to 0$.  
Also note that this analysis is meant for stochastic training procedures.  
If $\proc$ is a deterministic function of $\rvZn$ and not constant, 
the mutual information $\mutuP$ would become large while $\klSumP>0$, 
which would make the bound loose.  

Intuitively, when training procedure $\proc$ is more random, model parameter distributions are likely 
to be flatter (less concentrated) and less dependent on the sampling of training data $\rvZn$, 
and so $\mutuP$ is lower.  
However, high randomness would raise inconsistency of the model outputs $\klSameP$, which would raise $\klSumP$. 
Thus, the theorem indicates that there should be a trade-off with respect to the randomness of 
the training procedure.  

The style of this general bound follows from the recent 
information theoretical generalization analyses of stochastic machine learning algorithms 
\cite{xu2017information,russo2019much,neu2021information} that employ 
$\mutuP$
as a complexity measure, and such results generally hold in expectation. However, unlike the previous studies, we obtained a more refined Bernstein-style generalization result. Moreover, 
we explicitly incorporate \divSame\ and \divDiffLong\ 
into our bound
and show that 
smaller \divSame\ and smaller \divDiff\ lead to a smaller generalization bound on the right-hand side. 
In particular, if $\mutuP$ is relatively small so that we have $\mutuP \le n \gamma^2 \klSumP$, then 
setting $\lambda=\sqrt{\mutuP/(n\gamma^2 \klSumP)}$ and using $\psi(\lambda)<1$ for $\lambda \leq 1$, 
we obtain a simpler bound 
\begin{align*}
\Exp_{\rvZn} \Exp_{\rvPrm \sim \diPrmPZn} \left[
  \testLoss(\rvPrm) - \trnLoss(\rvPrm,\rvZn)   
\right] 
\leq 
 2 \gamma \sqrt{\frac{\klSumP\mutuP}{n}}  . 
\end{align*}
\mypara{Relation to {\em disagreement}}
It was empirically shown in \cite{NB20,disag22} that 
with models trained to zero \terr, 
{\em disagreement} (in terms of classification decision) 
of identically trained models is approximately equal to test error.  
When measured for the models that share training data, 
disagreement is closely related to \divSame\ $\klSameP$ above, 
and it can be expressed as 
$
  \Exp_{\rvZn} \Exp_{\rvPrm,\rvPrm' \sim \diPrmPZn} 
            \Exp_{\rvX} \Ind \left[~ \decision(\rvPrm,\rvX) \ne \decision(\rvPrm',\rvX) ~\right] 
$,             
where $\decision(\prm,x)$ is classification decision $\decision(\prm,x) = \arg\max_i \prob(\prm,x)[i]$ 
and $\Ind$ is the indicator function $\Ind[u]=1$ if $u$ is true and 0 otherwise.  
In spite of the similarity, in fact, the behavior of 
disagreement and \divSame\ $\klSameP$ can be quite different.  
Disagreement is equivalent to sharpening the prediction 
to a one-hot vector and then taking 1-norm of the difference: 
\newcommand{\sharpen}{{\rm onehot}}
$
  \Ind \left[~ \decision(\prm,x) \ne \decision(\prm',x) ~\right] 
  = 
  \frac{1}{2} \left \| \sharpen(\prob(\prm,x)) - \sharpen(\prob(\prm',x)) \right \|_1
$.  
This ultimate sharpening makes \divSame\ and disagreement very different when 
the confidence-level of $\prob(\prm,x)$ is low.  
This means that 
disagreement and \divSame\ would behave more differently 
on more complex data (such as ImageNet) on which it is harder to train models to a state of high confidence.  
In Figure \ref{fig:kl-vs-disag} (Appendix) 
we show an example of how different the behavior of disagreement and \divSame\ can be, 
with ResNet-50 trained on 10\% of ImageNet.
\section{Empirical study}
Our empirical study consists of three parts.  
As the bound depends not only $\klSumP(=\klSameP+\klDiffP)$ but also $\mutuP$, 
we first seek the condition under which $\klSumP$ is predictive of 
\ggap\ (Section \ref{sec:klSumP}).  
Noting that, when the found condition is met, \divSame\ $\klSameP$ alone is 
as predictive of \ggap\ as $\klSumP$, 
Section \ref{sec:klSame} focuses on \divSame\ and 
shows that \divSame\ is predictive of \ggap\ in a variety of realistic settings 
that use full-size training sets. 
Finally, 
Section \ref{sec:practical} 
reports on the practical benefit of encouraging low \divSame\ in algorithmic design. 
The details for reproduction are provided in Appendix \ref{app:exp-details}. 

\mypara{Remark: Predictiveness of $\klSumP$ is relative} 
Note that 
we are interested in how well the {\em change} in $\klSumP$ matches 
the {\em change} in \ggap; thus, 
evaluation of the relation between $\klSumP$ and \ggap\ always involves 
{\em multiple} training procedures to observe the {\em changes/differences}.  
Given set $S$ of training procedures, 
we informally say 
{\em $\klSumP$ is predictive of \ggap\ for $S$ if the relative smallness/largeness of $\klSumP$ of 
the procedures in $S$ 
essentially coincides with the relative smallness/largeness of their \ggap}.  

\subsection{On the predictiveness of $\klSumP$= \DivSame\ $\klSameP$+ \DivDiff\ $\klDiffP$}
\label{sec:klSumP}

\newcommand{\numZn}{K}
\newcommand{\numW}{J}
\newcommand{\prmP}{\prm_{\proc}}
\newcommand{\pbark}[1]{\bar{p}_{#1}}
\newcommand{\prmkj}[2]{\prm_{#1}^{#2}} 
\newcommand{\unlabset}{U}
\newcommand{\Zn}{\rvZn}

\DivSame\ $\klSameP$ and \divDiff\ $\klDiffP$ were estimated as follows.  
For each training procedure $\proc$ (identified by a combination of hyperparameters such as the learning rate 
and training length), 
we trained $\numW$ models on each of $\numZn$ disjoint training sets. 
That is, $\numZn \times \numW$ models were trained with each procedure $\proc$.  
The expectation values involved in $\klSameP$ and $\klDiffP$ were estimated by taking the 
average; in particular, the expectation over data distribution $\diZ$ 
was estimated on the held-out unlabeled data disjoint from training data or test data.  
Disagreement was estimated similarly.  
$(\numZn,\numW)$ was set to (4,8) for CIFAR-10/100 and (4,4) for ImageNet, 
and the size of each training set was set to 4K for CIFAR-10/100 
and 120K (10\%) for ImageNet.  

\begin{figure}[h]
\newcommand{\wid}{0.325}
\newcommand{\iwid}{0.9}
\newcommand{\capskip}{\vskip -0.05in}
\newcommand{\maincapskip}{\vskip -0.1in}
\newcommand{\xy}{gap-klsum}
\newcommand{\nm}{dummy}
\newcommand{\minipwid}{0.495}
\begin{center}
\begin{subfigure}[b]{1\linewidth} \begin{center}
\includegraphics[width=1\linewidth]{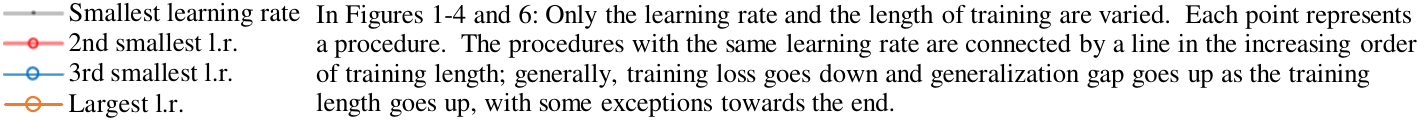}
\end{center} \end{subfigure}%
\vskip -0.2in
\end{center}
\begin{minipage}[t]{\minipwid\linewidth}
\renewcommand{\nm}{vr1w-c10-mb256-th0.3}
\begin{subfigure}[b]{\wid\linewidth} \begin{center}
\centerline{\includegraphics[width=\iwid\linewidth]{\nm-\xy}} \capskip 
\caption{\small CIFAR-10}
\end{center} \end{subfigure}%
\renewcommand{\nm}{vr1w-c100-mb64-th2}
\begin{subfigure}[b]{\wid\linewidth} \begin{center}
\centerline{\includegraphics[width=\iwid\linewidth]{\nm-\xy}} \capskip 
\caption{\small CIFAR-100}
\end{center} \end{subfigure}%
\renewcommand{\nm}{vr1w-in-th3}
\begin{subfigure}[b]{\wid\linewidth} \begin{center}
\centerline{\includegraphics[width=\iwid\linewidth]{\nm-\xy}} \capskip 
\caption{\small ImageNet}
\end{center} \end{subfigure}%
\maincapskip
\caption{ \label{fig:w-gap-klsum} \small
  $\klSumP$ estimates ($x$-axis) and \ggap\ ($y$-axis). 
  SGD with a {\bf constant learning rate} and {\bf iterate averaging}. 
  Only the learning rate and training length were varied.  
  A positive correlation is observed.  
}
\end{minipage}
\hskip 0.1in
\begin{minipage}[t]{\minipwid\linewidth}
\centering
\renewcommand{\nm}{vr1v-c10-mb256-th0.3}
\begin{subfigure}[b]{\wid\linewidth} \begin{center}
\centerline{\includegraphics[width=\iwid\linewidth]{\nm-\xy}} \capskip 
\caption{\small CIFAR-10}
\end{center} \end{subfigure}%
\renewcommand{\nm}{vr1v-c100-mb64-th2}
\begin{subfigure}[b]{\wid\linewidth} \begin{center}
\centerline{\includegraphics[width=\iwid\linewidth]{\nm-\xy}} \capskip 
\caption{\small CIFAR-100}
\end{center} \end{subfigure}%
\renewcommand{\nm}{vr1v-in-th3}
\begin{subfigure}[b]{\wid\linewidth} \begin{center}
\centerline{\includegraphics[width=\iwid\linewidth]{\nm-\xy}} \capskip 
\caption{\small ImageNet}
\end{center} \end{subfigure}%
\vskip -0.1in 
\caption{ \label{fig:v-gap-klsum} \small
  $\klSumP$ estimates ($x$-axis) and \ggap\ ($y$-axis). 
  {\bf Constant learning rates.  No iterate averaging}. 
  $\klSumP$ is predictive of \ggap\ only for the procedures that 
  share the learning rate. %
}
\end{minipage}
\renewcommand{\wid}{0.159}
\renewcommand{\iwid}{1}
\renewcommand{\capskip}{\vskip -0.05in}
\newcommand{\nmOne}{vr1w-c10-mb256-th0.3}
\newcommand{\nmTwo}{vr1w-c100-mb64-th2}
\newcommand{\nmThree}{vr1w-in-th3}
\newcommand{\capOne}{Gap}
\newcommand{\capTwo}{$\klSumP$ estimate}
\begin{center}

\begin{subfigure}[b]{0.33\linewidth} \begin{center}
  {\small CIFAR-10}
\end{center} \end{subfigure}%
\begin{subfigure}[b]{0.33\linewidth} \begin{center}
  {\small CIFAR-100}
\end{center} \end{subfigure}%
\begin{subfigure}[b]{0.33\linewidth} \begin{center}
  {\small ImageNet}
\end{center} \end{subfigure}%

\renewcommand{\nm}{\nmOne}
\renewcommand{\xy}{trnloss-gap}
\begin{subfigure}[b]{\wid\linewidth} \begin{center}
\centerline{\includegraphics[width=\iwid\linewidth]{\nm-\xy}} \capskip 
\caption{\small \capOne}
\end{center} \end{subfigure}%
\renewcommand{\xy}{trnloss-klsum}
\begin{subfigure}[b]{\wid\linewidth} \begin{center}
\centerline{\includegraphics[width=\iwid\linewidth]{\nm-\xy}} \capskip 
\caption{\small \capTwo}
\end{center} \end{subfigure}%
\renewcommand{\nm}{\nmTwo}
\renewcommand{\xy}{trnloss-gap}
\begin{subfigure}[b]{\wid\linewidth} \begin{center}
\centerline{\includegraphics[width=\iwid\linewidth]{\nm-\xy}} \capskip 
\caption{\small \capOne}
\end{center} \end{subfigure}%
\renewcommand{\xy}{trnloss-klsum}
\begin{subfigure}[b]{\wid\linewidth} \begin{center}
\centerline{\includegraphics[width=\iwid\linewidth]{\nm-\xy}} \capskip 
\caption{\small \capTwo}
\end{center} \end{subfigure}%
\renewcommand{\nm}{\nmThree}
\renewcommand{\xy}{trnloss-gap}
\begin{subfigure}[b]{\wid\linewidth} \begin{center}
\centerline{\includegraphics[width=\iwid\linewidth]{\nm-\xy}} \capskip 
\caption{\small \capOne}
\end{center} \end{subfigure}%
\renewcommand{\xy}{trnloss-klsum}
\begin{subfigure}[b]{\wid\linewidth} \begin{center}
\centerline{\includegraphics[width=\iwid\linewidth]{\nm-\xy}} \capskip 
\caption{\small \capTwo}
\end{center} \end{subfigure}%
\end{center}
\vskip -0.2in 
\caption{ \label{fig:w-trnloss-gap-klsum} \small
  Up and down of \ggap\ ($y$-axis; left) and $\klSumP$ ($y$-axis; right) as training proceeds. 
  The $x$-axis is training loss. 
  SGD with a constant learning rate with iterate averaging.  
  The analyzed models and the legend are the same as in Fig \ref{fig:w-gap-klsum}. 
  For each dataset, the left graph looks very similar to the right graph;
  up and down of $\klSumP$ as training proceeds 
  is a good indicator of up and down of \ggap.    
}
\end{figure}
\subsubsection{Results} 
\label{sec:klSumP-results}

\newcommand{\constSGD}{constant SGD}
\newcommand{\ConstSGD}{Constant SGD}
We start with the experiments that varied the learning rate and the length of training 
fixing anything else to see whether the change of $\klSumP$ is predictive of the change of \ggap\ 
caused by the change of these two hyperparameters.  
To avoid the complex effects of learning rate decay, 
we trained models with SGD with a constant learning rate ({\em \constSGD} in short).  

\mypara{\ConstSGD\ with iterate averaging \figInPara{Fig \ref{fig:w-gap-klsum},\ref{fig:w-trnloss-gap-klsum}}} 
Although \constSGD\ could perform poorly by itself, \constSGD\ with {\em iterate averaging} 
is known to be competitive \cite{iteavg18,AvgWeights18}. 
Figure \ref{fig:w-gap-klsum} shows $\klSumP$ ($x$-axis) and \ggap\ ($y$-axis) 
of the training procedures that performed \constSGD\ with iterate averaging.  
Iterate averaging was performed by taking the exponential moving average with momentum 0.999.  
Each point represents a procedure (identified by the combination of 
the learning rate and training length), and the procedures 
with the same learning rate are connected by a line in the increasing order of training length.  
A {\em positive correlation} is observed between $\klSumP$ and \ggap\ 
on all three datasets.  
Figure \ref{fig:w-trnloss-gap-klsum} plots 
\ggap\ (left) and $\klSumP$ (right) on the $y$-axis and 
\tloss\ on the $x$-axis for the same procedures as in Figure \ref{fig:w-gap-klsum}; 
observe that the up and down of $\klSumP$ is remarkably similar to the up and down of \ggap.  
Also note that on CIFAR-10/100, sometimes 
\ggap\ first goes up and then 
starts coming down towards the low-training-loss end of the lines 
(increase of \tloss\ in some cases is due to the effects of weight decay). 
This `{\em turning around}' of \ggap\ (from going up to going down) 
in Figure \ref{fig:w-trnloss-gap-klsum}
shows up as the `{\em curling up}' of the upper-right end of the lines in 
Figure \ref{fig:w-gap-klsum}.  
It is interesting to see that $\klSumP$ and \ggap\ are matching at such a detailed level.  

\begin{wrapfigure}{l}{0.35\linewidth}
\newcommand{\wid}{0.45}
\newcommand{\iwid}{1}
\newcommand{\capskip}{\vskip -0.05in}
\newcommand{\nm}{dummy}
\newcommand{\xy}{dummy}
\newcommand{\nmOne}{vr1v-c10-mb256-th0.3}
\newcommand{\nmTwo}{vr1v-c100-mb64-th2}
\newcommand{\nmThree}{vr1v-in-th3}
\newcommand{\capOne}{\DivSame}
\newcommand{\capTwo}{\DivDiff}
\newcommand{\xyOne}{gap-kl}
\newcommand{\xyTwo}{gap-barDiv}
\begin{center}
\renewcommand{\nm}{\nmTwo}
\renewcommand{\xy}{\xyOne}
\vskip -0.2in
\begin{subfigure}[b]{\wid\linewidth} \begin{center}
\centerline{\includegraphics[width=\iwid\linewidth]{\nm-\xy}} \capskip 
\caption{\small \capOne}
\end{center} \end{subfigure}%
\renewcommand{\xy}{\xyTwo}
\begin{subfigure}[b]{\wid\linewidth} \begin{center}
\centerline{\includegraphics[width=\iwid\linewidth]{\nm-\xy}} \capskip 
\caption{\small \capTwo}
\end{center} \end{subfigure}%
\end{center}
\vskip -0.2in 
\caption{ \label{fig:klsame-vs-kldiff} \small
  \DivSame\ and \divDiff\ ($x$-axis) and 
  \ggap\ ($y$-axis). 
  Same procedures and legend as in Fig \ref{fig:v-gap-klsum}. 
  \DivDiff\ is relatively unaffected by the learning rate. 
  CIFAR-100.  Similar results on CIFAR-10 and ImageNet (Figure \ref{fig:klsame-vs-kldiff-more} in the Appendix). 
}
\end{wrapfigure}
\mypara{\ConstSGD\ without iterate averaging \figInPara{Fig \ref{fig:v-gap-klsum},\ref{fig:klsame-vs-kldiff}}} 
While $\klSumP$ is clearly predictive of \ggap\ in the setting above, 
the results in Figure \ref{fig:v-gap-klsum} are more complex.  
This figure shows $\klSumP$ ($x$-axis) and \ggap\ ($y$-axis) 
for the training procedures that performed \constSGD\ and did {\em not} perform iterate averaging.  
As before, only the learning rate and training length were varied. 
In Figure \ref{fig:v-gap-klsum}, 
we observe that $\klSumP$ is predictive {\em only} for those which share the learning rate.  
For fixed \ggap, $\klSumP$ is {\em larger} (instead of being the same) for a {\em larger} learning rate. 
Inspection of \divSame\ and \divDiff\ %
reveals that larger learning rates raise \divSame\
although \divDiff\ is mostly unaffected (see Fig \ref{fig:klsame-vs-kldiff}). 
This is apparently the effect of high randomness/noisiness/uncertainty of the final state. 
As the learning rate is constant, SGD updates near the end of training bounce around the local minimum, 
and the random noise in the gradient of the last mini-batch has 
a substantial influence on the final model parameter.  
The influence of this noise is amplified by larger learning rates. 
On the other hand, $\mutuP$ is likely to be lower with higher randomness %
since higher randomness should make model parameter distributions flatter 
(less concentrated) 
and less dependent on the sampling of training data $\rvZn$.  
This means that 
in this case (i.e., where the final randomness is high and varies a lot across the procedures), 
the interaction of $\mutuP$ and $\klSumP$ in the theoretical bound is complex, 
and $\klSumP$ alone could be substantially less predictive than what is indicated by the bound.  

\mypara{For $\klSumP$ to be predictive, final randomness should be equally low \figInPara{Fig \ref{fig:good-gap-klsum-klsame} (a)--(c)}} 
Therefore, we hypothesized and empirically confirmed that 
for $\klSumP$ to be predictive, 
the degrees of final randomness/uncertainty/noisiness
should be equally low, 
which can be achieved with either a vanishing learning rate or iterate averaging.  
This means that, fortunately, $\klSumP$ is mostly predictive for high-performing procedures 
that matter in practice.  
\begin{figure}[h]
\newcommand{\wid}{0.159}
\newcommand{\iwid}{1}
\newcommand{\capskip}{\vskip -0.05in}
\newcommand{\xy}{dummy}

\begin{subfigure}[b]{0.5\linewidth} \begin{center}
{\small $x$: $\klSumP=\klSameP+\klDiffP$, $y$: \ggap}
\end{center} \end{subfigure}%
\begin{subfigure}[b]{0.5\linewidth} \begin{center}
{\small $x$: \DivSame\ $\klSameP$, $y$: \ggap}
\end{center} \end{subfigure}%

\begin{center}
\vskip -0.05in
\renewcommand{\xy}{gap-klsum}
\begin{subfigure}[b]{\wid\linewidth} \begin{center}
\centerline{\includegraphics[width=\iwid\linewidth]{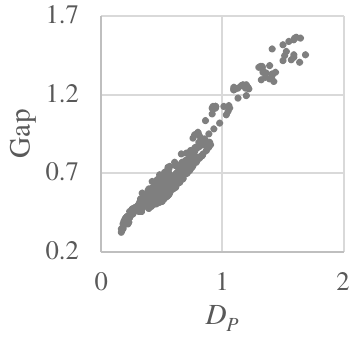}} \capskip 
\caption{\small CIFAR-10}
\end{center} \end{subfigure}%
\begin{subfigure}[b]{\wid\linewidth} \begin{center}
\centerline{\includegraphics[width=\iwid\linewidth]{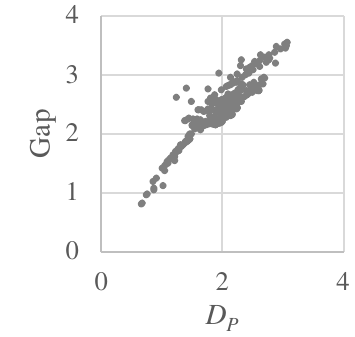}} \capskip 
\caption{\small CIFAR-100}
\end{center} \end{subfigure}%
\begin{subfigure}[b]{\wid\linewidth} \begin{center}
\centerline{\includegraphics[width=\iwid\linewidth]{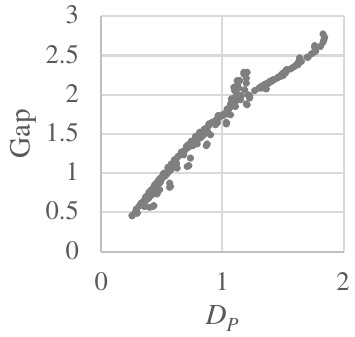}} \capskip 
\caption{\small ImageNet}
\end{center} \end{subfigure}%
\quad
\renewcommand{\xy}{gap-klsame}
\begin{subfigure}[b]{\wid\linewidth} \begin{center}
\centerline{\includegraphics[width=\iwid\linewidth]{c10-good-\xy}} \capskip 
\caption{\small CIFAR-10}
\end{center} \end{subfigure}%
\begin{subfigure}[b]{\wid\linewidth} \begin{center}
\centerline{\includegraphics[width=\iwid\linewidth]{c100-good-\xy}} \capskip 
\caption{\small CIFAR-100}
\end{center} \end{subfigure}%
\begin{subfigure}[b]{\wid\linewidth} \begin{center}
\centerline{\includegraphics[width=\iwid\linewidth]{in-good-\xy}} \capskip 
\caption{\small ImageNet}
\end{center} \end{subfigure}%

\end{center}
\vskip -0.1in 
\caption{ \label{fig:good-gap-klsum-klsame} \small
  (a)--(c)   $\klSumP$ ($x$-axis) and \ggap\ ($y$-axis).  
  (d)--(f)   $\klSameP$ ($x$-axis) and \ggap\ ($y$-axis). 
  Both $\klSumP$ and $\klSameP$ are predictive of \ggap\ for training procedures with 
  iterate averaging or a vanishing learning rate 
  (so that final randomness is low), 
  irrespective of differences in the setting.  
  In particular, the CIFAR-10 results include the training procedures that differ in 
  network architectures, mini-batch sizes, data augmentation, 
  weight decay parameters, learning rate schedules, learning rates 
  and training lengths. 
}
\end{figure}

Figure \ref{fig:good-gap-klsum-klsame} (a)--(c) plot $\klSumP$ ($x$-axis) and \ggap\ ($y$-axis) 
for the procedures that satisfy the condition.  
In particular, those trained on CIFAR-10 include
a wide variety of procedures that differ in 
network architectures, mini-batch sizes, presence/absence of data augmentation, 
learning rate schedules (constant or vanishing), 
weight decay parameters, in addition to learning rates and training lengths.  
A positive correlation between $\klSumP$ and \ggap\ is observed on all three datasets.  

\mypara{\DivSame\ $\klSameP$ vs. $\klSumP$ \figInPara{Fig \ref{fig:good-gap-klsum-klsame} (d)--(f)}} 
Figure \ref{fig:good-gap-klsum-klsame} (d)--(f) show that 
\divSame\ $\klSameP$ is almost as predictive as $\klSumP$ when the condition 
of low randomness of the final states is satisfied.  
This is a practical advantage
since unlike \divDiff\ $\klDiffP$, 
\divSame\ $\klSameP$ does not require multiple training sets for estimation.  

\mypara{Disagreement (Fig \ref{fig:w-terr-disag},\ref{fig:good-terr-disag})}
For the same procedures/models as in Figures \ref{fig:w-gap-klsum} and \ref{fig:good-gap-klsum-klsame}, 
we show the relationship between 
disagreement and test error in Figures \ref{fig:w-terr-disag} and \ref{fig:good-terr-disag}, respectively.  
We chose test error instead of \ggap\ for the $y$-axis since 
the previous finding was `Disagreement $\approx$ Test error'.  
The straight lines are $y=x$.  
The relation in Fig \ref{fig:w-terr-disag} (a) and Fig \ref{fig:good-terr-disag} (a) is close to equality, 
but the relation appears to be more complex in the others.  
The procedures plotted in Fig \ref{fig:good-terr-disag} are those which satisfy the 
condition for $\klSumP$ to be predictive of \ggap; thus, the results 
indicate that the condition for disagreement to be predictive of test error is  
apparently different. %

\begin{figure}[h]
\newcommand{\wid}{0.325}
\newcommand{\iwid}{1}
\newcommand{\capskip}{\vskip -0.05in}
\newcommand{\nm}{dummy}
\newcommand{\xy}{terr-disag}
\newcommand{\minipwid}{0.495}
\begin{minipage}[t]{\minipwid\linewidth}
\renewcommand{\nm}{vr1w-c10-mb256-th0.3}
\begin{subfigure}[b]{\wid\linewidth} \begin{center}
\centerline{\includegraphics[width=\iwid\linewidth]{\nm-\xy}} \capskip 
\caption{\small CIFAR-10}
\end{center} \end{subfigure}%
\renewcommand{\nm}{vr1w-c100-mb64-th2}
\begin{subfigure}[b]{\wid\linewidth} \begin{center}
\centerline{\includegraphics[width=\iwid\linewidth]{\nm-\xy}} \capskip 
\caption{\small CIFAR-100}
\end{center} \end{subfigure}%
\renewcommand{\nm}{vr1w-in-th3}
\begin{subfigure}[b]{\wid\linewidth} \begin{center}
\centerline{\includegraphics[width=\iwid\linewidth]{\nm-\xy}} \capskip 
\caption{\small ImageNet}
\end{center} \end{subfigure}%
\vskip -0.1in 
\caption{ \label{fig:w-terr-disag} \small
  Disagreement ($x$-axis) and test error ($y$-axis). 
  SGD with a constant learning rate and iterate averaging. 
  Same models and legend as in Fig \ref{fig:w-gap-klsum}.  
  ``Disagreement$\approx$Test error'' of the previous studies does not quite hold in (b) and (c).  
}
\end{minipage}
\hskip 0.1in 
\begin{minipage}[t]{\minipwid\linewidth}
\newcommand{\which}{good}
\renewcommand{\nm}{c10-\which}
\begin{subfigure}[b]{\wid\linewidth} \begin{center}
\centerline{\includegraphics[width=\iwid\linewidth]{\nm-\xy}} \capskip 
\caption{\small CIFAR-10}
\end{center} \end{subfigure}%
\renewcommand{\nm}{c100-\which}
\begin{subfigure}[b]{\wid\linewidth} \begin{center}
\centerline{\includegraphics[width=\iwid\linewidth]{\nm-\xy}} \capskip 
\caption{\small CIFAR-100}
\end{center} \end{subfigure}%
\renewcommand{\nm}{in-\which}
\begin{subfigure}[b]{\wid\linewidth} \begin{center}
\centerline{\includegraphics[width=\iwid\linewidth]{\nm-\xy}} \capskip 
\caption{\small ImageNet}
\end{center} \end{subfigure}%
\vskip -0.1in 
\caption{ \label{fig:good-terr-disag} \small
  Disagreement ($x$-axis) and test error ($y$-axis). 
  The models are the same as in Fig \ref{fig:good-gap-klsum-klsame}, 
  which satisfy the condition for $\klSumP$ to be predictive of \ggap.  
  The results indicate that 
  the condition for disagreement to be predictive of test error is different.  
}
\end{minipage}
\end{figure}

Note that while the previous empirical study of disagreement \cite{NB20,disag22} 
focused on the models with zero \terr, 
we studied a wider range of models.  
Unlike CIFAR-10/100 (on which zero \terr\ can be quickly achieved), 
a common practice for datasets like ImageNet is 
{\em budgeted training} \cite{LYR20} instead of training to zero error.  
Also note that zero \terr\ does not necessarily lead to the best performance, 
which, for example, has been observed in the ImageNet experiments of this section.

\subsection{On the predictiveness of \divSame\ $\klSameP$: from an algorithmic perspective}
\label{sec:klSame}

Based on the results above, 
this section focuses on \divSame\ $\klSameP$, which enables experiments with larger training data, 
and studies the behavior of \divSame\ in comparison with \sharpness\ 
from an algorithmic perspective.  

\mypara{Training objectives} 
Parallel to the fact that SAM seeks \flatnessLong, 
a meta-algorithm {\em co-distillation} 
(named by \cite{codistill18}, and closely related to deep mutual learning \cite{deepMutual18}, 
 summarized in Algorithm \ref{alg:codistill} in the Appendix) 
encourages \simSameLong.  
It simultaneously trains two (or more) models with 
two (or more) different random sequences while penalizing 
the \divSame\ between the predictions of the two models. 
This is typically described as `teaching each other', but we take a different view 
of \simSame\ encouragement. 
Note that the penalty term merely `{\em encourages}' low \divSame.  
 Since \divSame\ by definition depends on the unknown data distribution, 
 it cannot be directly minimized by training.  
 The situation is similar to minimizing loss on the training data with the hope that 
 loss will be small on unseen data.
We analyzed the models that were trained with 
one of the following training objectives (shown with the abbreviations used below): 
\begin{itemize}
\setlength{\itemsep}{0.1em}
\item {\em \lnNone}: the standard cross-entropy loss. %
\item {\em \lnSimSame}: the standard loss with encouragement of \simSame.  
\item {\em \lnFlat}: the SAM objective, %
      which encourages \flatness.  
\item {\em \lnBoth}: encouraging both \simSame\ and \flatness.  
  Coupling two instances of SAM training with the \divSame\ penalty term. 
\end{itemize} 
\mypara{Cases}
We experimented with ten combinations ({\em cases}) 
of dataset, network architecture, and training scenario. %
Within each case, we fixed the basic settings (e.g., weight decay, learning rate) 
to the ones known to perform well from the previous studies, 
and only 
varied the training objectives %
so that for each case we had at least four stochastic training procedures 
distinguished by the four training objectives 
(for some cases, we had more than four  
due to 
testing multiple values of $\rho$ for SAM). 
We obtained four models (trained with four different random sequences) 
per training procedure. %
Tables \ref{tab:data} and \ref{tab:nets} show the datasets and network architectures we used, %
respectively.
Note that as a result of adopting high-performing settings, 
all the cases satisfy the condition of low final randomness. 
\begin{table}
\begin{minipage}[t]{0.6\linewidth}
\caption{\label{tab:data} \small
  Datasets.  Mostly full-size natural images and 2 texts. 
} 
\begin{center}
\begin{small}
\begin{tabular}{|c|r|r|r|r|r|}
\hline 
\Mr{2}{Name}&\Mr{2}{\#class}&\Mc{2}{\#train}&\Mr{2}{\#dev}&\Mr{2}{\#test}\\
                     \cline{3-4}
              &      & provided  & used    &       &      \\
\hline 
ImageNet \cite{imagenet}& 1000 & 1.28M & 1.27M & 10K   & 50K   \\
Food101 \cite{food101} &  101 & 75750 & 70700 & 5050  & 25250 \\
Dogs \cite{dogs}       &  120 & 12000 & 10800 & 1200  & 8580 \\  %
Cars \cite{cars}       &  196 &  8144 &  7144 & 1000  & 8041 \\
CIFAR-10 \cite{CIFAR10}&   10 & 50000 &  4000 & 5000  &10000  \\
MNLI \cite{wang2019glue} & 3 & 392702 & 382902 & 9800 & 9815 \\
QNLI \cite{wang2019glue} & 2 & 104743 &  99243 & 5500 & 5463 \\ 
\hline 
\end{tabular}
\end{small}
\end{center}
\end{minipage}
\quad
\begin{minipage}[t]{0.35\linewidth}
\newcommand{\CiteResNet}{\cite{resnet16}}
\newcommand{\CiteWRN}{\cite{Wresnet16}}
\newcommand{\CiteViT}{\cite{vit21}}
\newcommand{\CiteMixer}{\cite{mlp21}}
\newcommand{\CiteEN}{\cite{EN19}}
\caption{\label{tab:nets} \small
  Networks. 
}
\begin{center}
\begin{small}
\begin{tabular}{|l|c|c|}
\hline
Network & \#param & Case\\
\hline
ResNet-50 \CiteResNet & 24M & 1\\
ViT-S/32 \CiteViT     & 23M & 2\\
ViT-B/16 \CiteViT     & 86M & 3 \\
Mixer-B/16 \CiteMixer & 59M & 4 \\
WRN-28-2  \CiteWRN    & 1.5M & 5\\
EN-B0 \CiteEN & 4.2M & 6,7 \\
roberta-base\cite{roberta19}& 124.6M & 8,9\\
ResNet-18 \CiteResNet & 11M & 10\\
\hline
\end{tabular}
\end{small}
\end{center}
\end{minipage}
\end{table}
\subsubsection{Results}
\label{sec:results}
  
As in the previous section, we 
quantify the \ggap\ by the difference of test loss from training loss 
(note that the loss is the cross-entropy loss; the \divSame\ penalty or anything else is not included); 
the \divSame\ values were again measured on the 
held-out unseen {\em unlabeled} data, disjoint from both training data and test data. 
Two types of \sharpness\ values were measured, 
1-sharpness (per-example loss difference in the adversarial direction) of \cite{sam21} and 
the magnitude of the loss Hessian (measured by the largest eigenvalue),  %
which represent the \sharpLong\ and have been shown to 
correlate to generalization performance \cite{sam21}. 

\begin{figure}
\newcommand{\wid}{0.159}
\newcommand{\iwid}{1}
\newcommand{\capskip}{\vskip -0.05in}
\newcommand{\xy}{dummy}
\newcommand{\nm}{dummy}

\begin{center}
\begin{subfigure}[b]{0.45\linewidth} \begin{center}
\centerline{\includegraphics[width=1\linewidth]{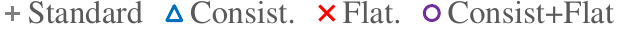}}
\end{center} \end{subfigure}%
\end{center}

\vskip -0.2in
\begin{subfigure}[b]{0.5\linewidth} \begin{center}
{\small Food101 ViT-B16, from scratch}
\end{center} \end{subfigure}%
\begin{subfigure}[b]{0.5\linewidth} \begin{center}
{\small MNLI, fine-tuning of RoBERTa-base}
\end{center} \end{subfigure}%

\begin{center}
\vskip -0.1in
\renewcommand{\nm}{fd-v}
\begin{subfigure}[b]{\wid\linewidth} \begin{center}
\centerline{\includegraphics[width=\iwid\linewidth]{\nm-un-klsame}} \capskip 
\caption{\small \DivSame}
\end{center} \end{subfigure}%
\begin{subfigure}[b]{\wid\linewidth} \begin{center}
\centerline{\includegraphics[width=\iwid\linewidth]{\nm-bi-px1st}} \capskip 
\caption{\small \sharpOne}
\end{center} \end{subfigure}%
\begin{subfigure}[b]{\wid\linewidth} \begin{center}
\centerline{\includegraphics[width=\iwid\linewidth]{\nm-bi-px2nd}} \capskip 
\caption{\small \sharpTwo}
\end{center} \end{subfigure}%
\quad
\renewcommand{\nm}{mnli}
\begin{subfigure}[b]{\wid\linewidth} \begin{center}
\centerline{\includegraphics[width=\iwid\linewidth]{\nm-un-klsame}} \capskip 
\caption{\small \DivSame}
\end{center} \end{subfigure}%
\begin{subfigure}[b]{\wid\linewidth} \begin{center}
\centerline{\includegraphics[width=\iwid\linewidth]{\nm-bi-px1st}} \capskip 
\caption{\small \sharpOne}
\end{center} \end{subfigure}%
\begin{subfigure}[b]{\wid\linewidth} \begin{center}
\centerline{\includegraphics[width=\iwid\linewidth]{\nm-bi-px2nd}} \capskip 
\caption{\small \sharpTwo}
\end{center} \end{subfigure}%

\end{center}
\vskip -0.2in 
\caption{ \label{fig:klsame-sharpOneTwo} \small
  \DivSame\ and \sharpness\ ($x$-axis) and \ggap\ ($y$-axis). 
  Each point represents a model.
  Note that each graph plots 16--20 points, and some of them are 
  overlapping. 
}
\end{figure}
\mypara{\DivSame\ correlates to \ggap\ \figInPara{Figure \ref{fig:klsame-sharpOneTwo}}} 
Figure \ref{fig:klsame-sharpOneTwo} shows 
\divSame\ or \sharpness\ values ($x$-axis) and the \ggap\ ($y$-axis).  
Each point in the figures represents a model; i.e., 
in this section, we show model-wise quantities (instead of procedure-wise), 
simulating the model selection setting.    
(Model-wise \divSame\ for model $\prm$ trained on $\Zn$ with procedure $\proc$ is 
$\Exp_{\rvPrm \sim \diPrmPZn}\Exp_{\rvX} \KL(\prob(\rvPrm,\rvX) || \prob(\prm,\rvX))$.) 
All quantities are standardized so that the mean is zero and the standard deviation is 1.  
In the figures, we distinguish the four training objectives.
In Figure \ref{fig:klsame-sharpOneTwo} (b) and (c), 
we confirm, by comparing the \lnFlat\ models (\markSharp) with the baseline 
\lNone\ models (\markNone), that encouragement of \flatness\ (by SAM) indeed reduces 
\sharpness\ ($x$-axis) as well as the \ggap\ ($y$-axis).  
This serves as a sanity check of our setup. 

We observe in Figure \ref{fig:klsame-sharpOneTwo} %
that \divSame\ shows a clear positive correlation with the \ggap, 
but the correlation of \sharpness\ with \ggap\ is less clear (e.g., 
in (b)--(c), 
the \ggap\ of the \lnSimSame\ models (\markDivSame) is much smaller than 
that of the \lnFlat\ models (\markSharp), but 
their \sharpness\ is larger). 
This is a general trend observed across multiple network architectures 
(ResNet, Transformers, MLP-Mixer, and so on), 
multiple datasets (images and texts), 
and multiple training scenarios (from scratch, fine-tuning, and distillation); 
the Appendix provides the figures for all 10 cases.  
Moreover, we found that 
while both \sharpness\ reduction and \divSame\ reduction lead to better generalization, 
in a number of cases \divSame\ and \ggap\ are reduced and yet \sharpness\ 
remains relatively high. %
We view this phenomenon as an indication that 
{\em \divSame\ has a stronger (and perhaps more essential) connection with \ggap\ than \sharpness}.

\mypara{\DivSame\ is more predictive of \ggap\ than \sharpness\ \figInPara{Table \ref{tab:corr-single}}}

Motivated by the linearity observed in the figure above, 
we define a linear predictor of the \ggap\ that takes the 
\divSame\ or \sharpness\ value as input.  
We measure the predictive power of the metrics through 
the prediction performance of this linear predictor trained with least square minimization.  
To make the \divSame\ and \sharpness\ values comparable,  
we standardize them and also \ggap.  
For each case, 
we evaluated the metrics by performing the leave-one-out cross validation on the given set of models 
(i.e., perform least squares using all models but one and evaluate the obtained linear predictor 
 on the left-out model; do this $k$ times for $k$ models and take the average).  
The results in Table \ref{tab:corr-single} show that \divSame\ outperforms \sharpness, 
and the superiority still generally holds 
even when \sharpness\ is given an `unfair' advantage of additional {\em labeled} data.  

\newcommand{\Hi}[1]{{\bf #1}}  %
\newcommand{\hi}[1]{{\em #1}}  %
\newcommand{\gd}[1]{{\em #1}}   %
\begin{table}[h]
\vskip -0.05in
\caption{\label{tab:corr-single} \small
  \Ggap\ prediction error. 
  The leave-one-out cross validation results, 
  which average the \ggap\ prediction residuals, %
  are shown. 
  See Table \ref{tab:nets} for the networks used for each case.  
  Smaller is better, and a value near 1.0 is very poor.  
  \DivSame: estimated on unlabeled data, 
  \sharpOne\ and \sharpTwo: \sharpLong. 
  \DivSame\ outperforms the \sharpness\ metrics.  
  Noting that \divSame\ had access to additional data, even though {\em unlabeled}, 
  we also show in the last 2 rows 
  \sharpness\ of test loss landscape estimated on the development data (held-out {\em labeled} data), 
  which gives the \sharpness\ metrics an `unfair' advantage by providing them with additional {\em labels}.  
  With this advantage, the predictiveness of \sharpness\ mostly improves; 
  however, still, \divSame\ is generally more predictive. 
  The {\bf bold} font indicates that the difference is statistically  significant at 95\% confidence level
  against all others including the last 2 rows.   
  The {\em italic} font indicates the best but not statistically significant. 
}
\newcommand{\myline}{\hline}
\begin{center}
\begin{small}
\begin{tabular}{|c|cccccccccc|}
\myline
Case\#     & 1    & 2    & 3    & 4    & 5    & 6    & 7    & 8    & 9   & 10 \\
\myline  
 Dataset   &\Mc{2}{ImageNet}&\Mc{2}{Food101}&C10&Dog&Car&Mnli&Qnli&Food \\
\myline
 Training scenario &\Mc{5}{From scratch}&\Mc{4}{Fine-tuning} &Distill.\\
\myline     
\DivSame   &\hi{0.13}&\Hi{0.10}&\Hi{0.17}&\Hi{0.27}&    0.15 &\Hi{0.09}& 0.19 &\Hi{0.15}&\Hi{0.18} & \Hi{0.19}\\ 
\sharpOne  &    0.21 &    0.48 &    0.75 &    0.74 &    0.84 &    0.34 &    0.24 & 0.58 & 0.98   & 0.75\\
\sharpTwo  &    0.50 &    1.00 &    0.77 &    0.72 &    0.78 &    0.70 &    0.68 & 0.59 & 0.87   & 0.59\\
\myline
\multicolumn{11}{l}{Giving the \sharpness\ metrics an `unfair' advantage by providing additional {\em labels}:}\\
\myline  
\sharpOne\ of dev. loss &     0.14 & 0.64 & 0.47 & 0.40 &    0.72 & 0.22 &\gd{0.14}&  0.36 & 0.37  & 0.62 \\
\sharpTwo\ of dev. loss &     0.45 & 0.82 & 0.58 & 0.75 &\gd{0.13}& 0.27 &    0.15 &  0.91 & 0.81  & 0.89 \\
\myline  
\end{tabular}
\end{small}
\end{center}
\vskip -0.05in
\end{table}
\subsection{Practical impact: algorithmic consequences} 
\label{sec:practical}

The results above suggest a strong connection of \divSameLong\ with generalization,
which also suggests the importance of seeking \simSame\ in the algorithmic design. 
To confirm this point, 
we first experimented with two methods (ensemble and distillation) and found that 
in both, encouraging low \divSame\ in every stage led to the best test error.  
Note that in this section we report test error instead of \ggap\ 
in order to show the practical impact.  

\newcommand{\Nodistil}{$+$} 
\newcommand{\OoA}{${_\triangle}$}
\newcommand{\OoB}{$\blacktriangle$}
\newcommand{\SivB}{$\bullet$} 
\begin{figure}[h]
\newcommand{\wid}{0.23\linewidth}
\newcommand{\widThree}{0.69\linewidth}
\newcommand{\iwid}{0.75\linewidth}
\newcommand{\maincapskip}{\vskip -0.2in}
\newcommand{\capfnt}{\small}
\newcommand{\capSkip}{\vskip -0.05in}
\begin{center}
\begin{subfigure}[b]{\wid} \begin{center} 
\centerline{\includegraphics[width=\iwid]{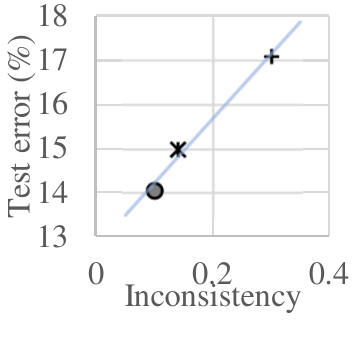}} \capSkip 
\caption{\capfnt Ensemble
}
\end{center} \end{subfigure}%
\begin{subfigure}[b]{\widThree} \begin{center} 
\centerline{\includegraphics[width=\iwid]{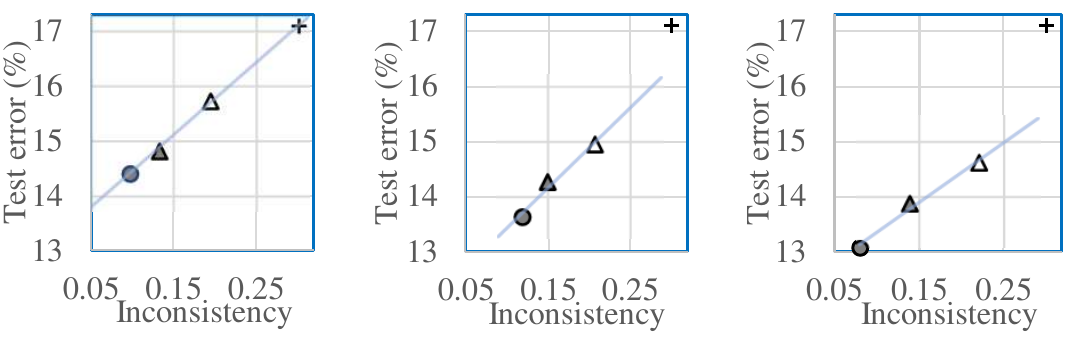}} \capSkip 
\caption{\capfnt Distillation.  Teacher's knowledge level increases from left to right.}
\end{center} \end{subfigure}%
\end{center}
\maincapskip
\caption{ \label{fig:ensemble-distill} \small
\DivSame\ and test error with ensemble and distillation.  
\\
(a) Ensemble reduces \divSame\ ($x$-axis) and test error ($y$-axis).  
 $+$ Non-ensemble (baseline), 
 $*$  Ensemble of standard models, 
 $\bullet$ Ensemble of the models trained with \simSame\ encouragement. 
 RN18 on Food101. 
\\
(b) Test error ($y$-axis) and \divSame\ of distilled models ($x$-axis). 
  Without unlabeled data. 
 \Nodistil\ No distillation, 
 \OoA\ Standard distillation, 
 \OoB\ \SimSame\ encouragement for the student, 
 \SivB\ \SimSame\ encouragement for both the student and teacher. 
 The best performance is obtained when \simSame\ is encouraged for
 both the teacher and student.  
 Food101.  Student: RN18, Teacher: RN18 (left), RN50 (middle), and fine-tuning of ImageNet-trained ENB0 (right).  
  In both (a) and (b), the average of 4 models is shown; see Appendix for the standard deviations. 
}
\end{figure}

\mypara{Ensemble \figInPara{Fig \ref{fig:ensemble-distill} (a)}}
Figure \ref{fig:ensemble-distill} (a) shows that, compared with 
non-ensemble models ($+$), ensemble models (averaging logits of two models) 
reduce both \divSame\ ($x$-axis) and test error ($y$-axis).  
It is intuitive that averaging the logits would 
cancel out the dependency on (or particularities of) the individual random sequences
inherent in each single model; thus, 
outputs of ensemble models are expected to become similar and so {\em consistent}
(as empirically observed). 
Moreover, we presume that reduction of \divSame\ is the mechanism 
that enables ensemble to achieve better generalization.
The best test error was achieved when the ensemble was made by %
two models 
trained with \simSame\ encouragement ($\bullet$ in Fig \ref{fig:ensemble-distill} (a)).  
\mypara{Distillation \figInPara{Fig \ref{fig:ensemble-distill} (b)}}
\newcommand{\teacher}{f^{(T)}}

We experimented with distillation \cite{distill14} with encouragement of \simSame\ 
for the teacher model and/or the student model. %
Figure \ref{fig:ensemble-distill} (b) shows 
the test error ($y$-axis) and the \divSame\ ($x$-axis).  
The standard model (\Nodistil) serves as the baseline. 
The knowledge level of teachers increases from left to right, and 
within each graph, when/whether to encourage \simSame\ differs from the point to point.  
The main finding here is that at all the knowledge levels of the teacher, 
both \divSame\ and test error go down 
as the pursuit of \simSame\ becomes more extensive; 
in each graph, the points for the {\em distilled} models form a nearly straight line. 
(The line appears to shift away from the baseline (\Nodistil)
as the amount of external knowledge of the teacher increases.  
 We conjecture this could be due to the change in $\mutuP$.) 
While distillation alone reduces \divSame\ and test error, 
the best results are obtained by encouraging \simSame\ 
at every stage of training (i.e., training both the teacher and student with \simSame\ encouragement). 
The results underscore the importance of \divSame\ reduction for better performance. 

\mypara{On the pursuit of the state of the art \figInPara{Table \ref{tab:c10uda},\ref{tab:food-enb4}}} 
We show two examples of obtaining further improvement of state-of-the-art 
results by adding \simSame\ encouragement.  
The first example is semi-supervised learning with CIFAR-10.  
Table \ref{tab:c10uda} shows that the performance of a state-of-the-art semi-supervised method 
can be further improved by encouraging \simSame\ 
between two training instances of this method on the unlabeled data 
(Algorithm \ref{alg:semi-codistill} in the Appendix). 
The second example is transfer learning.  We fine-tuned a public
ImageNet-trained EfficientNet with a more focused dataset Food101.  
Table \ref{tab:food-enb4} shows that 
encouraging \simSame\ between two instances of SAM training 
improved test error.  
These examples demonstrate the importance of \simSame\ encouragement 
in the pursuit of the state of the art performance.   

\begin{table}
\renewcommand{\tabcolsep}{2.4pt}
\vskip -0.1in
\newcommand{\minipwid}{0.495}
\begin{minipage}[t]{\minipwid\linewidth}
\newcommand{\FixMatchCite}{\cite{fixmatch20}}
\newcommand{\MixMatchCite}{\cite{mixmatch19}}
\newcommand{\UdaCite}{\cite{uda20}}
\caption{\label{tab:c10uda} \small
  CIFAR-10 \#train=4K, \#unlabeled=41K, WRN28-2.  
  Average and standard deviation of 5 runs.
} 
\begin{center}
\begin{small}
\vskip -0.05in 
\begin{tabular}{|c|l|l|}
\hline 
& Methods & Error (\%)        \\
\hline
\Mr{2}{Copied            } 
       &MixMatch \MixMatchCite  &    6.42 \\ %
\Mr{2}{from \FixMatchCite} &UDA \UdaCite            &    4.88 \\ %
       &FixMatch \FixMatchCite  &    4.26 \\ %
\hline         
\Mr{3}{Our results    } &FixMatch &    4.25\sd{0.15} \\
                        &UDA (no sharpening) &    4.33\sd{0.10}  \\
\cline{2-3}
                        &UDA + \lnSimSame &{\bf 3.95}\sd{0.12}  \\  %
\hline 
\multicolumn{3}{p{0.95\linewidth}}{
  RandAugment \cite{randaug20} was used for UDA and FixMatch.  
}\\  
\end{tabular}
\end{small}
\end{center}
\end{minipage}
\hskip 0.07in
\begin{minipage}[t]{\minipwid\linewidth}
\newcommand{\CiteEN}{\cite{EN19}}
\newcommand{\CiteSAM}{\cite{sam21}}
\caption{\label{tab:food-enb4} \small
  Food101, EfficientNet-B4.  
  The average and standard deviation of 6 models are shown.   
}
\begin{center}
\vskip -0.05in
\begin{small}
\begin{tabular}{|c|l|l|}
\hline
                        & Methods          & Error (\%) \\
\hline
\Mr{2}{Previous              }  & EN-B4 \CiteEN           & 8.5 \\
\Mr{2}{results }         & EN-B7 \CiteSAM           & 7.17 \\
                               & EN-B7 SAM \CiteSAM       & 7.02 \\
\hline  
\Mr{2}{Our results    } & EN-B4 SAM $\dagger$     & 6.00\sd{0.05} \\
                        \cline{2-3}
                        & EN-B4 SAM + \lnSimSame &{\bf 5.77}\sd{0.04} \\
\hline 
\multicolumn{3}{p{0.95\linewidth}}{
  $\dagger$
  The improvement 
  over EN-B7 SAM of \CiteSAM\ is due to 
  the difference in the basic setting; see Appendix. 
}\\
\end{tabular}
\end{small}
\end{center}
\end{minipage}
\end{table}

\section{Limitations and discussion}
\label{sec:discuss}
Theorem \ref{thm:gen} assumes a bounded loss $\loss(f,y) \in [0,1]$ 
though the standard loss for classification is the cross-entropy loss, which is unbounded.  
This assumption can be removed by extending the theorem 
to a more complex analysis with other moderate assumptions.  
We, however, chose to present the simpler and so more intuitive analysis with a bounded loss.  
As noted above, this theorem is intended for stochastic training procedures.  
The bound may not be useful for deterministic procedures, and this characteristics is not unique to 
our analysis but shared by the previous information-theoretic analysis of stochastic machine learning algorithms 
\cite{xu2017information,russo2019much,neu2021information}.  

We acknowledge that due to resource constraints, 
there was a limitation to the diversity of the training procedures we empirically analyzed.  
In particular, 
our study of \divSame\ and \divDiff\ in Section \ref{sec:klSumP} 
started with simpler cases of constant learning rate SGD 
and later included the cases of learning rate decay (Figure \ref{fig:good-gap-klsum-klsame}); 
consequently, 
the types of the training procedures studied in Section \ref{sec:klSumP} were skewed towards 
the constant learning rate SGD.  
The models analyzed in Figure \ref{fig:good-gap-klsum-klsame} were less diverse with CIFAR-100 
and ImageNet than with CIFAR-10, as noted above, and again this asymmetry was due to 
the resource constraints.  
On the other hand, the models analyzed in this work are in a sense more diverse  
than the previous empirical studies \cite{NB20,disag22} of the disagreement metric 
(related to our \divSame), which were restricted to the models with near zero train error. 

In Section \ref{sec:klSame}, we studied \divSame\ from the algorithmic perspective 
in comparison with the \sharpness\ metrics. 
We chose \sharpness\ for comparison due to its known correlation to generalization performance 
and the existence of the algorithm (SAM) to reduce it.  
We acknowledge that there can be other metrics that are predictive of \ggap.  

Although the correlation of the disagreement metric with \ggap\ is relatively poor 
in the settings of Section \ref{sec:klSumP} (e.g., Figure \ref{fig:kl-vs-disag}), 
it is plausible that disagreement can be as predictive of \ggap\ as \divSame\ in some settings, 
for example, 
when the confidence level of predictions is sufficiently high since in that case, 
Theorem \ref{thm:gen} should provide a theoretical basis also for disagreement though indirectly
(details are provided in Appendix \ref{app:add-inconsis-vs-disag}). 
Moreover, for improving stochastic training of deep neural networks, 
it would be useful to understand the connection between generalization performance and 
discrepancies of model outputs in general (whether \divDiff, \divSame, disagreement, or something else), 
and we hope that this work contributes to pushing forward in this direction.  

\section{Conclusion}
We presented a theory that relates generalization gap to instability and inconsistency 
of model outputs, which can be estimated on unlabeled data, and empirically showed that 
they are strongly predictive of generalization gap in various settings. 
In particular, inconsistency was shown to be a more reliable indicator of generalization gap 
than commonly used local flatness.  
We showed that algorithmic encouragement of consistency reduces inconsistency as well as 
test error, which can lead to further improvement of the state of art performance. 
Finally, our results provide a theoretical basis for existing methods 
such as co-distillation and ensemble. 

\begin{small}
\bibliographystyle{plain}
\bibliography{sc2}
\end{small}

\clearpage
\newpage
\appendix

\section*{Appendix}

\newcommand{\ellPrm}{\ell_{\rvPrm}}
\newcommand{\ellprm}{\ell_{\prm}}
\newcommand{\nrm}{1}

\section{Proof of Theorem \ref{thm:gen}}
As in the main paper, 
let $\rvZ=(\rvX,\rvY)$ be a random variable representing a labeled data point 
(data point $\rvX$ and label $\rvY$) with distribution $\diZ$.  
Let $\rvZn=\{(\rvXi,\rvYi): i=1,2,\ldots,n\}$ 
be a random variable representing iid training data of size $n$ drawn from $\diZ$.  

\newcommand{\pbarP}{\bar{\prob}_{\proc}}

We have the following lemma. 
\begin{lemma}
Given an arbitrary model parameter distribution $\diPrm^0$, let
\begin{align*}
\pbarP(x) =& \Exp_{\rvZn'} \pbarPZnp(x) = \Exp_{\rvZn'} \; \Exp_{\rvPrm \sim \diPrmPZnp} \prob(\rvPrm,x) , \\
\ellprm(z) =& \loss(\prob(\prm,x),y)-\loss(\pbarP(x),y) \mbox{  where } z=(x,y)
\end{align*}
then  
\[
- n \Exp_{\rvZn}\; \Exp_{\rvPrm \sim \diPrmPZn}\ln \Exp_{\rvZ}
\exp(-\lambda \ellPrm(\rvZ))
\leq \Exp_{\rvZn}\; \Exp_{\rvPrm \sim \diPrmPZn}  \sum_{i=1}^n \lambda \ellPrm(\rvXi,\rvYi) + \Exp_{\rvZn}\; \KL(\diPrmPZn||\diPrm^0)   .
\]
\label{lem:info}
\end{lemma}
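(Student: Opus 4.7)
The plan is to prove the lemma via the Donsker-Varadhan variational representation of KL divergence (the classical change-of-measure / PAC-Bayes inequality), combined with Jensen's inequality and the i.i.d.\ structure of the training sample. Recall that for any two distributions $Q, P$ on the parameter space and any measurable function $g$,
\[
\Exp_{\rvPrm \sim Q} g(\rvPrm) \;\leq\; \KL(Q \| P) + \ln \Exp_{\rvPrm \sim P} e^{g(\rvPrm)}.
\]
I would apply this with $Q = \diPrmPZn$ (which depends on the realized training set) and $P = \diPrm^0$ (the arbitrary reference), with a carefully designed $g$ that also depends on the realized $z^n$.

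The crucial choice is
\[
g(\theta; z^n) \;=\; -\lambda \sum_{i=1}^n \ellprm(x_i, y_i) \;-\; n \ln M_\theta,
\]
where $M_\theta := \Exp_{\rvZ} \exp(-\lambda \ellprm(\rvZ))$ is the moment generating function of the centered loss. The purpose of subtracting $n \ln M_\theta$ is self-normalization: for each fixed $\theta$, the i.i.d.\ assumption on $Z_1, \ldots, Z_n$ yields
\[
\Exp_{\rvZn} e^{g(\theta; \rvZn)} \;=\; M_\theta^{-n} \cdot \prod_{i=1}^n \Exp_{Z_i} e^{-\lambda \ellprm(Z_i)} \;=\; M_\theta^{-n} \cdot M_\theta^n \;=\; 1.
\]
This identity is the engine of the proof.

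Given the above, the chain of steps would be: (i) apply Donsker-Varadhan pointwise in $z^n$ with the above $Q, P, g$; (ii) take $\Exp_{\rvZn}$ on both sides; (iii) bound $\Exp_{\rvZn} \ln \Exp_{\rvPrm \sim \diPrm^0} e^{g}$ above by $\ln \Exp_{\rvPrm \sim \diPrm^0} \Exp_{\rvZn} e^{g}$ via Jensen (concavity of $\ln$) and Fubini, and use the normalization identity to conclude that this quantity is $\leq \ln 1 = 0$; (iv) rearrange the two contributions of $g$ inside $\Exp_{\rvZn} \Exp_{\rvPrm \sim \diPrmPZn} g$: the $-\lambda \sum_i \ellPrm$ part moves to the right-hand side of the claim, while the $-n \ln M_{\rvPrm}$ part reproduces the left-hand side.

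The main obstacle is, as usual in this style of argument, designing $g$ so that its exponential integrates to $1$ against the data distribution; once this self-normalization is in place, the rest is mechanical. Two easy technical checks: $\ln M_\theta$ is well defined and bounded since $\ellprm \in [-1,1]$ (inherited from $\loss \in [0,1]$) implies $M_\theta \in [e^{-\lambda}, e^\lambda]$; and Fubini is trivially justified because $e^g \geq 0$. Finally, the arbitrariness of $\diPrm^0$ is essential for the lemma's intended use in Theorem~\ref{thm:gen}: choosing $\diPrm^0 = \Exp_{\rvZn'} \diPrmPZnp$ will convert $\Exp_{\rvZn} \KL(\diPrmPZn \| \diPrm^0)$ into the information-theoretic instability $\mutuP$.
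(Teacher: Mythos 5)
Your proof is correct and is essentially the paper's argument: the same self-normalized choice of $g$ (so that $\Exp_{\rvZn}\,\Exp_{\rvPrm \sim \diPrm^0} e^{g}=1$ by the i.i.d.\ structure of $\rvZn$), the same change of measure from $\diPrmPZn$ to $\diPrm^0$, and the same Jensen step over $\rvZn$. The only difference is presentational: you invoke the Donsker--Varadhan inequality as a known fact, whereas the paper derives it inline by exhibiting the optimizing Gibbs posterior $\diPrm_* \propto \diPrm^0 e^{g}$ and taking a supremum over posteriors.
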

\begin{proof}
Let $\diPrm_*$ be a model parameter distribution such that 
\[
\diPrm_* \propto \diPrm^0 \exp
\left[
\sum_{i=1}^n 
\left(-\lambda \ellPrm(\rvXi,\rvYi) 
-  \ln \Exp_{\rvZ}
\exp(-\lambda \ellPrm(\rvZ))\right)
\right] .
\]
We have
\begin{align*}
&\Exp_{\rvZn}
\exp \left[   
\Exp_{\rvPrm \sim \diPrmPZn} 
\sum_{i=1}^n 
\left(-\lambda \ellPrm(\rvXi,\rvYi) 
-  \ln \Exp_{\rvZ}
\exp(-\lambda \ellPrm(\rvZ))\right)
- \KL(\diPrmPZn||\diPrm^0) 
\right] \\
\leq& 
\Exp_{\rvZn} \sup_{\diPrm}
\exp \left[   
\Exp_{\rvPrm \sim \diPrm} \sum_{i=1}^n 
\left(-\lambda \ellPrm(\rvXi,\rvYi) 
-  \ln \Exp_{\rvZ}
\exp(-\lambda \ellPrm(\rvZ))\right)
- \KL(\diPrm||\diPrm^0) 
\right] \\
=& \Exp_{\rvZn} \sup_{\diPrm} \Exp_{\rvPrm \sim \diPrm^0}  
\exp \left[   
  \sum_{i=1}^n 
\left(-\lambda \ellPrm(\rvXi,\rvYi) 
-  \ln \Exp_{\rvZ}
\exp(-\lambda \ellPrm(\rvZ))\right) - \KL(\diPrm||\diPrm_*)
\right] \\
=& \Exp_{\rvZn}  \Exp_{\rvPrm \sim \diPrm^0}
\exp \left[   
  \sum_{i=1}^n 
\left(-\lambda \ellPrm(\rvXi,\rvYi) 
-  \ln \Exp_{\rvZ}
\exp(-\lambda \ellPrm(\rvZ))\right)
\right] = 1. 
\end{align*}
The first inequality takes $\sup$ over all probability distributions of model parameters.  
The first equality can be verified using the definition of the KL divergence.  
The second equality follows from the fact that the supreme is attained by $\diPrm=\diPrm_*$.
The last equality uses the fact that $(\rvXi,\rvYi)$ for $i=1,\ldots,n$ are iid samples drawn from $\diZ$.  
The desired bound follows from Jensen's inequality and the convexity of $\exp(\cdot)$. 
\end{proof}

\begin{proof}[Proof of Theorem~\ref{thm:gen}]
Using the notation of Lemma \ref{lem:info}, 
we have 
\begin{align}
&\Exp_{\rvPrm \sim \diPrmPZn}\ln \Exp_{\rvZ}\exp(-\lambda \ellPrm(\rvZ))
\leq 
\Exp_{\rvPrm \sim \diPrmPZn} \Exp_{\rvZ} \left[ \exp(-\lambda \ellPrm(\rvZ)-1 \right] \nonumber \\
&\leq
-\lambda \Exp_{\rvPrm \sim \diPrmPZn} 
\Exp_{\rvZ} \ellPrm(\rvZ)
+ \psi(\lambda) \lambda^2
\Exp_{\rvPrm \sim \diPrmPZn} 
\Exp_{\rvZ} \ellPrm(\rvZ)^2 \nonumber\\
&\leq
-\lambda \Exp_{\rvPrm \sim \diPrmPZn} 
\Exp_{\rvZ} \ellPrm(\rvZ)
+ \frac{\gamma^2}{4} \psi(\lambda) \lambda^2
\Exp_{\rvPrm \sim \diPrmPZn} 
\Exp_{\rvX} \|\prob(\rvPrm,\rvX)-\pbarP(\rvX)\|_\nrm^2  .
\label{eq:one}
\end{align}
The first inequality uses $\ln u \leq u-1$.
The second inequality uses the fact that $\psi(\lambda)$ is increasing in $\lambda$ and 
$-\lambda \ellprm(z) \leq \lambda$.
The third inequality uses the Lipschitz assumption of the loss function.

We also have the following from 
the triangle inequality of norms, Jensen's inequality, 
the relationship between the 1-norm and the total variation distance of distributions, 
and Pinsker's inequality. 
 \begin{align}
& \Exp_{\rvZn} \Exp_{\rvPrm \sim\diPrmPZn}\Exp_\rvX \|\prob(\rvPrm,\rvX)-\pbarP(\rvX)\|_\nrm^2 \nonumber\\
\leq& 2 \Exp_{\rvZn} \Exp_{\rvPrm \sim\diPrmPZn}\Exp_\rvX \left[  
              \|\prob(\rvPrm,\rvX)-\pbarPZn(\rvX) \|_\nrm^2 
            + \|\pbarPZn(\rvX)-\pbarP(\rvX)\|_\nrm^2 
\right] \nonumber\\
\leq&  2 \Exp_{\rvZn} \Exp_{\rvPrm,\rvPrm' \sim\diPrmPZn}\Exp_\rvX \|\prob(\rvPrm,\rvX)-\prob(\rvPrm',\rvX) \|_\nrm^2 
     + 2 \Exp_{\rvZn} \Exp_\rvX \|\pbarPZn(\rvX)-\pbarP(\rvX)\|_\nrm^2 \nonumber\\
\leq&  4 \Exp_{\rvZn} \Exp_{\rvPrm,\rvPrm' \sim\diPrmPZn}\Exp_\rvX \KL(\prob(\rvPrm,\rvX) || \prob(\rvPrm',\rvX)) 
     + 2 \Exp_{\rvZn} \Exp_\rvX \|\pbarPZn(\rvX)-\pbarP(\rvX)\|_\nrm^2 \nonumber\\
=   &  4 \klSameP
     + 2 \Exp_{\rvZn} \Exp_\rvX \|\pbarPZn(\rvX)-\pbarP(\rvX)\|_\nrm^2 
\leq   4 \klSameP 
     + 2 \Exp_{\rvZn} \Exp_{\rvZn'} 
         \Exp_\rvX \|\pbarPZn(\rvX)-\pbarPZnp(\rvX)\|_\nrm^2 \nonumber\\        
\leq& 4 \klSameP 
     + 4 \Exp_{\rvZn} \Exp_{\rvZn'} \Exp_\rvX \KL(\pbarPZn(\rvX) || \pbarPZnp(\rvX)) 
= 4 \left( \klSameP + \klDiffP \right) = 4 \klSumP  .
\label{eq:Dp}
\end{align}
Using \eqref{eq:one}, \eqref{eq:Dp}, and Lemma \ref{lem:info}, we obtain 
\begin{align}
\lambda \Exp_{\rvZn}\, \Exp_{\rvPrm \sim \diPrmPZn} 
\Exp_{\rvZ} \left[ n \ellPrm(\rvZ)
- 
\sum_{i=1}^n \ellPrm(\rvXi,\rvYi) \right] \leq  
 n \gamma^2 \psi(\lambda) \lambda^2  \, \klSumP
+
\Exp_{\rvZn}\; \KL( \diPrmPZn || \diPrm^0) .
\label{eq:ineq} 
\end{align}
Set $\diPrm^0= \Exp_{\rvZn'} \diPrmPZnp $ so that we have 
\begin{align}
\mutuP
=\Exp_{\rvZn}\; \KL(\diPrmPZn || \diPrm^0) .
\label{eq:Ip}
\end{align}
Also note that 
$\pbarP$ in $\ellPrm$ cancels out 
since $\rvZn$ is iid samples of $\diZ$, and so 
using the notation for the test loss and empirical loss defined in 
Theorem \ref{thm:gen}, 
we have 
\begin{align}
\lambda \Exp_{\rvZn}\, \Exp_{\rvPrm \sim \diPrmPZn} \Exp_{\rvZ} \left[ 
  n \ellPrm(\rvZ) - \sum_{i=1}^n \ellPrm(\rvXi,\rvYi) 
\right] 
= 
n \lambda \Exp_{\rvZn}\, \Exp_{\rvPrm \sim \diPrmPZn} \left[ 
   \testLoss(\rvPrm) - \trnLoss(\rvPrm,\rvZn) 
\right] .
\label{eq:cancelout-pbar} 
\end{align}
\eqref{eq:ineq}, \eqref{eq:Ip} and \eqref{eq:cancelout-pbar} imply the result.  
\end{proof}

\section{Additional figures} 

Figure \ref{fig:kl-vs-disag} supplements 
`Relation to {\em disagreement}' at the end of Section \ref{sec:theory}. 
It shows an example where the behavior of \divSame\ is different from disagreement.  
Training was done on 10\% of ImageNet with the {\em seed procedure} 
(tuned to perform well) with training length variations described in Table \ref{tab:core-procedure} below.  
Essentially, 
in this example, 
\divSame\ goes up like \ggap, 
and disagreement goes down like test error and goes up in the end, 
as training becomes longer.

Figure \ref{fig:klsame-vs-kldiff-more} supplements
Figure \ref{fig:klsame-vs-kldiff} in Section \ref{sec:klSumP}. 
It shows \divSame\ and \divDiffLong\ of the models trained with SGD with 
constant learning rates without iterate averaging.  
In this setting of high final randomness, larger learning rates make
\divSame\ larger while \divDiff\ is mostly unaffected.  
By contrast, Figure \ref{fig:klsame-vs-kldiff-w} shows that 
when final randomness is low (due to iterate averaging in this case), 
both \divSame\ and \divDiff\ are predictive of \ggap\ 
both within and across the learning rates.  

Figure \ref{fig:klsame-org}--\ref{fig:px2nd-org} 
supplement Figure \ref{fig:klsame-sharpOneTwo} in Section \ref{sec:klSame}. 
These figures show the relation of \divSame\ and sharpness 
to \ggap. 
Note that each graph has at least 16 points, and some of them (typically 
for the models trained with the same procedure) 
are overlapping.  
\DivSame\ shows a stronger correlation with \ggap\ than sharpness does. 
\begin{figure}[h]
\newcommand{\wid}{0.2}
\newcommand{\iwid}{0.76}
\newcommand{\capskip}{\vskip -0.05in}
\newcommand{\nm}{in-red-trnloss}
\begin{center}
\begin{subfigure}[b]{\wid\linewidth} \begin{center}
\centerline{\includegraphics[width=\iwid\linewidth]{\nm-gap}} \capskip 
\caption{\small \Ggap}
\end{center} \end{subfigure}%
\begin{subfigure}[b]{\wid\linewidth} \begin{center}
\centerline{\includegraphics[width=\iwid\linewidth]{\nm-kl}} \capskip 
\caption{\small \DivSame}
\end{center} \end{subfigure}%
\quad
\begin{subfigure}[b]{\wid\linewidth} \begin{center}
\centerline{\includegraphics[width=\iwid\linewidth]{\nm-disag}} \capskip 
\caption{\small Disagreement}
\end{center} \end{subfigure}%
\begin{subfigure}[b]{\wid\linewidth} \begin{center}
\centerline{\includegraphics[width=\iwid\linewidth]{\nm-terr}} \capskip 
\caption{\small Test error}
\end{center} \end{subfigure}%
\end{center}
\vskip -0.2in 
\caption{ \label{fig:kl-vs-disag} \small
  \DivSame\ $\klSameP$ and disagreement ($y$-axis) in comparison with \ggap\ and test error ($y$-axis).  
  The $x$-axis is train loss. 
  The arrows indicate the direction of training becoming longer.  
  Each point is the average of 16 instances.  
  Training was done on 10\% of ImageNet 
  with the {\em seed procedure} 
  (tuned to perform well) 
  with training length variations; see Table \ref{tab:core-procedure}.  
  In this example, essentially, 
  \divSame\ goes up like \ggap, 
  and disagreement goes down like test error and goes up in the end, 
  as training becomes longer. 
}
\end{figure}
\begin{figure}[h]
\newcommand{\wid}{0.159}
\newcommand{\iwid}{1}
\newcommand{\capskip}{\vskip -0.05in}
\newcommand{\nm}{dummy}
\newcommand{\xy}{dummy}
\newcommand{\nmOne}{vr1v-c10-mb256-th0.3}
\newcommand{\nmTwo}{vr1v-c100-mb64-th2}
\newcommand{\nmThree}{vr1v-in-th3}
\newcommand{\capOne}{\DivSame}
\newcommand{\capTwo}{\DivDiff}
\newcommand{\xyOne}{gap-kl}
\newcommand{\xyTwo}{gap-barDiv}
\begin{center}

\begin{subfigure}[b]{1\linewidth} \begin{center}
\centerline{\includegraphics[width=1\linewidth]{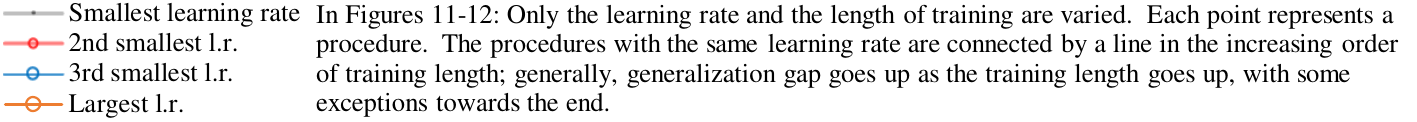}}
\end{center} \end{subfigure}%
\vskip -0.2in

\begin{subfigure}[b]{0.33\linewidth} \begin{center}
  {\small CIFAR-10}
\end{center} \end{subfigure}%
\begin{subfigure}[b]{0.33\linewidth} \begin{center}
  {\small CIFAR-100}
\end{center} \end{subfigure}%
\begin{subfigure}[b]{0.33\linewidth} \begin{center}
  {\small ImageNet}
\end{center} \end{subfigure}%

\renewcommand{\nm}{\nmOne}
\renewcommand{\xy}{\xyOne}
\begin{subfigure}[b]{\wid\linewidth} \begin{center}
\centerline{\includegraphics[width=\iwid\linewidth]{\nm-\xy}} \capskip 
\caption{\small \capOne}
\end{center} \end{subfigure}%
\renewcommand{\xy}{\xyTwo}
\begin{subfigure}[b]{\wid\linewidth} \begin{center}
\centerline{\includegraphics[width=\iwid\linewidth]{\nm-\xy}} \capskip 
\caption{\small \capTwo}
\end{center} \end{subfigure}%
\renewcommand{\nm}{\nmTwo}
\renewcommand{\xy}{\xyOne}
\begin{subfigure}[b]{\wid\linewidth} \begin{center}
\centerline{\includegraphics[width=\iwid\linewidth]{\nm-\xy}} \capskip 
\caption{\small \capOne}
\end{center} \end{subfigure}%
\renewcommand{\xy}{\xyTwo}
\begin{subfigure}[b]{\wid\linewidth} \begin{center}
\centerline{\includegraphics[width=\iwid\linewidth]{\nm-\xy}} \capskip 
\caption{\small \capTwo}
\end{center} \end{subfigure}%
\renewcommand{\nm}{\nmThree}
\renewcommand{\xy}{\xyOne}
\begin{subfigure}[b]{\wid\linewidth} \begin{center}
\centerline{\includegraphics[width=\iwid\linewidth]{\nm-\xy}} \capskip 
\caption{\small \capOne}
\end{center} \end{subfigure}%
\renewcommand{\xy}{\xyTwo}
\begin{subfigure}[b]{\wid\linewidth} \begin{center}
\centerline{\includegraphics[width=\iwid\linewidth]{\nm-\xy}} \capskip 
\caption{\small \capTwo}
\end{center} \end{subfigure}%
\vskip -0.1in 
\end{center}
\vskip -0.2in 
\caption{ \label{fig:klsame-vs-kldiff-more} \small
  \DivSame\ $\klSameP$ (left) and \divDiff\ $\klDiffP$ (right) ($x$-axis) and 
  \ggap\ ($y$-axis). 
  Supplement to Figure \ref{fig:klsame-vs-kldiff} in Section \ref{sec:klSumP}.  
  SGD with a constant learning rate.  {\bf No iterate averaging}, 
  and therefore, high randomness in the final state.  
  Same procedures (and models) as in Figure \ref{fig:v-gap-klsum}. 
  A larger learning rate makes \divSame\ larger, but \divDiff\ is mostly unaffected.  
}
\renewcommand{\nmOne}{vr1w-c10-mb256-th0.3}
\renewcommand{\nmTwo}{vr1w-c100-mb64-th2}
\renewcommand{\nmThree}{vr1w-in-th3}
\renewcommand{\capOne}{\DivSame}
\renewcommand{\capTwo}{\DivDiff}
\renewcommand{\xyOne}{gap-kl}
\renewcommand{\xyTwo}{gap-barDiv}
\begin{center}
\begin{subfigure}[b]{0.33\linewidth} \begin{center}
  {\small CIFAR-10}
\end{center} \end{subfigure}%
\begin{subfigure}[b]{0.33\linewidth} \begin{center}
  {\small CIFAR-100}
\end{center} \end{subfigure}%
\begin{subfigure}[b]{0.33\linewidth} \begin{center}
  {\small ImageNet}
\end{center} \end{subfigure}%

\renewcommand{\nm}{\nmOne}
\renewcommand{\xy}{\xyOne}
\begin{subfigure}[b]{\wid\linewidth} \begin{center}
\centerline{\includegraphics[width=\iwid\linewidth]{\nm-\xy}} \capskip 
\caption{\small \capOne}
\end{center} \end{subfigure}%
\renewcommand{\xy}{\xyTwo}
\begin{subfigure}[b]{\wid\linewidth} \begin{center}
\centerline{\includegraphics[width=\iwid\linewidth]{\nm-\xy}} \capskip 
\caption{\small \capTwo}
\end{center} \end{subfigure}%
\renewcommand{\nm}{\nmTwo}
\renewcommand{\xy}{\xyOne}
\begin{subfigure}[b]{\wid\linewidth} \begin{center}
\centerline{\includegraphics[width=\iwid\linewidth]{\nm-\xy}} \capskip 
\caption{\small \capOne}
\end{center} \end{subfigure}%
\renewcommand{\xy}{\xyTwo}
\begin{subfigure}[b]{\wid\linewidth} \begin{center}
\centerline{\includegraphics[width=\iwid\linewidth]{\nm-\xy}} \capskip 
\caption{\small \capTwo}
\end{center} \end{subfigure}%
\renewcommand{\nm}{\nmThree}
\renewcommand{\xy}{\xyOne}
\begin{subfigure}[b]{\wid\linewidth} \begin{center}
\centerline{\includegraphics[width=\iwid\linewidth]{\nm-\xy}} \capskip 
\caption{\small \capOne}
\end{center} \end{subfigure}%
\renewcommand{\xy}{\xyTwo}
\begin{subfigure}[b]{\wid\linewidth} \begin{center}
\centerline{\includegraphics[width=\iwid\linewidth]{\nm-\xy}} \capskip 
\caption{\small \capTwo}
\end{center} \end{subfigure}%
\vskip -0.1in 
\end{center}
\vskip -0.2in 
\caption{ \label{fig:klsame-vs-kldiff-w} \small
  \DivSame\ $\klSameP$ (left) and \divDiff\ $\klDiffP$ (right) ($x$-axis) and 
  \ggap\ ($y$-axis). 
  SGD with a constant learning rate {\bf with iterate averaging}, 
  and therefore, low randomness in the final state. 
  Same procedures (and models) as in Figure \ref{fig:w-gap-klsum}. 
  Both \divSame\ and \divDiff\ are predictive of \ggap\ 
  across the learning rates.  
}
\end{figure}

\begin{figure}
\newcommand{\lBoth}{Consist+Flat}
\newcommand{\pur}[1]{\color{purple}{#1}}
\newcommand{\blu}[1]{\color{cyan}{#1}}
\begin{center} \begin{small}
{\small Legend for Case\#10 (distillation) } %
\begin{tabular}{|c|c|c|c|c|c|c|}
\hline
        & $+$      & $-$      & \blu{$_\triangle$} & \blu{$\blacktriangle$} & \pur{$\circ$} & \pur{$\bullet$} \\
\hline        
Teacher & \lNone   & \lNone   & \lSimSame      & \lSimSame      & \lBoth   & \lBoth \\
Student & \lNone   & \lSimSame & \lNone        & \lSimSame      & \lNone   & \lSimSame \\
\hline
\end{tabular} 
\end{small} \end{center}
\newcommand{\wid}{0.159}
\newcommand{\iwid}{1}
\newcommand{\capskip}{\vskip -0.05in}
\newcommand{\xaxis}{un-klsame}
\begin{center}

\begin{subfigure}[b]{0.7\linewidth} \begin{center}
\centerline{\includegraphics[width=1\linewidth]{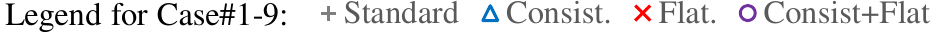}}
\end{center} \end{subfigure}%
\vskip -0.1in

\begin{subfigure}[b]{\wid\linewidth} \begin{center}
\centerline{\includegraphics[width=\iwid\linewidth]{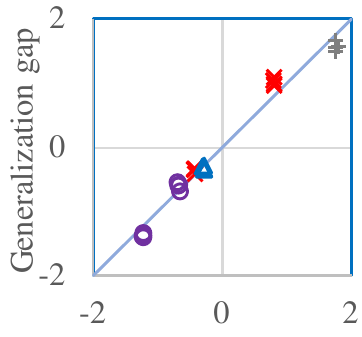}} \capskip 
\caption{\small Case\#1}
\end{center} \end{subfigure}%
\begin{subfigure}[b]{\wid\linewidth} \begin{center}
\centerline{\includegraphics[width=\iwid\linewidth]{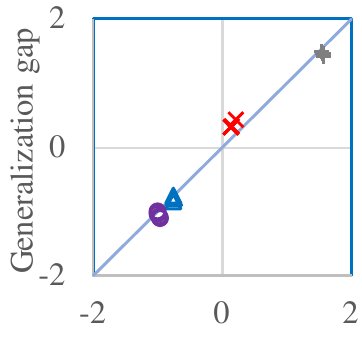}} \capskip 
\caption{\small Case\#2}
\end{center} \end{subfigure}%
\begin{subfigure}[b]{\wid\linewidth} \begin{center}
\centerline{\includegraphics[width=\iwid\linewidth]{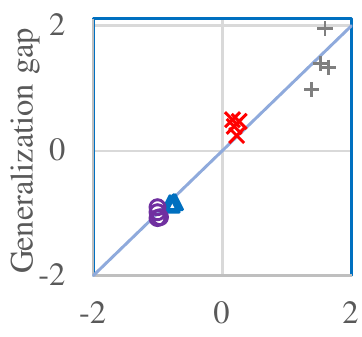}} \capskip 
\caption{\small Case\#3}
\end{center} \end{subfigure}%
\begin{subfigure}[b]{\wid\linewidth} \begin{center}
\centerline{\includegraphics[width=\iwid\linewidth]{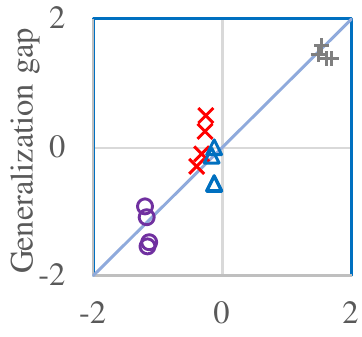}} \capskip 
\caption{\small Case\#4}
\end{center} \end{subfigure}%
\begin{subfigure}[b]{\wid\linewidth} \begin{center}
\centerline{\includegraphics[width=\iwid\linewidth]{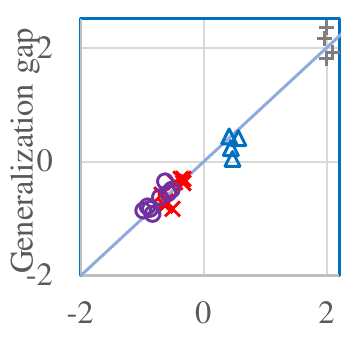}} \capskip 
\caption{\small Case\#5}
\end{center} \end{subfigure}%
\quad
\begin{subfigure}[b]{\wid\linewidth} \begin{center}
\centerline{\includegraphics[width=\iwid\linewidth]{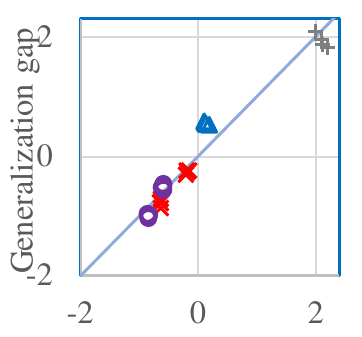}} \capskip 
\caption{\small Case\#6}
\end{center} \end{subfigure}%
\begin{subfigure}[b]{\wid\linewidth} \begin{center}
\centerline{\includegraphics[width=\iwid\linewidth]{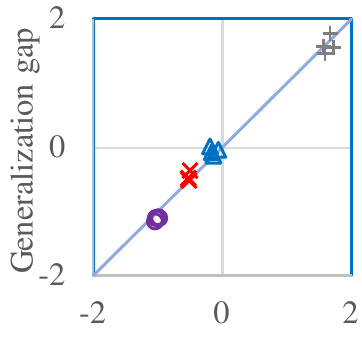}} \capskip 
\caption{\small Case\#7}
\end{center} \end{subfigure}%
\begin{subfigure}[b]{\wid\linewidth} \begin{center}
\centerline{\includegraphics[width=\iwid\linewidth]{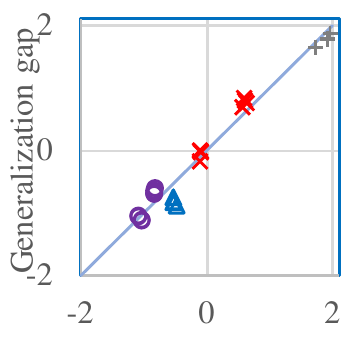}} \capskip 
\caption{\small Case\#8}
\end{center} \end{subfigure}%
\begin{subfigure}[b]{\wid\linewidth} \begin{center}
\centerline{\includegraphics[width=\iwid\linewidth]{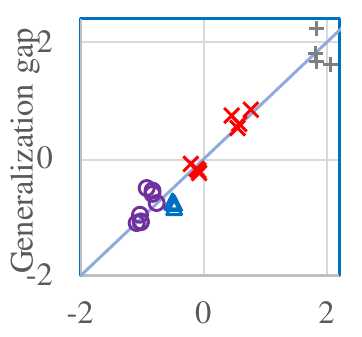}} \capskip 
\caption{\small Case\#9}
\end{center} \end{subfigure}%
\begin{subfigure}[b]{\wid\linewidth} \begin{center}
\centerline{\includegraphics[width=\iwid\linewidth]{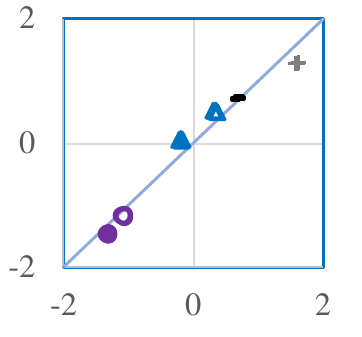}} \capskip 
\caption{\small Case\#10}
\end{center} \end{subfigure}%
\end{center}
\vskip -0.2in 
\caption{ \label{fig:klsame-org} \small
  Supplement to Figure \ref{fig:klsame-sharpOneTwo} in Section \ref{sec:klSame}.  
  \DivSame\ ($x$-axis) and \ggap\ ($y$-axis) for all the 10 cases.  
  All values are standardized so that the average is 0 and the standard deviation is 1.  
}
\renewcommand{\xaxis}{bi-px1st}
\begin{center}
\begin{subfigure}[b]{\wid\linewidth} \begin{center}
\centerline{\includegraphics[width=\iwid\linewidth]{in-rn-\xaxis}} \capskip 
\caption{\small Case\#1}
\end{center} \end{subfigure}%
\begin{subfigure}[b]{\wid\linewidth} \begin{center}
\centerline{\includegraphics[width=\iwid\linewidth]{in-v-\xaxis}} \capskip 
\caption{\small Case\#2}
\end{center} \end{subfigure}%
\begin{subfigure}[b]{\wid\linewidth} \begin{center}
\centerline{\includegraphics[width=\iwid\linewidth]{fd-v-\xaxis}} \capskip 
\caption{\small Case\#3}
\end{center} \end{subfigure}%
\begin{subfigure}[b]{\wid\linewidth} \begin{center}
\centerline{\includegraphics[width=\iwid\linewidth]{fd-mx-\xaxis}} \capskip 
\caption{\small Case\#4}
\end{center} \end{subfigure}%
\begin{subfigure}[b]{\wid\linewidth} \begin{center}
\centerline{\includegraphics[width=\iwid\linewidth]{c10co-\xaxis}} \capskip 
\caption{\small Case\#5}
\end{center} \end{subfigure}%
\quad
\begin{subfigure}[b]{\wid\linewidth} \begin{center}
\centerline{\includegraphics[width=\iwid\linewidth]{car-\xaxis}} \capskip 
\caption{\small Case\#6}
\end{center} \end{subfigure}%
\begin{subfigure}[b]{\wid\linewidth} \begin{center}
\centerline{\includegraphics[width=\iwid\linewidth]{dog-\xaxis}} \capskip 
\caption{\small Case\#7}
\end{center} \end{subfigure}%
\begin{subfigure}[b]{\wid\linewidth} \begin{center}
\centerline{\includegraphics[width=\iwid\linewidth]{mnli-\xaxis}} \capskip 
\caption{\small Case\#8}
\end{center} \end{subfigure}%
\begin{subfigure}[b]{\wid\linewidth} \begin{center}
\centerline{\includegraphics[width=\iwid\linewidth]{qnli-\xaxis}} \capskip 
\caption{\small Case\#9}
\end{center} \end{subfigure}%
\begin{subfigure}[b]{\wid\linewidth} \begin{center}
\centerline{\includegraphics[width=\iwid\linewidth]{dds-\xaxis}} \capskip 
\caption{\small Case\#10}
\end{center} \end{subfigure}%
\end{center}
\vskip -0.2in 
\caption{ \label{fig:px1st-org} \small
  Supplement to Figure \ref{fig:klsame-sharpOneTwo} in Section \ref{sec:klSame}.  
  \sharpOne\ ($x$-axis) and \ggap\ ($y$-axis) for all the 10 cases.  
  All values are standardized.  
  Same legend as in Figure \ref{fig:klsame-org}.
}
\renewcommand{\xaxis}{bi-px2nd}
\begin{center}
\begin{subfigure}[b]{\wid\linewidth} \begin{center}
\centerline{\includegraphics[width=\iwid\linewidth]{in-rn-\xaxis}} \capskip 
\caption{\small Case\#1}
\end{center} \end{subfigure}%
\begin{subfigure}[b]{\wid\linewidth} \begin{center}
\centerline{\includegraphics[width=\iwid\linewidth]{in-v-\xaxis}} \capskip 
\caption{\small Case\#2}
\end{center} \end{subfigure}%
\begin{subfigure}[b]{\wid\linewidth} \begin{center}
\centerline{\includegraphics[width=\iwid\linewidth]{fd-v-\xaxis}} \capskip 
\caption{\small Case\#3}
\end{center} \end{subfigure}%
\begin{subfigure}[b]{\wid\linewidth} \begin{center}
\centerline{\includegraphics[width=\iwid\linewidth]{fd-mx-\xaxis}} \capskip 
\caption{\small Case\#4}
\end{center} \end{subfigure}%
\begin{subfigure}[b]{\wid\linewidth} \begin{center}
\centerline{\includegraphics[width=\iwid\linewidth]{c10co-\xaxis}} \capskip 
\caption{\small Case\#5}
\end{center} \end{subfigure}%
\quad
\begin{subfigure}[b]{\wid\linewidth} \begin{center}
\centerline{\includegraphics[width=\iwid\linewidth]{car-\xaxis}} \capskip 
\caption{\small Case\#6}
\end{center} \end{subfigure}%
\begin{subfigure}[b]{\wid\linewidth} \begin{center}
\centerline{\includegraphics[width=\iwid\linewidth]{dog-\xaxis}} \capskip 
\caption{\small Case\#7}
\end{center} \end{subfigure}%
\begin{subfigure}[b]{\wid\linewidth} \begin{center}
\centerline{\includegraphics[width=\iwid\linewidth]{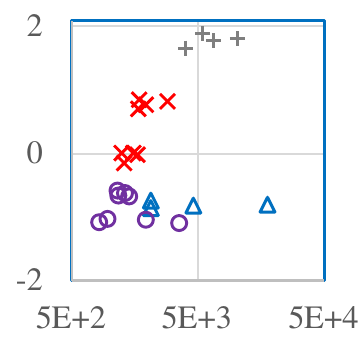}} \capskip 
\caption{\small Case\#8}
\end{center} \end{subfigure}%
\begin{subfigure}[b]{\wid\linewidth} \begin{center}
\centerline{\includegraphics[width=\iwid\linewidth]{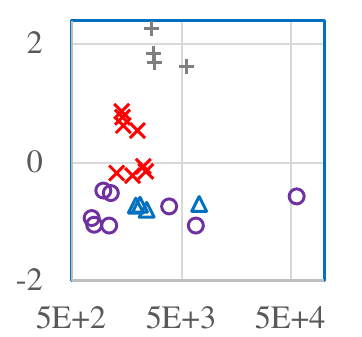}} \capskip 
\caption{\small Case\#9}
\end{center} \end{subfigure}%
\begin{subfigure}[b]{\wid\linewidth} \begin{center}
\centerline{\includegraphics[width=\iwid\linewidth]{dds-\xaxis}} \capskip 
\caption{\small Case\#10}
\end{center} \end{subfigure}%
\end{center}
\vskip -0.2in 
\caption{ \label{fig:px2nd-org} \small
  Supplement to Figure \ref{fig:klsame-sharpOneTwo} in Section \ref{sec:klSame}.  
  \sharpTwo\ ($x$-axis) and \ggap\ ($y$-axis) for all the 10 cases.  
  All values are standardized except that 
  for the $x$-axis of Case\#8 and 9, non-standardized values are shown in the log-scale 
  for better readability.  
  Same legend as in Figure \ref{fig:klsame-org}.
}
\end{figure}

\section{Experimental details}
\label{app:exp-details}

All the experiments were done using GPUs (A100 or older).  
\subsection{Details of the experiments in Section \ref{sec:klSumP}}
\label{app:klSumP}

The goal of the experiments reported in Section \ref{sec:klSumP} was to find 
whether/how the predictiveness of $\klSumP$ is affected by the diversity of the 
training procedures in comparison.  
To achieve this goal, we chose the training procedures to experiment with in the following four steps.  
\begin{enumerate} 
\item {\em Choose a network architecture and the size of training set}. 
  First, for each dataset, we chose a network architecture and the size of the training set.  
  The training set was required to be smaller than the official set 
  so that disjoint training sets can be obtained for estimating \divDiff.  
  For the network architecture, 
  we chose relatively small residual nets (WRN-28-2 for CIFAR-10/100 and ResNet-50 for ImageNet) 
  to reduce the computational burden. 
\item {\em Choose a seed procedure}.  
  Next, for each dataset, we chose a procedure that performs reasonably well with the chosen 
  network architecture and the size of training data, and we call this procedure a {\em seed procedure}.  
  This was done by referring to the previous studies \cite{fixmatch20,sam21} 
  and performing some tuning on the development data considering that the training data is smaller 
  than in \cite{fixmatch20,sam21}.  
  This step was for making sure to include high-performing (and so practically interesting) procedures 
  in our empirical study. 
\item {\em Make core procedures from the seed procedure}.
  For each dataset, we made {\em core procedures} from the seed procedure 
  by varying the learning rate, training length, and the presence/absence of iterate averaging. 
  Table \ref{tab:core-procedure} shows the resulting core procedures.  
\item {\em Diversify by changing an attribute}.
  To make the procedures more diverse, 
  for each dataset, 
  we generated additional procedures by changing {\em one} attribute of the core procedure.  
  This was done for all the pairs of the core procedures in Table \ref{tab:core-procedure}
  and the attributes in Table \ref{tab:attributes}. 
\end{enumerate}

\newcommand{\figConst}{\ref{fig:w-gap-klsum}--\ref{fig:klsame-vs-kldiff} and \ref{fig:w-terr-disag}}
\newcommand{\ms}[1]{\{#1\}}
\newcommand{\seed}{$^*$}
\begin{table}[h]
\caption{\label{tab:core-procedure} \small
  Core training procedures of the experiments in Section \ref{sec:klSumP}.  
  The core training procedures consist of the exhaustive combinations of these attributes. 
  The seed procedure attributes are indicated by \seed\ when there are multiple values. 
  The optimizer was fixed to SGD with Nesterov momentum 0.9.  
\\
  $\dagger$ More precisely, 
\{ 25, 50, \ldots, 250, 300, \ldots, 500, 600, \ldots, 1000, 1200, \ldots, 2000 \}  
  so that the interval gradually increased from 25 to 200.  
  $\ddagger$ 
  After training, a few procedures with very high training loss were excluded 
  from the analysis.  The cut-off was 0.3 (CIFAR-10), 2.0 (CIFAR-100), and 3.0 (ImageNet), 
  reflecting the number of classes (10, 100, and 1000).   
\\
  The choice of the constant scheduling is for starting the empirical study with 
  simpler cases by avoiding the complex effects of decaying learning rates, as mentioned 
  in the main paper; also, we found that in these settings, constant learning rates 
  rival the cosine scheduling as long as iterate averaging is performed.  
}
\vskip -0.1in  
\begin{center} \begin{small} \begin{tabular}{|c|c|c|}
\hline
                & CIFAR-10/100 & ImageNet \\
\hline                
Network         & WRN-28-2 & ResNet-50 \\
Training data size & 4K & 120K \\
Learning rate   & \ms{0.005, 0.01, 0.025\seed, 0.05} & \ms{1/64, 1/32, 1/16\seed, 1/8} \\
Weight decay    & 2e-3     & 1e-3 \\
Schedule        & Constant & Constant \\
Iterate averaging & \ms{EMA\seed, None} & \ms{EMA\seed, None}   \\
Epochs          & \{ 25, \ldots, 1000\seed, \ldots, 2000 \}$\dagger$$\ddagger$
                & \{10, 20, \ldots, 200\seed \}$\ddagger$  \\
Mini-batch size & 64       & 512 \\
Data augmentation & Standard+Cutout & Standard \\
Label smoothing & --       & 0.1 \\
\hline
\end{tabular} \end{small} \end{center} 
\caption{\label{tab:attributes} \small
  Attributes that were varied for making variations of core procedures. 
  Only one of the attributes was varied at a time.  
} 
\vskip -0.1in  
\begin{center} \begin{small} \begin{tabular}{|c|c|c|c|}
\hline
                & CIFAR-10 & CIFAR-100 & ImageNet \\          
\hline                 
Network         & WRN-16-4 & -- & -- \\
Weight decay    & 5e-4     & -- & 1e-4        \\
Schedule        & Cosine   & Cosine & Cosine \\
Mini-batch size & 256      & 256    & --  \\
Data augmentation & None   & -- & -- \\
\hline
\end{tabular} \end{small} \end{center} 
\renewcommand{\Mc}[2]{\multicolumn{#1}{c}{#2}}
\caption{\label{tab:constant-sgd} \small
  The values of the fixed attributes of the procedures shown in Figures 
  \figConst\ as well as \ref{fig:klsame-vs-kldiff-more}--\ref{fig:klsame-vs-kldiff-w}.  
  The training length and the learning rate were varied as shown in Table \ref{tab:core-procedure}.  
  The presence/absence of iterate averaging is indicated in each figure.  
}
\vskip -0.1in  
\begin{center} \begin{small} \begin{tabular}{|c|ccc|}
\hline
                   & CIFAR-10 & CIFAR-100 & ImageNet \\
\hline                
Network            & \Mc{2}{WRN-28-2} & ResNet-50 \\
Training data size & \Mc{2}{4K} & 120K \\
Weight decay       & \Mc{2}{2e-3}     & 1e-3 \\
Schedule           & \Mc{2}{Constant} & Constant \\
Mini-batch size    & 256      & 64       & 512 \\
Data augmentation  & \Mc{2}{Standard+Cutout} & Standard \\
Label smoothing    & \Mc{2}{--}       & 0.1 \\
\hline
\end{tabular} \end{small} \end{center} 
\renewcommand{\Mc}[2]{\multicolumn{#1}{c}{#2}}
\caption{\label{tab:constant-sgd-more} \small
  The values of the fixed attributes of the procedures shown in Figures 
  \ref{fig:w-gap-klsum-more}--\ref{fig:w-terr-disag-more}.  
  The training length and the learning rate were varied as shown in Table \ref{tab:core-procedure}.  
  The presence/absence of iterate averaging is indicated in each figure.  
  The rest of the attributes are the same as in Table \ref{tab:constant-sgd}
}
\vskip -0.1in  
\begin{center} \begin{small} \begin{tabular}{|c|ccc|}
\hline
                   & CIFAR-10 & CIFAR-100 & ImageNet \\
\hline                
Weight decay       & 5e-4 & 2e-3     & 1e-4 \\
Mini-batch size    & 64       & 256      & 512 \\
\hline
\end{tabular} \end{small} \end{center} 
\end{table}
Note that after training, a few procedures with very high training loss were excluded 
from the analysis (see Table \ref{tab:core-procedure} for the cut-off).
Right after the model parameter initialization, \divSame\ $\klSameP$ is obviously 
not predictive of \ggap\ since it is non-zero merely reflecting the initial randomness 
while \ggap\ is zero.  Similar effects of initial randomness are expected in the initial phase of 
training; however, these near random models are not of practical interest.  
Therefore, we excluded from our analysis.  

\subsubsection{SGD with constant learning rates \figInPara{Figures \figConst}}
\label{app:constant-sgd}

In Section \ref{sec:klSumP}, we first focused on the effects of varying learning rates and training lengths 
while fixing anything else, with the training procedures that use a constant learning rate 
with or without iterate averaging.  
We analyzed all the subsets of the procedures that met this condition, 
and reported the common trend.  
That is, when the learning rate is constant, 
with iterate averaging, $\klSumP$ is predictive of \ggap\ within and across the learning rates, 
and without iterate averaging, $\klSumP$ is predictive of \ggap\ 
only for the procedures that share the learning rate; moreover, 
without iterate averaging, larger learning rates cause $\klSumP$ to overestimate \ggap\ 
by larger amounts.  
Figures \figConst\ show 
one particular subset for each dataset, and 
Table \ref{tab:constant-sgd} shows the values of the attributes fixed in these subsets.  
To demonstrate the generality of the finding, 
we show the corresponding figures for one more subset for each dataset 
in Figures \ref{fig:w-gap-klsum-more}--\ref{fig:w-terr-disag-more}.  
The values of the fixed attributes in these subsets are shown in Table \ref{tab:constant-sgd-more}.

\begin{figure}[h]
\newcommand{\wid}{0.325}
\newcommand{\iwid}{0.9}
\newcommand{\capskip}{\vskip -0.05in}
\newcommand{\maincapskip}{\vskip -0.1in}
\newcommand{\xy}{gap-klsum}
\newcommand{\nm}{dummy}
\newcommand{\minipwid}{0.495}
\begin{center}
\begin{subfigure}[b]{1\linewidth} \begin{center}
\centerline{\includegraphics[width=1\linewidth]{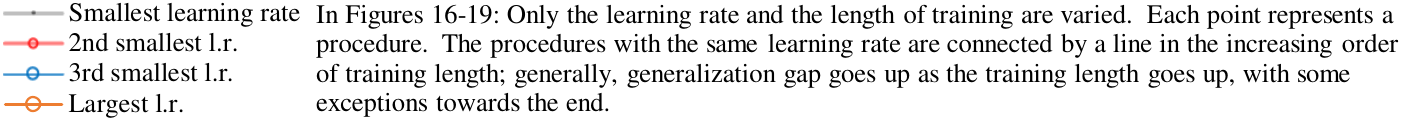}}
\end{center} \end{subfigure}%
\end{center}
\vskip -0.2in
\begin{minipage}[t]{\minipwid\linewidth}
\renewcommand{\nm}{vr1w-c10-lam5-th0.3}
\begin{subfigure}[b]{\wid\linewidth} \begin{center}
\centerline{\includegraphics[width=\iwid\linewidth]{\nm-\xy}} \capskip 
\caption{\small CIFAR-10}
\end{center} \end{subfigure}%
\renewcommand{\nm}{vr1w-c100-mb256-th2}
\begin{subfigure}[b]{\wid\linewidth} \begin{center}
\centerline{\includegraphics[width=\iwid\linewidth]{\nm-\xy}} \capskip 
\caption{\small CIFAR-100}
\end{center} \end{subfigure}%
\renewcommand{\nm}{vr1w-in-lam1-th3}
\begin{subfigure}[b]{\wid\linewidth} \begin{center}
\centerline{\includegraphics[width=\iwid\linewidth]{\nm-\xy}} \capskip 
\caption{\small ImageNet}
\end{center} \end{subfigure}%
\maincapskip
\caption{ \label{fig:w-gap-klsum-more} \small
  $\klSumP$ ($x$-axis) and \ggap\ ($y$-axis). 
  SGD with a {\bf constant learning rate} and {\bf iterate averaging}. 
  Only the learning rate and training length were varied as in Figure \ref{fig:w-gap-klsum}
  and the attributes were fixed to the values different from Figure \ref{fig:w-gap-klsum}; 
  see Table \ref{tab:constant-sgd-more} for the fixed values. 
  As in Figure \ref{fig:w-gap-klsum}, a positive correlation is observed 
  between $\klSumP$ and \ggap.  
}
\end{minipage}
\hskip 0.1in
\begin{minipage}[t]{\minipwid\linewidth}
\centering
\renewcommand{\nm}{vr1v-c10-lam5-th0.3}
\begin{subfigure}[b]{\wid\linewidth} \begin{center}
\centerline{\includegraphics[width=\iwid\linewidth]{\nm-\xy}} \capskip 
\caption{\small CIFAR-10}
\end{center} \end{subfigure}%
\renewcommand{\nm}{vr1v-c100-mb256-th2}
\begin{subfigure}[b]{\wid\linewidth} \begin{center}
\centerline{\includegraphics[width=\iwid\linewidth]{\nm-\xy}} \capskip 
\caption{\small CIFAR-100}
\end{center} \end{subfigure}%
\renewcommand{\nm}{vr1v-in-lam1-th3}
\begin{subfigure}[b]{\wid\linewidth} \begin{center}
\centerline{\includegraphics[width=\iwid\linewidth]{\nm-\xy}} \capskip 
\caption{\small ImageNet}
\end{center} \end{subfigure}%
\vskip -0.1in 
\caption{ \label{fig:v-gap-klsum-more} \small
  $\klSumP$ ($x$-axis) and \ggap\ ($y$-axis). 
  {\bf Constant learning rates.  No iterate averaging}. 
  Only the learning rate and training length were varied as in Figure \ref{fig:v-gap-klsum}
  and the attributes were fixed to the values different from Figure \ref{fig:v-gap-klsum}; 
  see Table \ref{tab:constant-sgd-more} for the fixed the values.  
  As in Figure \ref{fig:v-gap-klsum}, 
  $\klSumP$ is predictive of \ggap\ for the procedures that 
  share the learning rate, but not clear otherwise.  
}
\end{minipage}
\vskip 0.1in
\renewcommand{\wid}{0.159}
\renewcommand{\iwid}{1}
\renewcommand{\capskip}{\vskip -0.05in}
\renewcommand{\nm}{dummy}
\renewcommand{\xy}{dummy}
\newcommand{\nmOne}{vr1v-c10-lam5-th0.3}
\newcommand{\nmTwo}{vr1v-c100-mb256-th2}
\newcommand{\nmThree}{vr1v-in-lam1-th3}
\newcommand{\capOne}{\DivSame}
\newcommand{\capTwo}{\DivDiff}
\newcommand{\xyOne}{gap-kl}
\newcommand{\xyTwo}{gap-barDiv}
\begin{center}
\begin{subfigure}[b]{0.33\linewidth} \begin{center}
  {\small CIFAR-10}
\end{center} \end{subfigure}%
\begin{subfigure}[b]{0.33\linewidth} \begin{center}
  {\small CIFAR-100}
\end{center} \end{subfigure}%
\begin{subfigure}[b]{0.33\linewidth} \begin{center}
  {\small ImageNet}
\end{center} \end{subfigure}%

\renewcommand{\nm}{\nmOne}
\renewcommand{\xy}{\xyOne}
\begin{subfigure}[b]{\wid\linewidth} \begin{center}
\centerline{\includegraphics[width=\iwid\linewidth]{\nm-\xy}} \capskip 
\caption{\small \capOne}
\end{center} \end{subfigure}%
\renewcommand{\xy}{\xyTwo}
\begin{subfigure}[b]{\wid\linewidth} \begin{center}
\centerline{\includegraphics[width=\iwid\linewidth]{\nm-\xy}} \capskip 
\caption{\small \capTwo}
\end{center} \end{subfigure}%
\renewcommand{\nm}{\nmTwo}
\renewcommand{\xy}{\xyOne}
\begin{subfigure}[b]{\wid\linewidth} \begin{center}
\centerline{\includegraphics[width=\iwid\linewidth]{\nm-\xy}} \capskip 
\caption{\small \capOne}
\end{center} \end{subfigure}%
\renewcommand{\xy}{\xyTwo}
\begin{subfigure}[b]{\wid\linewidth} \begin{center}
\centerline{\includegraphics[width=\iwid\linewidth]{\nm-\xy}} \capskip 
\caption{\small \capTwo}
\end{center} \end{subfigure}%
\renewcommand{\nm}{\nmThree}
\renewcommand{\xy}{\xyOne}
\begin{subfigure}[b]{\wid\linewidth} \begin{center}
\centerline{\includegraphics[width=\iwid\linewidth]{\nm-\xy}} \capskip 
\caption{\small \capOne}
\end{center} \end{subfigure}%
\renewcommand{\xy}{\xyTwo}
\begin{subfigure}[b]{\wid\linewidth} \begin{center}
\centerline{\includegraphics[width=\iwid\linewidth]{\nm-\xy}} \capskip 
\caption{\small \capTwo}
\end{center} \end{subfigure}%
\vskip -0.1in 
\end{center}
\vskip -0.2in 
\caption{ \label{fig:klsame-vs-kldiff-more2} \small
  \DivSame\ $\klSameP$ (left) and \divDiff\ $\klDiffP$ (right) ($x$-axis) and 
  \ggap\ ($y$-axis). 
  Same procedures (and models) as in Figure \ref{fig:v-gap-klsum-more} ({\bf no iterate averaging}).  
  As in Figures \ref{fig:klsame-vs-kldiff} and \ref{fig:klsame-vs-kldiff-more} (also no iterate averaging), 
  a larger learning rate makes \divSame\ larger, but \divDiff\ is mostly unaffected.  
}
\vskip 0.1in
\renewcommand{\wid}{0.325}
\renewcommand{\iwid}{1}
\renewcommand{\capskip}{\vskip -0.05in}
\renewcommand{\nm}{dummy}
\renewcommand{\xy}{terr-disag}
\renewcommand{\minipwid}{0.495}
\centering
\begin{minipage}[t]{\minipwid\linewidth}
\renewcommand{\nm}{vr1w-c10-lam5-th0.3}
\begin{subfigure}[b]{\wid\linewidth} \begin{center}
\centerline{\includegraphics[width=\iwid\linewidth]{\nm-\xy}} \capskip 
\caption{\small CIFAR-10}
\end{center} \end{subfigure}%
\renewcommand{\nm}{vr1w-c100-mb256-th2}
\begin{subfigure}[b]{\wid\linewidth} \begin{center}
\centerline{\includegraphics[width=\iwid\linewidth]{\nm-\xy}} \capskip 
\caption{\small CIFAR-100}
\end{center} \end{subfigure}%
\renewcommand{\nm}{vr1w-in-lam1-th3}
\begin{subfigure}[b]{\wid\linewidth} \begin{center}
\centerline{\includegraphics[width=\iwid\linewidth]{\nm-\xy}} \capskip 
\caption{\small ImageNet}
\end{center} \end{subfigure}%
\vskip -0.1in 
\caption{ \label{fig:w-terr-disag-more} \small
  Disagreement ($x$-axis) and test error ($y$-axis). 
  Same models and legend as in Fig \ref{fig:w-gap-klsum-more}.  
}
\end{minipage}
\end{figure}
\subsubsection{Procedures with low final randomness 
               \figInPara{Figures \ref{fig:good-gap-klsum-klsame} and \ref{fig:good-terr-disag}}}
\label{sec:good}

The procedures shown in Figures \ref{fig:good-gap-klsum-klsame} and \ref{fig:good-terr-disag} 
are subsets (three subsets for three datasets) of all the procedures 
(the core procedures in Table \ref{tab:core-procedure} 
 times 
 the attribute changes in Table \ref{tab:attributes}). 
 The subsets consist of the procedures with either iterate averaging or a vanishing learning rate 
(i.e., going to zero) so that they meet the condition of low final randomness.  
These subsets include those with the cosine learning rate schedule.  
With the cosine schedule, we were interested in not only letting the learning rate go to zero 
but also stopping the training before the learning rate reaches zero and setting the final model 
to be the iterate averaging (EMA) at that point, which is well known to be useful (e.g., \cite{fixmatch20}). 
Therefore, we trained the models with the cosine schedule for 
\{ 250, 500, 1000, 2000 \} epochs (CIFAR-10/100) or 200 epochs (ImageNet), and 
saved the iterate averaging of the models with the interval of one tenth (CIFAR-10/100) 
or one twentieth (ImageNet) of the entire epochs.   
\subsection{Details of the experiments in Section \ref{sec:klSame}}

Section \ref{sec:klSame} studied \divSame\ in comparison with \sharpness\ 
in the settings where these two quantities are reduced by algorithms.  
The training algorithm with \simSame\ encouragement (co-distillation) 
is summarized in Algorithm \ref{alg:codistill}. 
These experiments were designed to study 
\begin{itemize}
\item practical models trained on full-size training data, and 
\item diverse models resulting from diverse training settings, 
\end{itemize}
in the situation where algorithms are compared after basic tuning is done, 
rather than the hyperparameter tuning-like situation in Section \ref{sec:klSumP}.  

\subsubsection{Training of the models} 

\newcommand{\Rs}[1]{[#1]}
\newcommand{\RsOne}{\cite{sam21}}
\newcommand{\RsTwo}{\cite{whenViT22}}
\newcommand{\RsThree}{\cite{fixmatch20}}
\newcommand{\RsFour}{\cite{EN19}}
\begin{table}[h]
\caption{\label{tab:basic-setting} \small
  Basic settings shared by all the models for each case (Case\#1--7,10; images) 
}
\begin{center}
\begin{small}
\vskip -0.05in
\begin{tabular}{|l|cc|c|c|c|c|}
\hline
Training type &\Mc{4}{From scratch} & \Mc{1}{Fine-tuning} & Distillation \\
\hline
Dataset       &\Mc{2}{ImageNet}      & Food101                & CIFAR10   & Cars / Dogs & Food101 \\ 
Network       & ResNet50 & ViT       &  ViT / Mixer           & WRN28-2   &EN-B0       & ResNet-18   \\ 
\hline 
Batch size    & 512       & 4096     &        512             & 64        &256         & 512 \\ %
Epochs        & 100       &  300     & 200 / 100              & --        & --         & 400 \\ %
Update steps  &  --       &  --      &   --                   & 500K      & 4K / 2K    & --  \\
Warmup steps  & 0         &  10K     &        0               & 0         &0           & 0   \\ %
Learning rate & 0.125     & 3e-3     &        3e-3            & 0.03      &0.1         & 0.125 \\ %
Schedule      & \Mc{2}{Cosine}       & Linear/Cosine          & Cosine    &Constant    & Cosine \\ %
Optimizer     & Momentum  & AdamW    &        AdamW           & Momentum  &Momentum    & Momentum \\ %
Weight decay  & 1e-4      & 0.3      &        0.3             & 5e-4      &1e-5        & 1e-3 \\ %
Label smooth  & 0.1       & 0        &        0.1             & 0         &0           & 0 \\ 
Iterate averaging & --     & --       &        --              & EMA       & EMA        & -- \\
Gradient clipping & --        & 1.0      &        1.0             & --        & 20.0       & -- \\
\hline
Data augment  & \Mc{3}{Standard} & Cutout & \Mc{2}{Standard} \\
\hline 
Reference     & \RsOne& \RsTwo           
                                     &        \RsTwo          & \RsOne,\RsThree &\RsOne,\RsFour& \RsOne\\
\hline 
Case\# & 1 & 2 & 3 / 4  & 5 & 6 / 7 & 10\\
\hline 
\multicolumn{7}{l}{`Momentum': SGD with Nesterov momentum 0.9.}\\
\end{tabular}
\end{small}
\end{center}
\end{table}

\begin{table}
\newcommand{\lr}{\eta}
\caption{\label{tab:glue-details} \small
  Basic settings shared by all the models for each case (Case\#8--9; text).  
  Hyperparameters for Case\#8--9 (text) basically followed the RoBERTa paper \cite{roberta19}.  
  The learning rate schedule was equivalent to 
  early stopping of 10-epoch linear schedule 
  after 4 epochs.  
  Although it appears that \cite{roberta19} 
  tuned when to stop for each run, we used the same number of epochs 
  for all.  Iterate averaging is our addition, which consistently improved performance.  
}
\begin{center} \begin{small}
\begin{tabular}{|l|c|} 
\hline
Initial learning rate $\lr_0$ & 1e-5 \\
Learning rate schedule & Linear from $\lr_0$ to 0.6$\lr_0$ \\
Epochs & 4 \\
Batch size & 32 \\
Optimizer & AdamW ($\beta_1$=0.9, $\beta_2$=0.98, $\epsilon$=1e-6) \\
Weight decay & 0.1 \\
Iterate averaging & EMA with momentum 0.999 \\
\hline
\end{tabular} \end{small} \end{center}
\end{table}

This section describes the experiments for producing the models used in Section \ref{sec:klSame}.  

\paragraph{Basic settings}   
Within each of the 10 cases, 
we used the same basic setting for all, 
and these shared basic settings were adopted/adapted from the previous studies 
when possible.  
Tables \ref{tab:basic-setting} and \ref{tab:glue-details} describe the basic settings and 
the previous studies that were referred to.  
Some changes to the previous settings were made 
for efficiency in our computing environment (no TPU); 
e.g., for Case\#1, we changed the batch size from 4096 to 512 and accordingly 
the learning rate from 1 to 0.125.  
When adapting the previous settings to new datasets, 
minimal tuning was done for obtaining reasonable performance, 
e.g., for Cases\#3 and 4, we changed batch size from 4096 to 512
and kept the learning rate without change as it performed better.  

For CIFAR-10, following \cite{fixmatch20}, we let the learning rate 
decay to $0.2\eta_0$ instead of 0 and set the final model 
to the EMA of the models with momentum 0.999. 
For Cases\#6--7 (fine-tuning), we used a constant learning rate 
and used the EMA of the models with momentum 0.999 as the final model, 
which we found produced reasonable performance with faster training.  
Cases\#6--7 fine-tuned the publicly available EfficientNet-B0\footnote{ \label{fn:sam} 
  \url{https://github.com/google-research/sam}
} 
pretrained with ImageNet by \cite{EN19}.
The dropout rate was set to 0.1 for Case\#3, and 
the stochastic depth drop rate was set to 0.1 for Case\#4.  
The teacher models for Case\#10 (distillation) were ensembles of ResNet-18 
trained with label smoothing 0.1 for 200 epochs with the same basic setting as the student models 
(Table \ref{tab:basic-setting}) otherwise. 

For CIFAR-10, the standard data augmentation (shift and horizontal flip) 
and Cutout \cite{cutout17} were applied. 
For the other image datasets, only the standard data augmentation
(random crop with distortion and random horizontal flip) was applied; 
the resolution was 224$\times$224.  

\paragraph{Hyperparameters for SAM}   
\begin{table}[h]
\caption{\label{tab:sam-setting} \small
  Hyperparameters for SAM. 
}
\begin{center}
\begin{small}
\begin{tabular}{|c|c|c|c|c|c|c|c|c|c|}
\hline 
Case\# & 1 & 2 & 3 & 4  & 5 & 6 & 7 & 8,9 & 10\\
\hline
$m$-sharpness & 128       & 256      & 32  & 32        & 32        &16       & 16 & 2 & 128 \\ 
$\rho$        & 0.05,0.1  & 0.05     & 0.1 & 0.1       & 0.1,0.2   & 0.1,0.2 & 0.1 &0.005,0.01& 0.1 \\
\hline 
\end{tabular}
\end{small}
\end{center}
\end{table}

There are two values that affect the performance of SAM, 
$m$ for $m$-sharpness and the diameter of the neighborhood $\rho$.  
Their values are shown in Table \ref{tab:sam-setting}.  
\cite{sam21} found that smaller $m$ performs better.  However, a smaller $m$ 
can be less efficient as it can reduce the degree of parallelism, 
depending on the hardware configuration.  
We made $m$ no greater than the reference study in most cases, but 
for practical feasibility we made it larger for Case\#2.    
$\rho$ was either set according to the reference when possible 
or chosen on the development data otherwise, 
from \{0.05, 0.1, 0.2\} for images 
and from \{0.002, 0.005, 0.01, 0.02, 0.05, 0.1\} for texts.  
For some cases (typically those with less computational burden), 
we trained the models for one additional value of $\rho$
to have more data points. 

\paragraph{Hyperparameters for the \divSame\ penalty term}
The weight of the \divSame\ penalty term for encouraging \simSame\ was fixed to 1. 

\paragraph{Number of the models} 
For each training procedure (identified by the training objective within each case), 
we obtained 4 models trained with 4 distinct random sequences.  
Cases\#1--9 consisted of either 4 or 6 procedures depending on the number of the values chosen for $\rho$ 
for SAM. 
Case\#10 (distillation) consisted of 6 procedures\footnote{
Although the number of all possible combinations is 16, 
considering the balance with other cases, 
we chose to experiment with the following: 
teacher \{ \lnNone, \lnSimSame, \lnBoth \} $\times$  student \{ \lnNone, \lnSimSame \} 
}, resulting from combining the choice of the training objectives for the teacher and the choice for the student. 
In total, we had 52 procedures and 208 models.  

\subsubsection{Estimation of the model-wise \divSame\ and \sharpness\ in Section \ref{sec:klSame}} 

When the expectation over the training set was estimated, 
for a large training set such as ImageNet, 
20K data points were sampled for this purpose.  
As described above, we had 4 models for each of the training procedures.  
For the procedure {\em without} encouragement of low \divSame, 
the expectation of the divergence of each model was estimated by taking the average of the 
divergence from the three other models.
As for the procedure {\em with} encouragement of low \divSame, 
the four models were obtained from two runs 
as each run produced two models, and so 
when averaging for estimating \divSame, 
we excluded the divergence between the models from the same run due to their dependency.  

\sharpOne\ requires $\rho$ (the diameter of the local region) 
as input.  We set it to the best value for SAM. 

\subsection{Details of the experiments in Sections \ref{sec:practical}}

The optimizer was SGD with Nesterov momentum 0.9. 

\begin{table}[h]
\caption{ \label{tab:ensemble} \small 
  Supplement to Figure \ref{fig:ensemble-distill} (a).  Error rate (\%) and \divSame\ of ensembles 
  in comparison with non-ensemble models ($+$).  
  Food101, ResNet-18. 
  The average and standard deviation of 4 are shown. 
  Ensemble reduces test error and \divSame.  
}
\begin{center}
\begin{small}
\begin{tabular}{|cl|c|c|}
\hline
\Mc{2}{Training method}     &Test error(\%) & \divSame \\
\hline
   $+$ & Non-ensemble       & 17.09\sd{0.20} & 0.30\sd{0.001} \\
\hline
   $*$ & Ensemble of standard models  & 14.99\sd{0.05} & 0.14\sd{0.001} \\
\hline
   $\bullet$ & Ensembles of \lSimSame\ models
   & 14.07\sd{0.08} & 0.10\sd{0.001} \\
\hline
\end{tabular}
\end{small}
\end{center}
\vskip -0.1in
\end{table}
\subsubsection{Ensemble experiments \figInPara{Figure \ref{fig:ensemble-distill} (a)}}
\label{app:ensemble}

The models used in the ensemble experiments were ResNet-18 
trained for 200 epochs with label smoothing 0.1 
with the basic setting of Case\#10
in Table \ref{tab:basic-setting} otherwise.   
Table \ref{tab:ensemble} shows the standard deviation of the values presented 
in Figure \ref{fig:ensemble-distill} (a).  
The ensembles also served as the teachers in the distillation experiments. 

\subsubsection{Distillation experiments \figInPara{Figure \ref{fig:ensemble-distill} (b)}}

The student models were ResNet-18 trained for 200 epochs with the basic setting of 
Case\#10 of Table \ref{tab:basic-setting} otherwise. 
The teachers for Figure \ref{fig:ensemble-distill}(b) (left) 
were ResNet-18 ensemble models 
trained as described in \ref{app:ensemble}.  
The teachers for Figure \ref{fig:ensemble-distill}(b) (middle) 
were ResNet-50 ensemble models trained similarly.  
The teachers for Figure \ref{fig:ensemble-distill}(b) (right) 
were EfficientNet-B0 models obtained by fine-tuning the public ImageNet-trained model (footnote \ref{fn:sam}); 
fine-tuning was done with encouragement of \simSame\ and \flatness\ (with $\rho$=0.1) 
with batch size 512, weight decay 1e-5, the initial learning rate 0.1 
with cosign scheduling, gradient clipping 20, and 20K updates.  
Table \ref{tab:distill} shows the standard deviations of the values plotted in Figure \ref{fig:ensemble-distill} (b). 

\begin{table}
\caption{ \label{tab:distill} \small
  Supplement to Figure \ref{fig:ensemble-distill} (b). 
  Test error (\%) and \divSame\ of distilled-models in comparison with standard models ($+$). 
  The average and standard deviation of 4 are shown. 
}
\begin{center}
\begin{small}
\begin{tabular}{|c|c|c|c|c|c|c|}
\hline
    & \Mc{3}{Test error (\%)} & \Mc{3}{\divSame} \\
    \cline{2-7}
    & left & middle & right  & left & middle & right  \\
\hline    
$+$     &\Mc{3}{17.09\sd{0.20}}                          & \Mc{3}{0.30\sd{0.001}} \\
\hline
\OoA    & 15.74\sd{0.09} & 14.95\sd{0.17} & 14.61\sd{0.14} & 0.19\sd{0.001} & 0.21\sd{0.001} & 0.22\sd{0.002} \\
\hline
\OoB    & 14.99\sd{0.07} & 14.28\sd{0.11} & 13.89\sd{0.08} & 0.14\sd{0.001} & 0.15\sd{0.001} & 0.14\sd{0.002} \\
\hline
\SivB   & 14.41\sd{0.05} & 13.63\sd{0.09} & 13.06\sd{0.06} & 0.10\sd{0.001} & 0.12\sd{0.001} & 0.08\sd{0.001} \\
\hline 
\end{tabular}
\end{small}
\end{center}
\vskip -0.12in
\end{table}
\subsubsection{Semi-supervised experiments reported in Table \ref{tab:c10uda}} 
\label{app:semi}

\newcommand{\labset}{\Zn}
\newcommand{\unlset}{U}

\newcommand{\pV}{\prm}
\newcommand{\pW}{\bar{\prm}}

\newcommand{\Achar}{{\rm a}}
\newcommand{\Bchar}{{\rm b}}
\newcommand{\makeAB}[2]{#1^{#2}}
\newcommand{\makeA}[1]{\makeAB{#1}{\Achar}}
\newcommand{\makeB}[1]{\makeAB{#1}{\Bchar}}
\newcommand{\makelocal}[1]{#1}
\newcommand{\pVa}{\makeA{\pV}}
\newcommand{\pVb}{\makeB{\pV}}
\newcommand{\pVabi}[1]{ \makeAB{\pV}{#1} }
\newcommand{\pWa}{\makeA{\pW}}
\newcommand{\pWb}{\makeB{\pW}}
\newcommand{\cpWa}{\makeA{\tilde{\pW}}}
\newcommand{\cpWb}{\makeB{\tilde{\pW}}}
\newcommand{\pWabi}[1]{ \makeAB{\pW}{#1} }
\newcommand{\cpWabi}[1]{ \makeAB{\tilde{\pW}}{#1} }
\newcommand{\bchL}{B}
\newcommand{\bchU}{B_{\unlset}}
\newcommand{\bchLa}{\makeA{\bchL}}
\newcommand{\bchLb}{\makeB{\bchL}}
\newcommand{\bchUa}{\makeA{\bchU}}
\newcommand{\bchUb}{\makeB{\bchU}}
\newcommand{\bchLabi}[1]{\makeAB{\bchL}{#1}}
\newcommand{\vr}{r}
\newcommand{\wght}{\beta}
\newcommand{\lr}{\eta}

\newcommand{\step}{t}
\newcommand{\tMax}{T}
\newcommand{\STATE}{}

\begin{algorithm}[tb]
\caption{\label{alg:codistill} 
    Training with \simSame\ encouragement.  
    Co-distillation (named by \cite{codistill18}, 
    closely related to deep mutual learning \cite{deepMutual18}). 
    Without additional unlabeled data. \\
    {\bfseries Input \& Notation}: 
         Labeled set $\labset$, 
         $\wght$ (default: 1), learning rate $\lr$. 
         Let $\loss$ be loss, and let $\prob(\pV,x)$ be the model output in the form of probability estimate. 
}
 \STATE Sample $\pVa$ and $\pVb$ from the initial distribution. 

 \For{$\step = 1, \ldots, \tMax$}{
    Sample two labeled mini-batches $\bchLa$ and $\bchLb$ from $\labset$. \\

\begingroup
\thinmuskip 1mu \medmuskip 1mu \thickmuskip 1mu
    $\pVa \gets \pVa - \lr_\step \nabla_{\pVa}\left[ \frac{1}{|\bchLa|}\sum_{(x,y)\in\bchLa}\loss(\prob(\pVa,x),y)\; 
           + \wght~\frac{1}{|\bchLa|}\sum_{(x,\cdot) \in \bchLa} \KL(\prob(\pVb,x) || \prob(\pVa,x))\right]$  \\
    $\pVb \gets \pVb - \lr_\step \nabla_{\pVb}\left[ \frac{1}{|\bchLb|}\sum_{(x,y)\in\bchLb}\loss(\prob(\pVb,x),y)\;
           + \wght~\frac{1}{|\bchLb|}\sum_{(x,\cdot) \in \bchLb} \KL(\prob(\pVa,x) || \prob(\pVb,x))\right]$ \\
\endgroup            
 }
\end{algorithm}

\begin{algorithm}[h]
 \caption{\label{alg:semi-codistill} 
   Our semi-supervised variant of co-distillation.%
   \\
   {\bfseries Input \& Notation}: 
         Labeled set $\labset$, unlabeled set $\unlset$, 
         $\wght$ (default: 1), $\tau$ (default: 0.5), learning rate $\lr$, 
         momentum of EMA (default: 0.999). 
         Let $\loss$ be loss, 
         and let $\prob(\pV,x)$ be the model output in the form of probability estimate.   
 }
Initialize $\pVa$, $\pVb$, $\pWa$, and $\pWb$.\\
\For{$\step = 1, \ldots, \tMax$}{
  Sample labeled mini-batches $\bchLa$ and $\bchLb$ from $\labset$, and sample an unlabeled mini-batch $\bchU$ from $\unlset$.  \\ 
  Let $\psi(\pV,\pW,x)=\wght\Ind(\max_i \prob(\pW;x)[i]>\tau)\KL( \prob(\pW,x) || \prob(\pV,x))$
         where $\Ind$ is the indicator function.  \\
  $\pVa \gets \pVa - \lr_\step \nabla_{\pVa} \left[ \frac{1}{|\bchLa|} \sum_{(x,y) \in \bchLa} \loss(\prob(\pVa,x),y)
            \;+ \frac{1}{|\bchU|}\sum_{x\in\bchU}\psi(\pVa,\pWb,x) \right]$ \\
  $\pVb \gets \pVb - \lr_\step \nabla_{\pVb} \left[ \frac{1}{|\bchLb|} \sum_{(x,y) \in \bchLb} \loss(\prob(\pVb,x),y)
              + \frac{1}{|\bchU|}\sum_{x\in\bchU}\psi(\pVb,\pWa,x) \right]$ \\
  $\pWa$ and $\pWb$ keep the EMA of $\pVa$ and $\pVb$, with momentum $\mu$, respectively.
}
\end{algorithm}

The unlabeled data experiments reported in 
Table \ref{tab:c10uda} used 
our modification of Algorithm \ref{alg:codistill}, taylored for use of unlabeled data, 
and it is summarized in Algorithm \ref{alg:semi-codistill}. 
It differs from Algorithm \ref{alg:codistill} in two ways. 
First, to compute the \divSame\ penalty term, the model output is compared with 
that of the exponential moving average (EMA) 
of the other model, reminiscent of Mean Teacher \cite{meanTeachers17}. 
The output of the EMA model during the training typically has 
higher confidence (or lower entropy) than the model itself, 
and so taking the KL divergence against EMA of the other model serves 
the purpose of sharpening the pseudo labels and reducing the entropy 
on unlabeled data.  
Second, we adopted the masked divergence from 
{\em Unsupervised Data Augmentation (UDA)} \cite{uda20}, 
which masks (i.e., ignores) the unlabeled instances for which the confidence level 
of the other model is less than threshold.  
These changes are effective in the semi-supervised setting (but not in the supervised setting) 
for preventing the models from getting stuck in a high-entropy region.  

In the experiments reported in Table \ref{tab:c10uda}, 
we penalized \divSame\ between the model outputs of two training instances of {\em Unsupervised Data Augmentation (UDA)} 
\cite{uda20}, 
using Algorithm \ref{alg:semi-codistill}, 
by replacing loss $\loss(\prm,x)$ with the UDA objective.  
The UDA objective penalizes discrepancies between the model outputs for two different data representations 
(a strongly augmented one and a weakly augmented one) on the unlabeled data 
(the {\em UDA penalty}).  
The \divSame\ penalty term of Algorithm \ref{alg:semi-codistill} also uses unlabeled data, 
and for this purpose, we used a strongly augmented unlabeled batch sampled independently of those for 
the UDA penalty. 
UDA `sharpens' the model output on the weakly augmented data by scaling the logits, 
which serves as pseudo labels for unlabeled data, and the degree of sharpening is a tuning parameter.  
However, we tested UDA without sharpening and obtained better performance  
on the development data (held-out 5K data points)
than reported in \cite{fixmatch20}, and so we decided to use UDA without sharpening 
for our UDA+\lnSimSame\ experiments.  
We obtained test error 3.95\% on the average of 5 independent runs, 
which is better than 
4.33\% of UDA alone and 4.25\% of FixMatch.  
Note that each of the 5 runs used a different fold of 4K examples as was done in 
the previous studies.  

Following \cite{fixmatch20}, we used labeled batch size 64, weight decay 5e-4, and 
updated the weights 500K times with the cosine learning rate schedule decaying from 0.03 to 0.2$\times$0.03. 
We set the final model to 
the average of the last 5\% iterates (i.e., the last 25K snapshots of model parameters).  
We used these same basic settings for all (UDA, FixMatch, and UDA+\lnSimSame).  
The unlabeled batch size for testing UDA and FixMatch was 64$\times$7, 
as in \cite{fixmatch20}.  
For computing each of the two penalty terms for UDA+\lnSimSame, 
we set the unlabeled batch size 
to 64$\times$4, which is approximately one half of that for UDA and FixMatch. 
We made it one half so that the total number of unlabeled data points used by 
each model (64$\times$4 for the UDA penalty plus 64$\times$4 for the \divSame\ penalty)
becomes similar to that of UDA or FixMatch.  
RandAugment with the same modification as described in the FixMatch study was used 
as strong augmentation, and 
the standard data augmentation (shift and flip) was used as weak augmentation.  
The threshold for masking was set to 0.5 for both the UDA penalty and the \divSame\ penalty 
and the weights of the both penalties were set to 1.  
Note that 
we fixed the weights of penalties to 1, and only tuned the threshold for masking by 
selecting from $\{0,0.5,0.7\}$ on the development data (5K examples).  

\subsubsection{Fine-tuning experiments in Table \ref{tab:food-enb4}}

The EfficientNet-B4 (EN-B4) fine-tuning experiments reported in Table \ref{tab:food-enb4} 
were done with weight decay 1e-5 (following \cite{sam21}), 
batch size 256 and number of updates 20K following the previous studies.  
We set the learning rate to 0.1 and performed gradient clipping with size 20
to deal with a sudden surge of the gradient size. 
The diameter of the local region $\rho$ for SAM was chosen from $\{0.05,0.1,0.2\}$ 
based on the performance of SAM on the development data, and the same chosen value 0.2 
was used for both SAM and SAM+\lnSimSame\
Following the EN-B7 experiments of \cite{sam21}, the value $m$ for $m$-sharpness was set to 16.  
Since SAM+\lnSimSame\ is approximately twice as expensive as SAM as a result of 
training two models, we also tested SAM with 40K updates (20K$\times$2) and found that 
it did not improve performance.  
We note that our baseline EN-B4 SAM performance is better than 
the EN-B7 SAM performance of \cite{sam21}.  
This is due to the difference in the basic setting.  
In \cite{sam21}, EN-B7 was fine-tuned with a larger batch size 1024 with a smaller learning rate 0.016 
while the batch normalization statistics was fixed to the pre-trained statistics.
Our basic setting allowed a model to go farther away from the initial pre-trained model. 
Also note that we experimented with smaller EN-B4 instead of EN-B7 due to resource constraints. 

\newcommand{\citeJ}{\cite{Jiang+etal20}}
\newcommand{\Tau}{{\mathcal T}}
\newcommand{\Kappa}{{\mathcal K}}
\newcommand{\hyp}{\pi}
\newcommand{\Hyp}{\Pi}
\newcommand{\Real}{{\mathbb R}}
\newcommand{\defi}{:=}  %
\newcommand{\muOrG}{f} %
\newcommand{\MuOrG}{F} 

\newcommand{\inconsis}{Inconsist.}
\newcommand{\disag}{Disagree.}
\newcommand{\random}{Random}
\newcommand{\canoni}{Canonical}
\newcommand{\desc}[1]{\multicolumn{13}{l}{#1}}
\newcommand{\cMI}{MI-based}
\newcommand{\Rank}{Ranking}
\newcommand{\SleTwo}{$\Kappa$}
\newcommand{\SleZero}{{$|S|$=0}}
\begin{table}
\newcommand{\sz}{scriptsize}
\renewcommand{\tabcolsep}{1.1pt}
\renewcommand{\hi}[1]{{\bf #1}} 
\renewcommand{\desc}[1]{\multicolumn{13}{l}{#1}}
\caption{ \small \label{tab:good-avg}
Correlation scores of inconsistency and disagreement.  
For the training procedures with low final randomness (as in Figure \ref{fig:good-gap-klsum-klsame}), 
model-wise quantities (one model per procedure) were analyzed.  
(a)--(c) differ in the restriction on training loss; (b) and (c) exclude the models with high 
training loss while (a) does not. 
The average and standard deviation of 4 independent runs 
(that use 4 distinct subsamples of training sets as training data 
 and distinct random seeds)
are shown.  %
  {\bf Correlation scores}: Two types of mutual information-based scores 
     (`$\Kappa$' as in \eqref{eq:Kappa} and 
      `\SleZero': $\frac{ I(V_\mu,V_g|U_S) }{ H(V_g|U_S) }$ with $|S|$=0) 
     and two types of Kendall's rank-correlation coefficient-based scores 
     ($\Psi$ as in \eqref{eq:Psi} and overall $\tau$).  
     A larger number indicates a higher correlation.  
     The highest numbers are highlighted.  
  {\bf Tested quantities}: 
     `\inconsis': Inconsistency, 
     `\disag': Disagreement, 
     `\random' (baseline): random numbers drawn from the normal distribution, 
     `\canoni' (baseline): a slight extension of the canonical ordering in \citeJ; 
         it heuristically determines the order of two procedures by preferring 
         smaller batch size, larger weight decay, larger learning rate, and presence of data augmentation 
         (which are considered to be associated with better generalization) 
         by adding one point for each and breaking ties randomly.
  {\bf Target quantities}:          
     Generalization gap (test loss minus training loss) 
     as defined in the main paper, test error, and test error minus training error. 
  {\bf Observation}: Inconsistency correlates well to generalization gap, and 
     disagreement correlates well to test error.  With more aggressive exclusion of 
     the models with high training loss (going from (a) to (c)), 
     the correlation of disagreement to generalization gap improves and approaches that of inconsistency.  
}
\begin{center}
\begin{\sz}
\begin{tabular}{|c|r|r|r|r|r|r|r|r|r|r|r|r|}
\desc{(a) {\bf No restriction} on training loss}\\
\hline  
           & \Mc{4}{CIFAR-10} & \Mc{4}{CIFAR-100} & \Mc{4}{ImageNet}
\\
           \cline{2-13}
           & \Mc{1}{\SleTwo} & \Mc{1}{\SleZero} & \Mc{1}{$\Psi$} & \Mc{1}{$\tau$}
           & \Mc{1}{\SleTwo} & \Mc{1}{\SleZero} & \Mc{1}{$\Psi$} & \Mc{1}{$\tau$}
           & \Mc{1}{\SleTwo} & \Mc{1}{\SleZero} & \Mc{1}{$\Psi$} & \Mc{1}{$\tau$}
\\
\hline          
\desc{Correlation to generalization gap (`test loss - training loss')} \\               
\hline                
\inconsis&\hi{0.23}\sd{.01}&\hi{0.38}\sd{.01}& \hi{0.60}\sd{.05}&\hi{0.69}\sd{.01}& \hi{0.51}\sd{.01}&\hi{0.55}\sd{.01}& \hi{0.68}\sd{.06}&\hi{0.81}\sd{.01}&\hi{0.75}\sd{.01}&\hi{0.75}\sd{.01}& \hi{0.73}\sd{.17}&\hi{0.92}\sd{.00}\\ 
\disag\  &    0.14\sd{.01} &    0.26\sd{.01} &     0.54\sd{.05} &    0.58\sd{.01} &     0.11\sd{.01} &    0.11\sd{.01} &     0.48\sd{.06} &    0.39\sd{.01} &    0.11\sd{.01} &    0.16\sd{.01} &     0.53\sd{.07} &    0.47\sd{.01} \\
\random\ &    0.00\sd{.00} &    0.00\sd{.00} &    -0.02\sd{.04} &    0.01\sd{.03} &     0.00\sd{.00} &    0.00\sd{.00} &    -0.01\sd{.07} &    0.02\sd{.02} &    0.00\sd{.00} &    0.01\sd{.00} &    -0.00\sd{.11} &   -0.04\sd{.08}  \\
\canoni\ &    0.01\sd{.00} &    0.10\sd{.00} &     0.32\sd{.03} &    0.37\sd{.01} &     0.00\sd{.00} &    0.09\sd{.00} &     0.14\sd{.05} &    0.36\sd{.01} &    0.00\sd{.00} &    0.07\sd{.00} &     0.36\sd{.05} &    0.32\sd{.00} \\
\hline  
\desc{Correlation to test error} \\
\hline
\inconsis&    0.08\sd{.01} &    0.16\sd{.01} &     0.51\sd{.04} &    0.46\sd{.01} &     0.00\sd{.00} &    0.01\sd{.00} &     0.33\sd{.01} &    0.14\sd{.01} &    0.03\sd{.00} &    0.03\sd{.00} &     0.20\sd{.03} &   -0.20\sd{.00} \\
\disag\  &\hi{0.31}\sd{.01}&\hi{0.37}\sd{.00}& \hi{0.58}\sd{.05}&\hi{0.68}\sd{.00}& \hi{0.19}\sd{.01}&\hi{0.31}\sd{.02}& \hi{0.57}\sd{.05}&\hi{0.63}\sd{.01}&\hi{0.04}\sd{.00}&\hi{0.05}\sd{.00}& \hi{0.30}\sd{.07}&\hi{0.25}\sd{.01}\\
\random\ &    0.00\sd{.00} &    0.00\sd{.00} &     0.02\sd{.03} &    0.00\sd{.01} &     0.00\sd{.00} &    0.00\sd{.00} &    -0.05\sd{.07} &   -0.00\sd{.04} &    0.00\sd{.00} &    0.00\sd{.00} &    -0.07\sd{.08} &    0.01\sd{.04} \\
\canoni\ &    0.01\sd{.00} &    0.03\sd{.00} &     0.15\sd{.02} &    0.20\sd{.01} &     0.00\sd{.00} &    0.19\sd{.01} &     0.31\sd{.15} &    0.49\sd{.01} &    0.00\sd{.00} &    0.03\sd{.00} &     0.08\sd{.03} &    0.18\sd{.00}  \\
\hline
\desc{Correlation to `test error - training error'} \\
\hline
\inconsis&    0.11\sd{.00} &    0.25\sd{.01} &     0.53\sd{.04} &    0.57\sd{.01} & \hi{0.41}\sd{.01}&\hi{0.48}\sd{.01}& \hi{0.60}\sd{.02}&\hi{0.77}\sd{.01}&\hi{0.73}\sd{.01}&\hi{0.73}\sd{.00}& \hi{0.91}\sd{.03}&\hi{0.91}\sd{.00}  \\ 
\disag\  &\hi{0.16}\sd{.01}&\hi{0.28}\sd{.01}& \hi{0.54}\sd{.07}&\hi{0.60}\sd{.01}&     0.12\sd{.01} &    0.12\sd{.01} &     0.45\sd{.05} &    0.40\sd{.01} &    0.12\sd{.01} &    0.16\sd{.01} &     0.52\sd{.06} &    0.46\sd{.01} \\
\random\ &    0.00\sd{.00} &    0.00\sd{.00} &     0.03\sd{.04} &    0.00\sd{.02} &     0.00\sd{.00} &    0.00\sd{.00} &     0.01\sd{.07} &   -0.01\sd{.03} &    0.00\sd{.00} &    0.00\sd{.00} &     0.06\sd{.17} &   -0.03\sd{.07}  \\
\canoni\ &    0.01\sd{.00} &    0.07\sd{.00} &     0.31\sd{.06} &    0.31\sd{.01} &     0.00\sd{.00} &    0.09\sd{.00} &     0.21\sd{.09} &    0.35\sd{.01} &    0.00\sd{.00} &    0.07\sd{.00} &     0.24\sd{.02} &    0.32\sd{.00}  \\
\hline
\desc{}\\
\desc{(b) Excluding the models with very high training loss, as described in Appendix \ref{app:klSumP} and Table \ref{tab:core-procedure}.} \\ 
\hline
           & \Mc{4}{CIFAR-10} & \Mc{4}{CIFAR-100} & \Mc{4}{ImageNet}
\\         
           \cline{2-13}
           & \Mc{1}{\SleTwo} & \Mc{1}{\SleZero} & \Mc{1}{$\Psi$} & \Mc{1}{$\tau$}
           & \Mc{1}{\SleTwo} & \Mc{1}{\SleZero} & \Mc{1}{$\Psi$} & \Mc{1}{$\tau$}
           & \Mc{1}{\SleTwo} & \Mc{1}{\SleZero} & \Mc{1}{$\Psi$} & \Mc{1}{$\tau$}            
\\           
\hline
\desc{Correlation to generalization gap (`test loss - training loss')} \\               
\hline                
\inconsis&\hi{0.37}\sd{.01}&\hi{0.57}\sd{.01}& \hi{0.64}\sd{.05}&\hi{0.83}\sd{.01}& \hi{0.47}\sd{.01}&\hi{0.52}\sd{.01}& \hi{0.68}\sd{.07}&\hi{0.79}\sd{.01}&\hi{0.74}\sd{.01}&\hi{0.75}\sd{.01}& \hi{0.74}\sd{.16}&\hi{0.92}\sd{.00}\\
\disag\  &    0.34\sd{.02} &    0.53\sd{.02} &     0.61\sd{.05} &    0.80\sd{.02} &     0.26\sd{.03} &    0.33\sd{.01} &     0.60\sd{.05} &    0.65\sd{.01} &    0.24\sd{.02} &    0.33\sd{.02} &     0.61\sd{.07} &    0.65\sd{.01} \\
\random\ &    0.00\sd{.00} &    0.00\sd{.00} &    -0.01\sd{.05} &   -0.00\sd{.03} &     0.00\sd{.00} &    0.00\sd{.00} &     0.06\sd{.06} &   -0.01\sd{.03} &    0.00\sd{.00} &    0.00\sd{.00} &    -0.10\sd{.07} &   -0.05\sd{.07}  \\
\canoni\ &    0.02\sd{.00} &    0.15\sd{.00} &     0.28\sd{.04} &    0.44\sd{.01} &     0.00\sd{.00} &    0.21\sd{.01} &     0.23\sd{.06} &    0.52\sd{.01} &    0.00\sd{.00} &    0.10\sd{.00} &     0.46\sd{.16} &    0.38\sd{.00} \\
\hline  
\desc{Correlation to test error} \\
\hline
\inconsis&    0.10\sd{.00} &    0.18\sd{.01} &     0.51\sd{.04} &    0.49\sd{.01} &     0.01\sd{.00} &    0.10\sd{.01} &     0.42\sd{.00} &    0.36\sd{.02} &    0.01\sd{.00} &    0.01\sd{.00} &     0.21\sd{.02} &   -0.10\sd{.00} \\
\disag\  &\hi{0.27}\sd{.00}&\hi{0.33}\sd{.01}& \hi{0.57}\sd{.06}&\hi{0.65}\sd{.00}& \hi{0.15}\sd{.01}&\hi{0.27}\sd{.01}& \hi{0.56}\sd{.05}&\hi{0.59}\sd{.01}&\hi{0.02}\sd{.00}&\hi{0.02}\sd{.00}& \hi{0.25}\sd{.07}&\hi{0.16}\sd{.01}\\
\random\ &    0.00\sd{.00} &    0.00\sd{.00} &     0.03\sd{.03} &    0.01\sd{.02} &     0.00\sd{.00} &    0.00\sd{.00} &     0.04\sd{.07} &    0.02\sd{.02} &    0.00\sd{.00} &    0.00\sd{.00} &     0.04\sd{.10} &    0.01\sd{.06} \\
\canoni\ &    0.01\sd{.00} &    0.03\sd{.00} &     0.14\sd{.02} &    0.21\sd{.01} &     0.00\sd{.00} &    0.20\sd{.01} &     0.25\sd{.14} &    0.51\sd{.01} &    0.00\sd{.00} &    0.02\sd{.00} &     0.41\sd{.03} &    0.19\sd{.00}  \\
\hline
\desc{Correlation to `test error - training error'} \\
\hline
\inconsis&    0.19\sd{.01} &    0.37\sd{.01} &     0.57\sd{.04} &    0.69\sd{.01} & \hi{0.35}\sd{.01}&\hi{0.44}\sd{.01}& \hi{0.60}\sd{.02}&\hi{0.74}\sd{.01}&\hi{0.72}\sd{.01}&\hi{0.73}\sd{.00}& \hi{0.93}\sd{.03}&\hi{0.91}\sd{.00}  \\
\disag\  &\hi{0.35}\sd{.02}&\hi{0.52}\sd{.02}& \hi{0.60}\sd{.06}&\hi{0.80}\sd{.01}&     0.30\sd{.03} &    0.35\sd{.01} &     0.57\sd{.05} &    0.67\sd{.01} &    0.25\sd{.02} &    0.32\sd{.01} &     0.61\sd{.06} &    0.64\sd{.01} \\
\random\ &    0.00\sd{.00} &    0.00\sd{.00} &     0.01\sd{.06} &   -0.02\sd{.01} &     0.00\sd{.00} &    0.00\sd{.00} &    -0.01\sd{.08} &    0.01\sd{.03} &    0.00\sd{.00} &    0.00\sd{.00} &    -0.02\sd{.09} &   -0.06\sd{.06}  \\
\canoni\ &    0.01\sd{.00} &    0.10\sd{.00} &     0.29\sd{.03} &    0.36\sd{.01} &     0.00\sd{.00} &    0.20\sd{.01} &     0.23\sd{.06} &    0.50\sd{.01} &    0.00\sd{.00} &    0.11\sd{.00} &     0.65\sd{.03} &    0.38\sd{.00}  \\
\hline
\desc{}\\
\desc{(c) Excluding the models with high training loss {\bf more aggressively} with smaller cut-off values (one half of (b))}\\ 
\hline
           & \Mc{4}{CIFAR-10} & \Mc{4}{CIFAR-100} & \Mc{4}{ImageNet}
\\         
           \cline{2-13}
           & \Mc{1}{\SleTwo} & \Mc{1}{\SleZero} & \Mc{1}{$\Psi$} & \Mc{1}{$\tau$}
           & \Mc{1}{\SleTwo} & \Mc{1}{\SleZero} & \Mc{1}{$\Psi$} & \Mc{1}{$\tau$}
           & \Mc{1}{\SleTwo} & \Mc{1}{\SleZero} & \Mc{1}{$\Psi$} & \Mc{1}{$\tau$} 
\\           
\hline           
\desc{Correlation to generalization gap (`test loss - training loss')} \\               
\hline                
\inconsis&    0.36\sd{.01} &\hi{0.57}\sd{.01}& \hi{0.65}\sd{.05}&\hi{0.82}\sd{.01}& \hi{0.44}\sd{.01}&\hi{0.50}\sd{.01}& \hi{0.69}\sd{.06}&\hi{0.78}\sd{.01}&\hi{0.77}\sd{.01}&\hi{0.79}\sd{.01}& \hi{0.74}\sd{.16}&\hi{0.93}\sd{.00}\\
\disag\  &\hi{0.39}\sd{.01}&\hi{0.57}\sd{.01}&     0.62\sd{.05} &\hi{0.82}\sd{.01}&     0.31\sd{.02} &    0.43\sd{.01} &     0.65\sd{.05} &    0.73\sd{.01} &    0.43\sd{.01} &    0.53\sd{.01} &     0.69\sd{.06} &    0.80\sd{.01} \\
\random\ &    0.00\sd{.00} &    0.00\sd{.00} &     0.02\sd{.03} &    0.00\sd{.02} &     0.00\sd{.00} &    0.00\sd{.00} &     0.03\sd{.11} &    0.04\sd{.02} &    0.00\sd{.00} &    0.00\sd{.00} &    -0.05\sd{.12} &   -0.02\sd{.03}  \\
\canoni\ &    0.02\sd{.00} &    0.14\sd{.00} &     0.36\sd{.04} &    0.44\sd{.01} &     0.00\sd{.00} &    0.27\sd{.01} &     0.23\sd{.06} &    0.60\sd{.01} &    0.00\sd{.00} &    0.13\sd{.00} &     0.44\sd{.09} &    0.42\sd{.00} \\
\hline  
\desc{Correlation to test error} \\
\hline
\inconsis&    0.12\sd{.01} &    0.21\sd{.01} &     0.53\sd{.04} &    0.53\sd{.01} &     0.04\sd{.00} &    0.19\sd{.01} &     0.48\sd{.00} &    0.50\sd{.02} &    0.01\sd{.00} &    0.01\sd{.00} & \hi{0.31}\sd{.02}&    0.13\sd{.00} \\
\disag\  &\hi{0.27}\sd{.00}&\hi{0.35}\sd{.01}& \hi{0.58}\sd{.06}&\hi{0.66}\sd{.01}& \hi{0.19}\sd{.02}&\hi{0.34}\sd{.01}& \hi{0.58}\sd{.06}&\hi{0.65}\sd{.01}&\hi{0.05}\sd{.00}&\hi{0.05}\sd{.00}&     0.27\sd{.08} &\hi{0.25}\sd{.00}\\
\random\ &    0.00\sd{.00} &    0.00\sd{.00} &    -0.01\sd{.03} &    0.01\sd{.03} &     0.00\sd{.00} &    0.00\sd{.00} &     0.01\sd{.11} &    0.01\sd{.04} &    0.00\sd{.00} &    0.00\sd{.00} &    -0.01\sd{.15} &   -0.01\sd{.01} \\
\canoni\ &    0.01\sd{.00} &    0.03\sd{.00} &     0.25\sd{.03} &    0.22\sd{.01} &     0.00\sd{.00} &    0.24\sd{.01} &     0.25\sd{.03} &    0.56\sd{.01} &    0.00\sd{.00} &    0.06\sd{.00} &     0.20\sd{.10} &    0.29\sd{.01}  \\
\hline
\desc{Correlation to `test error - training error'} \\
\hline
\inconsis&    0.18\sd{.01} &    0.37\sd{.01} &     0.58\sd{.04} &    0.68\sd{.01} &     0.30\sd{.01} &    0.42\sd{.01} &     0.61\sd{.02} &    0.72\sd{.01} &\hi{0.72}\sd{.01}&\hi{0.76}\sd{.00}& \hi{0.93}\sd{.03}&\hi{0.92}\sd{.00}  \\
\disag\  &\hi{0.37}\sd{.01}&\hi{0.54}\sd{.01}& \hi{0.61}\sd{.06}&\hi{0.80}\sd{.01}& \hi{0.37}\sd{.03}&\hi{0.46}\sd{.01}& \hi{0.63}\sd{.05}&\hi{0.75}\sd{.01}&    0.44\sd{.01} &    0.52\sd{.01} &     0.69\sd{.06} &    0.79\sd{.01} \\
\random\ &    0.00\sd{.00} &    0.00\sd{.00} &     0.01\sd{.05} &    0.02\sd{.02} &     0.00\sd{.00} &    0.00\sd{.00} &     0.00\sd{.04} &    0.02\sd{.03} &    0.00\sd{.00} &    0.00\sd{.00} &    -0.05\sd{.18} &   -0.04\sd{.04}  \\
\canoni\ &    0.01\sd{.00} &    0.09\sd{.00} &     0.28\sd{.04} &    0.35\sd{.01} &     0.00\sd{.00} &    0.26\sd{.01} &     0.24\sd{.02} &    0.58\sd{.01} &    0.00\sd{.00} &    0.14\sd{.00} &     0.42\sd{.09} &    0.44\sd{.00}  \\
\hline
\end{tabular}
\end{\sz}
\end{center}  
\end{table}

\section{Additional information}
\label{app:add}

\subsection{Additional correlation analyses using the framework of Jiang et al. (2020)}
\label{app:add-corr}

This section reports on the additional correlation analysis using the rigorous framework 
of Jiang et al. (2020) \citeJ\ and shows that the results are consistent with the results in the main paper and the previous work.  
The analysis uses correlation metrics proposed by \citeJ, 
which seek to mitigate the effect of what \citeJ\ calls {\em spurious correlations} that do not 
reflect causal relationships with generalization.
For completeness, we briefly describe below these metrics, and 
\citeJ\ should be referred to for more details and justification.

\paragraph{Notation}
In this section, we write $\hyp$ for a training procedure (or equivalently, 
a combination of hyperparameters including the network architecture, the data augmentation method, and so forth). 
Let $g(\hyp)$ be the generalization gap of $\hyp$, 
and let $\mu(\hyp)$ be the quantity of interest such as inconsistency or disagreement. 

\paragraph{Ranking-based}
Let $\Tau$ be the set of the corresponding pairs of generalization gap and the quantity of interest 
to be considered: 
$\Tau \defi \cup_{\hyp} \left\{ \left( \mu(\hyp),g(\hyp) \right) \right\}$.  
Then the standard Kendall's ranking coefficient $\tau$ of $\Tau$ can be expressed as: 
\begin{align*}
  \tau(\Tau) &\defi \frac{1}{|\Tau|(|\Tau|-1)} 
               \sum_{(\mu_1,g_1) \in \Tau}~\sum_{(\mu_2,g_2)\in\Tau \backslash (\mu_1,g_1)}
               {\rm sign}(\mu_1-\mu_2){\rm sign}(g_1-g_2) 
\end{align*}
\citeJ\ defines {\em granulated Kendall's coefficient} $\Psi$ as: 
\begin{align}
  \Psi \defi \frac{1}{n} \sum_{i=1}^n \psi_i,~ 
  \psi_i \defi \frac{1}{m_i} \sum_{\hyp_1 \in \Hyp_1} \cdots 
                         \sum_{\hyp_{i-1}\in \Hyp_{i-1}}\sum_{\hyp_{i+1}\in \Hyp_{i+1}} \cdots
                         \sum_{\hyp_n \in \Hyp_n} 
                         \tau \left( \cup_{\hyp_i \in \Hyp_i} \left( \mu(\hyp),g(\hyp) \right) \right)             
  \label{eq:Psi}
\end{align}
where $\hyp_i$ is the $i$-th hyperparameter so that $\hyp = ( \hyp_1,\hyp_2,\cdots,\hyp_n )$, 
$\Hyp_i$ is the set of all possible values for the $i$-th hyperparameter, and 
$m_i \defi |\Hyp_1 \times \cdots \times \Hyp_{i-1} \times \Hyp_{i+1} \times \cdots \times \Hyp_n|$.  

\paragraph{Mutual information-based}
Define $V_g(\hyp,\hyp') \defi {\rm sign}(g(\hyp) - g(\hyp'))$, and similarly 
define $V_\mu(\hyp,\hyp') \defi {\rm sign}(\mu(\hyp) - \mu(\hyp'))$.  
Let $U_S$ be a random variable representing the values of the hyperparameter types in $S$ 
(e.g., $S$ = \{ learning rate, batch size \}).  
Then 
$I(V_\mu,V_g|U_S)$, the conditional mutual information between $\mu$ and $g$ 
given the set $S$ of hyperparameter types,  
and the conditional entropy $H(V_g|U_S)$ can be expressed as follows. 
\begin{align*}
&I(V_\mu,V_g|U_S) = 
   \sum_{U_S}p(U_S) \sum_{V_\mu \in \{\pm 1\}} \sum_{V_g \in \{\pm 1\}} p(V_\mu,V_g|U_S) 
   \log \left( \frac{ p(V_\mu,V_g|U_S) }{ p(V_\mu|U_S)p(V_g|U_S) } \right) 
\\   
&H(V_g|U_S) = - \sum_{U_S}p(U_S) \sum_{V_g \in \{\pm 1\}} p(V_g|U_S) \log(p(V_g|U_S)) 
\end{align*}
\newcommand{\Ssize}{c}
Dividing $I(V_\mu,V_g|U_S)$ by $H(V_g|U_S)$ for normalization 
and restricting the size of $S$ for enabling computation, 
\citeJ\ defines the metric $\Kappa(\mu)$ as: 
\begin{align}
\Kappa(\mu) \defi \min_{U_S~{\rm s.t.} |S|\le 2} \frac{ I(V_\mu,V_g|U_S) }{ H(V_g|U_S) } .
 \label{eq:Kappa}
\end{align}

\paragraph{Results (Tables \ref{tab:good-avg}--\ref{tab:good-kl})} 
In Table \ref{tab:good-avg}, 
we show the average and standard deviation of the correlation scores of model-wise quantities 
(for one model per training procedure), following \citeJ. 
The model-wise inconsistency for model $\prm$ trained on $\Zn$ with procedure $\proc$ is 
 $\Exp_{\rvPrm \sim \diPrmPZn}\Exp_{\rvX} \KL(\prob(\rvPrm,\rvX) || \prob(\prm,\rvX))$, 
 and $\Zn$ was fixed here; 
 similarly, model-wise disagreement is 
 $\Exp_{\rvPrm \sim \diPrmPZn}\Exp_{\rvX} \Ind \left[~ \decision(\rvPrm,\rvX) \ne \decision(\prm,\rvX) ~\right]$ 
where $\decision(\prm,x)$ is the classification decision of model $\prm$ on data point $x$. 
The average and standard deviation were computed 
over 4 independent runs that used 4 distinct subsamples of training sets as training data $\Zn$ 
and distinct random seeds for model parameter initialization, data mini-batching, and so forth.  

Table \ref{tab:good-avg} compares inconsistency and disagreement 
in terms of their correlations with the generalization gap (test loss minus training loss as defined in the main paper), 
test error, and 
test error minus training error. 
The training procedures analyzed here are 
the procedures that achieve low final randomness by either a vanishing learning rate 
or iterate averaging 
as in Figure \ref{fig:good-gap-klsum-klsame}. 
Tables (a), (b), and (c) differ in the models included in the analysis. 
As noted in Appendix \ref{app:klSumP}, 
since near-random models in the initial phase of training are not of practical interest, 
procedures with very high training loss were excluded from the analysis in the main paper.  
Similarly, 
Table \ref{tab:good-avg}-(b) excludes the models with high training loss using the same cut-off values 
as used in the main paper, and (c) does this with smaller (and therefore more aggressive) 
cut-off values (one half of (b)), and (a) does not exclude any model. 
Consequently, the average training loss is the highest in (a) and lowest in (c).  

Let us first review Table \ref{tab:good-avg}-(a).  
The results show that inconsistency correlates well to generalization gap (test loss minus training loss) as 
suggested by our theorem, 
and disagreement correlates well to test error as suggested by the theorem of the original 
disagreement study \cite{disag22}. 
Regarding `test error minus training error' (last 4 rows): 
on CIFAR-10, training error is relatively small and so it approaches test error, 
which explains why disagreement correlates well to it; on the other datasets,  
`test error minus training error' is more related to `test loss minus training loss', 
which explains why inconsistency correlates well to it.  
The standard deviations are relatively small, and so the results are solid.  
  (The standard deviation of $\Psi$ tends to be higher than the others for all quantities 
  including the baseline `Random', 
  and this is due to the combination of the macro averaging-like nature of $\Psi$ 
  and the smallness of $|\Hyp_i|$ for some $i$'s, 
  independent of the nature of inconsistency or disagreement.)

The overall trend of Table \ref{tab:good-avg}-(b) and (c) is similar to (a).  
That is, inconsistency correlates well to generalization gap (test loss minus training loss) 
while disagreement correlates well to test error, consistent with the results in the main paper 
and the original disagreement study.  
Comparing (a), (b), and (c), 
we also note that as we exclude the models with high training loss 
more aggressively (i.e., going from (a) to (c)), 
the correlation of disagreement to generalization gap (relative to that of inconsistency) 
improves and approaches that of inconsistency.  
For example, on CIFAR-100, the ratio of $\Kappa$ for (disagreement, generalization gap) 
with respect to $\Kappa$ for (inconsistency, generalization gap) 
improves from (a) 0.11/0.51=22\% to (b) 0.26/0.47=55\% to (c) 0.31/0.44=70\%. 
With these models, empirically, high training loss roughly corresponds to the low confidence-level on unseen data, 
and so this observation is consistent with the theoretical insight that 
when the confidence-level on unseen data is high, disagreement should correlate to generalization gap 
as well as inconsistency, which is discussed in more detail in Appendix \ref{app:add-inconsis-vs-disag}.

\begin{table}[h]
\renewcommand{\hi}[1]{{\bf #1}} 
\caption{ \small \label{tab:good-kl} 
  Correlation analyses of the estimates of $\klSameP$ and $\klSumP$ (as defined in Section \ref{sec:theory}) 
  using the rank-based and mutual information-based metrics from \citeJ.  
  Training procedures with low final randomness of Figure \ref{fig:good-gap-klsum-klsame}. 
  {\bf Correlation scores}: see the caption of Table \ref{tab:good-avg}. 
  {\bf Target quantities}:          
     Generalization gap (test loss minus training loss)  
     analyzed in Theorem \ref{thm:gen}. 
  {\bf Observation}: The correlation of $\klSameP$ to generalization gap is generally  
    as good as $\klSumP$, which is consistent with the results in the main paper. 
}
\begin{center}
\begin{small}
\begin{tabular}{|c|c|c|c|c|c|c|c|c|c|c|c|c|}

\hline  
           & \Mc{4}{CIFAR-10} & \Mc{4}{CIFAR-100} & \Mc{4}{ImageNet} \\
           \cline{2-13}
           & \Mc{2}{\cMI} & \Mc{2}{\Rank}
           & \Mc{2}{\cMI} & \Mc{2}{\Rank}
           & \Mc{2}{\cMI} & \Mc{2}{\Rank} \\           
            \cline{2-13}
           & \SleTwo & \SleZero            & $\Psi$ & $\tau$ 
           & \SleTwo & \SleZero            & $\Psi$ & $\tau$ 
           & \SleTwo & \SleZero            & $\Psi$ & $\tau$ \\           
\hline           
 $\klSameP$ &    0.38 &    0.59 &    0.71 &    0.83  &    0.49 &    0.53 &    0.80 &    0.80   &    0.75 &    0.75 &    0.59 &    0.92 \\
 $\klSumP$  &    0.38 &    0.59 &    0.80 &    0.84  &    0.42 &    0.47 &    0.75 &    0.76   &    0.81 &    0.81 &    0.60 &    0.94 \\
\hline  
\end{tabular}
\end{small}
\end{center}  
\end{table}

Table \ref{tab:good-kl} shows that the correlation of the estimate of $\klSameP$ (defined in Section \ref{sec:theory}) 
to the generalization gap is generally as good as the estimate of $\klSumP$ (defined in Theorem \ref{thm:gen}), 
which is consistent with the results in the main paper. 

\subsection{Training error}
\label{app:add-train-error}

Tables \ref{tab:trnerr-klSumP} and \ref{tab:trnerr-klSame} show the training error values 
(the average, minimum, median, and the maximum) associated with the empirical results reported in 
Section \ref{sec:klSumP} and \ref{sec:klSame}, respectively.

\begin{table}[h]
\renewcommand{\tabcolsep}{5pt}
\caption{ \small \label{tab:trnerr-klSumP}
  Training error of the models analyzed in Section \ref{sec:klSumP}. 
  The four numbers represent the average, minimum, median, and maximum values (\%).  
}
\begin{center}
\begin{small}
\begin{tabular}{|c|rrrr|rrrr|rrrr|}
\hline
      & \Mc{4}{CIFAR-10} & \Mc{4}{CIFAR-100} & \Mc{4}{ImageNet} \\
\hline      
Figure \ref{fig:w-gap-klsum},\ref{fig:w-trnloss-gap-klsum},\ref{fig:w-terr-disag} 
  &  0.4& 0.0& 0.0&8.8     &4.7& 0.0&  0.1& 53.6 &     24.9&3.0& 23.5& 56.6 \\ %
Figure \ref{fig:v-gap-klsum} 
  &3.3&  0.0& 2.6&10.9      &18.2& 0.2& 13.6& 56.3       &44.5&16.8& 45.6& 65.3 \\ %
Figure \ref{fig:good-gap-klsum-klsame},\ref{fig:good-terr-disag} %
  &0.5&  0.0& 0.0&10.1     &4.0& 0.0&  0.0& 54.5     &16.3& 0.1& 11.3& 59.0 \\ %
\hline
\end{tabular}
\end{small}
\end{center}
\renewcommand{\tabcolsep}{3pt}
\caption{  \small \label{tab:trnerr-klSame}
  Training error of the models analyzed in Section \ref{sec:klSame}. 
  The four numbers represent the average, minimum, median, and maximum values (\%).  
}
\begin{center}
\begin{small}
\begin{tabular}{|rrrr|rrrr|rrrr|rrrr|rrrr|}
\hline
       \Mc{4}{Case\#1} & \Mc{4}{Case\#2} & \Mc{4}{Case\#3} & \Mc{4}{Case\#4} & \Mc{4}{Case\#5} \\
\hline       
       10.0&8.8&10.1&11.6 &       4.5&2.1 & 4.5 & 6.7        &0.06& 0.02 &0.04 & 0.17    &0.04&0.01&0.03&0.11  &0.0&0.0&0.0&0.0 \\
\hline       
       \Mc{4}{Case\#6} & \Mc{4}{Case\#7} & \Mc{4}{Case\#8} & \Mc{4}{Case\#9} & \Mc{4}{Case\#10} \\
\hline       
       0.7&0.3&0.8&1.1       &0.1&0.1&0.1&0.2     &4.8&3.9&4.9&5.5    &2.7&1.9&2.9&3.3    &2.8&0.5&1.8&6.5 \\
\hline       
\end{tabular}
\end{small}
\end{center}
\end{table}
\subsection{More on inconsistency and disagreement}
\label{app:add-inconsis-vs-disag}

Inconsistency takes how strongly the models disagree on each data point into account 
while disagreement ignores it. 
That is, the information disagreement receives on each data point is {\em binary} 
(whether the classification decisions of two models agree or disagree) 
while the information inconsistency receives is continuous and {\em more complex}.
On the one hand, 
this means that inconsistency could use information ignored by disagreement 
and thus it could behave quite differently from disagreement as seen in our empirical study. 
On the other hand, 
it should be useful also to consider the situation where 
inconsistency and disagreement are highly correlated 
since in this case 
our theoretical results can be regarded as providing a theoretical basis 
for the correlation of not only inconsistency but also 
disagreement with generalization gap though indirectly. 

\newcommand{\oneSameP}{\makeq{C}_{1,\proc}}  %
\newcommand{\oneDiffP}{\makeq{S}_{1,\proc}}  %
\newcommand{\one}[2]{\| #1-#2 \|_1} 

To simplify the discussion towards this end, 
let us introduce a slight variation of Theorem \ref{thm:gen}, which uses 1-norm 
instead of the KL-divergence 
since disagreement is related to 1-norm as noted in Section \ref{sec:theory}. 
\begin{proposition}[1-norm variant of Theorem \ref{thm:gen}]
Using the notation of Section \ref{sec:theory}, 
define 1-norm inconsistency $\oneSameP$ and 1-norm instability $\oneDiffP$ which use the squared 1-norm of the difference 
in place of the KL-divergence as follows. 
\begin{align*}
 \oneSameP &= \Exp_{\rvZn} \Exp_{\rvPrm,\rvPrm' \sim \diPrmPZn} \Exp_{\rvX} \one{\prob(\rvPrm,\rvX)}{\prob(\rvPrm',\rvX)}^2 
 \tag{1-norm inconsistency} \\
 \oneDiffP &= \Exp_{\rvZn,\rvZn'} \Exp_{\rvX} \one{\pbarPZn(\rvX)}{\pbarPZnp(\rvX)}^2 
 \tag{1-norm instability}
\end{align*}
Then with the same assumptions and definitions as in Theorem \ref{thm:gen}, 
we have
\begin{align*}
\Exp_{\rvZn} \Exp_{\rvPrm \sim \diPrmPZn} \left[
  \testLoss(\rvPrm) - \trnLoss(\rvPrm,\rvZn) 
\right] 
\leq 
\inf_{\lambda>0}  
\left[ 
  \frac{\gamma^2}{2} \psi(\lambda) \lambda  \left( \oneSameP + \oneDiffP \right)
+ \frac{\mutuP}{\lambda n}
\right] . 
\end{align*} 
\label{cor:gen}
\end{proposition}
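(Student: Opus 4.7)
The plan is to mimic the proof of Theorem~\ref{thm:gen} line-by-line, simply stopping one step earlier: wherever that proof applies Pinsker's inequality to convert a squared $1$-norm into a KL divergence, we instead keep the squared $1$-norm as is, since the quantities $\oneSameP$ and $\oneDiffP$ are already defined in terms of $\|\cdot\|_1^2$. The Lipschitz assumption $|\loss(\prob,y)-\loss(\prob',y)|\le \frac{\gamma}{2}\|\prob-\prob'\|_1$ is stated directly in $1$-norm, so using $1$-norm quantities throughout is in fact the more natural choice, and it sidesteps the factor-of-$2$ loss coming from Pinsker.

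Concretely, I would first invoke Lemma~\ref{lem:info} with the same auxiliary prior $\diPrm^0=\Exp_{\rvZn'}\diPrmPZnp$ used in the original proof, which will produce the $\mutuP/(n\lambda)$ term after normalizing by $\lambda n$ and recognizing the KL identity \eqref{eq:Ip}. Next, I would reproduce the bound \eqref{eq:one} on $\Exp_{\rvPrm\sim\diPrmPZn}\ln\Exp_{\rvZ}\exp(-\lambda\ellPrm(\rvZ))$ using $\ln u\le u-1$ together with the identity $e^{-t}\le 1-t+\psi(\lambda)t^2$ for $t\ge -\lambda$, followed by the Lipschitz inequality to upper bound the second-order term by $\tfrac{\gamma^2}{4}\psi(\lambda)\lambda^2\,\Exp\|\prob(\rvPrm,\rvX)-\pbarP(\rvX)\|_1^2$.

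The key departure from the original proof comes in the counterpart of \eqref{eq:Dp}. I would split via the triangle inequality $\|A-C\|_1^2\le 2\|A-B\|_1^2+2\|B-C\|_1^2$ with $B=\pbarPZn(\rvX)$, then apply Jensen's inequality twice: once to pull $\Exp_{\rvPrm'\sim\diPrmPZn}$ outside the squared $1$-norm in the first summand (giving $\oneSameP$ after taking the outer expectations), and once more to replace $\pbarP$ by $\pbarPZnp$ in the second summand (giving $\oneDiffP$). This yields
\begin{equation*}
\Exp_{\rvZn}\Exp_{\rvPrm\sim\diPrmPZn}\Exp_{\rvX}\|\prob(\rvPrm,\rvX)-\pbarP(\rvX)\|_1^2\;\le\; 2\bigl(\oneSameP+\oneDiffP\bigr),
\end{equation*}
which is exactly the $1$-norm analogue of \eqref{eq:Dp} but with constant $2$ in place of the $4$ produced by Pinsker.

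Finally, I would combine these pieces as in \eqref{eq:ineq}--\eqref{eq:cancelout-pbar}: divide by $\lambda n$, observe that $\pbarP$ cancels out in $\ellPrm$ because the $(\rvXi,\rvYi)$ are iid copies of $\rvZ$, and take the infimum over $\lambda>0$. The combination $\tfrac{\gamma^2}{4}\psi(\lambda)\lambda^2\cdot 2(\oneSameP+\oneDiffP)=\tfrac{\gamma^2}{2}\psi(\lambda)\lambda^2(\oneSameP+\oneDiffP)$, after dividing by $\lambda$, yields the stated coefficient $\tfrac{\gamma^2}{2}\psi(\lambda)\lambda$. There is no genuine obstacle here — the whole argument is a minor rerouting of the Theorem~\ref{thm:gen} proof; the only thing worth double-checking is the bookkeeping of the constants ($4\to 2$ from dropping Pinsker, giving $\gamma^2\to \gamma^2/2$ in the final bound) and confirming that the two Jensen steps remain valid when the squared $1$-norm replaces the squared total variation distance, which they do since $\|\cdot\|_1^2$ is convex.
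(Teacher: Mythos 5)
Your proposal is correct and follows essentially the same route as the paper's own proof, which likewise reuses Lemma \ref{lem:info} and the bound \eqref{eq:one} verbatim and simply replaces \eqref{eq:Dp} by its 1-norm analogue, stopping before Pinsker's inequality to get the constant $2$ in place of $4$. Your constant bookkeeping ($\tfrac{\gamma^2}{4}\cdot 2 = \tfrac{\gamma^2}{2}$) and the two Jensen steps (valid by convexity of $\|\cdot\|_1^2$) match the paper exactly.
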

\begin{proof}[Sketch of proof] 
Using the triangle inequality of norms and Jensen's inequality, 
replace inequality \eqref{eq:Dp} of the proof of Theorem \ref{thm:gen} 
\[
  \Exp_{\rvZn} \Exp_{\rvPrm \sim\diPrmPZn}\Exp_\rvX \|\prob(\rvPrm,\rvX)-\pbarP(\rvX)\|_\nrm^2 
  \leq 
  4 \left( \klSameP + \klDiffP \right) \tag{\ref{eq:Dp}}
\]
with the following, 
 \begin{align*}
& \Exp_{\rvZn} \Exp_{\rvPrm \sim\diPrmPZn}\Exp_\rvX \|\prob(\rvPrm,\rvX)-\pbarP(\rvX)\|_\nrm^2 \nonumber\\
\leq& 2 \Exp_{\rvZn} \Exp_{\rvPrm \sim\diPrmPZn}\Exp_\rvX \left[  
              \|\prob(\rvPrm,\rvX)-\pbarPZn(\rvX) \|_\nrm^2 
            + \|\pbarPZn(\rvX)-\pbarP(\rvX)\|_\nrm^2 
\right] \nonumber\\
\leq&  2 \Exp_{\rvZn} \Exp_{\rvPrm,\rvPrm' \sim\diPrmPZn}\Exp_\rvX \|\prob(\rvPrm,\rvX)-\prob(\rvPrm',\rvX) \|_\nrm^2 
     + 2 \Exp_{\rvZn} \Exp_{\rvZn'} \Exp_\rvX \|\pbarPZn(\rvX)-\pbarPZnp(\rvX)\|_\nrm^2 \nonumber \\
=&  2 \left( \oneSameP + \oneDiffP \right) .
\end{align*}
\end{proof}

Now suppose that with the models of interest, 
the confidence level of model outputs is always high 
so that the highest probability estimate is always near 1, i.e., 
for any model $\prm$ of interest, we have 
$1 - \max_i \prob(x,\prm)[i] < \epsilon$ for a positive constant $\epsilon$ such that $\epsilon \approx 0$ 
on any unseen data point $x$.   
Let $\decision(\prm,x)$ be the classification decision of $\prm$ on data point $x$ 
as in Section \ref{sec:theory}: $\decision(\prm,x) = \arg\max_i \prob(\prm,x)[i]$. 
Then it is easy to show that we have 
\begin{align}
  \frac{1}{2}\one{\prob(\prm,x)}{\prob(\prm',x)} \approx \Ind \left[~ \decision(\prm,x) \ne \decision(\prm',x) ~\right] .
  \label{eq:nearlysame} 
\end{align}
Disagreement measured for shared training data can be expressed as 
\begin{align}
    \Exp_{\rvZn} \Exp_{\rvPrm,\rvPrm' \sim \diPrmPZn} 
            \Exp_{\rvX} \Ind \left[~ \decision(\rvPrm,\rvX) \ne \decision(\rvPrm',\rvX) ~\right] .
\label{eq:disag}            
\end{align}            
Comparing \eqref{eq:disag} with the definition of $\oneSameP$ above and considering \eqref{eq:nearlysame}, 
it is clear that under this high-confidence condition, 
disagreement \eqref{eq:disag} and 1-norm inconsistency $\oneSameP$ should be highly correlated; 
therefore, under this condition, 
Proposition \ref{cor:gen} suggests the relation of 
disagreement (measured for shared training data) to generalization gap 
indirectly through $\oneSameP$.  

While this paper focused on the KL-divergence-based inconsistency 
motivated by the use of the KL-divergence by the existing algorithm for encouraging consistency, 
the proposition above suggests that 1-norm-based inconsistency might also be useful.  
We have conducted limited experiments in this regard and observed mixed results. 
In the settings of Appendix \ref{app:add-corr}, 
the correlation scores of 1-norm inconsistency with respect to generalization gap 
are generally either similar or slightly better, which is promising. %
As for consistency encouragement during training, 
we have not seen a clear advantage of using 1-norm inconsistency penalty
over using the KL-divergence inconsistency penalty as is done in this paper, 
and more experiments would be required to understand its advantage/disadvantage.

In our view, however, for the purpose of encouraging consistency during training, 
KL-divergence inconsistency studied in this paper is more desirable than 1-norm inconsistency in at least 
three ways.  
First, minimization of the KL-divergence inconsistency penalty is equivalent to 
minimization of the standard cross-entropy loss with soft labels provided by the other model; 
therefore, with the KL-divergence penalty, the training objective can be regarded as a weighted average of 
two cross-entropy loss terms, which are in the same range (while 1-norm inconsistency is not).  
This makes tuning of the weight for the penalty more intuitive and easier. 
Second, optimization of the standard cross-entropy loss term with the KL-divergence inconsistency penalty has 
an interpretation of functional gradient optimization, as shown in \cite{gulf20}.  
The last (but not least) point is that optimization may be easier with KL-divergence 
inconsistency, which is smooth, than with 1-norm inconsistency, which is not smooth.  

Related to the last point, disagreement, which involves $\arg\max$ in the definition,  
cannot be easily integrated into the training objective, 
and this is a crucial difference between inconsistency 
and disagreement from the algorithmic viewpoint.  

Finally, 
we believe that for improving deep neural network training, 
it is useful to study the connection between generalization and 
discrepancies of model outputs in general including \divDiff, \divSame, and disagreement, 
and we hope that this work contributes to progress in this direction.

\end{document}